%% file: 2-Arxiv_New.tex
\documentclass{article}
\usepackage{iclr2026_conference,times}
\input{math_commands}

\usepackage{hyperref}
\usepackage{booktabs}   
\usepackage{graphicx}
\usepackage{subfigure}
\usepackage{amssymb,amsthm}
\usepackage{multirow}
\usepackage{makecell}
\usepackage{wrapfig}

\usepackage{algorithm}
\usepackage{algpseudocode}

\newtheorem{lemma}{Lemma}
\newtheorem{proposition}{Proposition}
\newtheorem{theorem}{Theorem}
\newtheorem{remark}{Remark}
\newtheorem{assumption}{Assumption}
\newtheorem{definition}{Definition}
\renewcommand{\eqref}[1]{(\ref{#1})}

\title{Combinatorial Bandit Bayesian Optimization for Tensor Outputs}

\author{Jingru Huang\textsuperscript{a}, Haijie Xu\textsuperscript{a}, Jie Guo\textsuperscript{b}, Manrui Jiang\textsuperscript{a}, Chen Zhang\textsuperscript{a} \thanks{Corresponding author.} \\
\textsuperscript{a} Department of Industrial Engineering, Tsinghua University, Beijing 100084, China\\
\textsuperscript{b} College of Economics and Management,
Nanjing University of Aeronautics and Astronautics,\\
Nanjing 211106, China\\
\texttt{jingruhuang@tsinghua.edu.cn, xu-hj22@mails.tsinghua.edu.cn,}\\
\texttt{guojie25@nuaa.edu.cn, jiangmanrui@mail.tsinghua.edu.cn,}\\
\texttt{zhangchen01@tsinghua.edu.cn}
}

\iclrfinalcopy 

\begin{document}
\maketitle
\begin{abstract}
Bayesian optimization (BO) has been widely used to optimize expensive and black-box functions across various domains. However, existing BO methods have not addressed tensor-output functions. To fill this gap, we propose a novel tensor-output BO framework. Specifically, we first introduce a tensor-output Gaussian process (TOGP) with two classes of tensor-output kernels as a surrogate model of the tensor-output function, which can effectively capture the structural dependencies within the tensor. Based on it, we develop an upper confidence bound (UCB) acquisition function to select query points. Furthermore, we introduce a more practical and challenging problem setting, termed combinatorial bandit Bayesian optimization (CBBO), where only a subset of the tensor outputs can be selected to contribute to the objective. To tackle this, we propose a tensor-output CBBO method, which extends TOGP to handle partially observed tensor outputs, and accordingly design a novel combinatorial multi-arm bandit-UCB2 (CMAB-UCB2) criterion to sequentially select both the query points and the output subset. We establish theoretical regret bounds for both methods, guaranteeing sublinear regret. Extensive experiments on synthetic and real-world datasets demonstrate the superiority of our methods.
\end{abstract}

\section{Introduction}
\label{sec:intro}
Bayesian optimization (BO) is a widely used strategy for optimizing expensive, black-box objective functions \citep{frazier2018tutorial, wang2023recent}. Its effectiveness has led to successful applications in various domains such as hyperparameter tuning, experimental design, and robotics \citep{snoek2012practical, shields2021bayesian, wang2022nested}. Most existing BO methods focus on scalar outputs \citep{bull2011convergence,wu2017bayesian}, while recent studies have extended BO to handle multi-output settings \citep{chowdhury2021no, tu2022joint,maddox2021bayesian, song2022monte}. However, to the best of our knowledge, no prior work has addressed BO with tensor-valued outputs, where the system response is a multi-mode tensor. In contrast, tensor-valued data have been extensively explored in other areas, including tensor decomposition \citep{abed2022contemporary}, tensor regression \citep{lock2018tensor}, and tensor completion \citep{song2019tensor}, among others.

In current multi-output BO (MOBO) methods, a surrogate model, typically a multi-output Gaussian process (GP) or a collection of independent scalar-output GPs, is constructed from observed data, and an acquisition function is then optimized to sequentially select query points by balancing exploration and exploitation. A straightforward way to handle tensor-valued outputs is to vectorize the tensor and apply existing MOBO methods. However, vectorization ignores the intrinsic structural correlations of tensor data, especially mode-wise dependencies that are critical in many applications. As a result, MOBO methods may become less effective when optimizing the acquisition function to identify the global optimum. As such, it is important to develop GP surrogates that model tensor structure directly. Existing tensor-output GP models typically adopt fully separable covariance structures, in which the joint covariance factorizes as a Kronecker product of covariance matrices across the input and each output mode \citep{kia2018scalable,zhe2019scalable,belyaev2015gaussian}. While computationally attractive, separability implies that correlations across tensor modes do not vary with the input, an assumption that is often violated in complex systems such as spatiotemporal processes \citep{hristopulos2023non}. This mismatch can lead to poor predictive accuracy, numerical instability in posterior inference, and ultimately degraded BO performance. Therefore, in this paper, we first aim to construct a more flexible and scalable GP model that can capture input-dependent correlations within tensor outputs. Then, we aim to design an acquisition function and a sequential querying policy tailored for tensor-output BO.

Furthermore, we consider a more complex and practical setting in which only a subset of tensor entries contributes to the objective. This naturally transforms the problem into a combinatorial multi-armed bandit (CMAB) setting. Specifically, each tensor element is treated as an base arm, and at each round, select a subset of arms, referred to as a super-arm. The objective value associated with the chosen super-arm is then observed, which we interpret as the reward. We term this novel problem as \emph{combinatorial bandit Bayesian optimization} (CBBO). The goal of tensor-output CBBO is to jointly identify an optimal input in the search space and the corresponding best super-arm over tensor outputs. Recent work has combined BO with multi-armed bandits (MAB), often termed bandit Bayesian optimization (BBO), to handle mixed input spaces with both continuous and categorical variables \citep{nguyen2020bayesian,ru2020bayesian,huang2022mixed}. In these settings, categorical variables can be viewed as indexing discrete modes, and choosing one category per variable amounts to selecting a single arm along each mode, which is a special case of CBBO. However, existing BBO methods do not be directly extend to our CBBO setting for two reasons. First, they typically model the effects of categorical choices using independent GPs, and thus cannot capture the rich structural correlations inherent in tensor outputs. Second, their selection strategies effectively decompose the decision into multiple independent bandit problems (one per categorical variable), whereas CBBO requires joint selection of multiple, potentially correlated arms within a super-arm. Thus, these methods are not well suited for our tensor-output CBBO framework.

To address the aforementioned challenges, we propose a tensor-output Bayesian optimization framework, termed TOBO, together with its extension to the CBBO setting, termed TOCBBO. Our main contributions are summarized as follows:
\begin{itemize}
    \item We propose a tensor-output Gaussian process (TOGP) with two classes of tensor-output kernels. The proposed kernels explicitly incorporate tensor structure by extending the linear model of coregionalization from vector-valued to tensor-valued outputs, thereby capturing dependencies both across tensor modes and over the input domain. 
    \item Using the TOGP model as the surrogate, we develop a TOBO framework based on the upper confidence bound (UCB) acquisition. We establish a sublinear regret bound for TOBO, which provides the first Bayesian regret analysis tailored to tensor-valued outputs.
    \item We formulate a novel problem setting, referred to as CBBO. To address this problem, we design the TOCBBO framework by introducing a CMAB-UCB2 acquisition function, which couples UCB-based input selection with CMAB-UCB-based super-arm selection. We further establish a sublinear regret bound for TOCBBO.
    \item We demonstrate the efficiency of our methods on synthetic experiments and case studies.
\end{itemize}

Notably, compared with existing TOGP methods \citep{belyaev2015gaussian,kia2018scalable,zhe2019scalable}, our model provides a more general kernel-construction framework. Specifically, existing tensor-output kernels can be interpreted as special cases induced by particular low-rank tensor decompositions, whereas our LMC-based formulation allows a broad class of tensor constraints to be encoded through the coregionalization structure. Compared with standard BO methods \citep{srinivas2009gaussian,belakaria2019max,chowdhury2021no}, our work is the first to establish a BO framework for tensor outputs and further extend it to the proposed CBBO setting. Moreover, our contributions lie in deriving regret bounds for both TOBO and TOCBBO based on concentration inequalities specialized to the proposed TOGP under a Bayesian framework.

\section{Related works}
\label{sec:review}
\textbf{High-order Gaussian processes (HOGPs):}
Most existing studies for modeling HOGPs adopt separable kernel structures. In particular, \cite{belyaev2015gaussian} proposes a tensor-variate GP with separable covariance across tensor modes, \cite{kia2018scalable} develops a scalable multi-task GP for tensor outputs by factorizing the cross-covariance kernel into mode-wise and input components, and \cite{zhe2019scalable} introduces a scalable high-order GP framework based on Kronecker-structured kernels. Although these formulations simplify computation, they typically imply that correlations across tensor modes do not vary with the input, which limits their ability to capture input-dependent dependencies. Recent work has also developed non-separable kernels for multi-output GPs (MOGPs), including convolution-based kernels \citep{fricker2013multivariate}, linear model of coregionalization (LMC) kernels \citep{li2016pairwise,bruinsma2020scalable}, and linear damped harmonic oscillator-based kernels \citep{hristopulos2023non}. A closely related viewpoint is multi-task GPs, where each task corresponds to one output component \citep{yu2018tensor,chowdhury2021no,maddox2021bayesian}. However, these methods are primarily designed for vector-valued outputs and therefore cannot fully exploit the intrinsic multi-mode structure of tensor data, see Section \ref{sec:intro}.

\textbf{Multi-output Bayesian optimization (MOBO):} MOBO typically refers to either multi-task BO or multi-objective BO \citep{frazier2018tutorial,wang2023recent}. A desirable property of BO algorithms is no-regret, namely achieving cumulative regret $R(T)=o(T)$ after $T$ rounds \citep{srinivas2009gaussian,chowdhury2021no}. Recent work has developed various MOBO methods with theoretically grounded acquisition functions. For multi-task BO, \citet{chowdhury2021no,dai2020multi,sessa2023multitask} employ UCB-type acquisition rules and establish $\mathcal O(\sqrt{T})$ regret.
For multi-objective BO, \citet{belakaria2019max,zhang2025multi} study max-value entropy search and obtain $\mathcal O(\sqrt{T})$ regret, while \citet{daulton2022multi} proposes a trust-region-based criterion with $\mathcal O(\sqrt{T\log T})$ regret. In parallel, a large body of MOBO methods emphasizes empirical performance without regret guarantees, including improvement-based criteria \citep{uhrenholt2019efficient,daulton2020differentiable,daulton2022bayesian}, entropy-based search \citep{hernandez2014predictive,hernandez2016predictive,tu2022joint}, and information-gain-based methods \citep{chowdhury2021no}. Although effective for multi-output problems, these methods typically do not exploit the intrinsic multi-mode correlations of tensor outputs and are therefore less suitable for our TOBO setting. Moreover, hypervolume-based and entropy-search-based multi-objective BO methods become computationally prohibitive as the number of objectives grows, which makes them impractical for tensor-output optimization.

\textbf{Bandit Bayesian optimization (BBO):} BBO combines BO with multi-armed bandit (MAB) algorithms to tackle optimization in mixed input spaces containing both continuous and categorical variables. \cite{nguyen2020bayesian,huang2022mixed} study a setting with one continuous and one categorical variable, coupling BO with Thompson sampling for the categorical choice and establishing $\mathcal{O}(\sqrt{T^{\alpha+1}}\log T)$ regret. \cite{ru2020bayesian} extends this idea to multiple categorical variables, named CoCaBO, by employing EXP3 \citep{auer2002nonstochastic} to select categories and achieving $\mathcal O(\sqrt{T }\log T)$ regret. Such settings can be viewed as special cases of CBBO, yet they do not directly extend to our problem due to the tensor-output structure and the need to jointly select multiple correlated arms.

Table \ref{tab:lit} summarizes a comprehensive comparison between our method and existing literature.
\begin{table}[H]
    \centering
    \caption{Comparison of related works and our proposed method}
    \label{tab:lit}
    \resizebox{\textwidth}{!}{%
    \begin{tabular}{cccccccc}
    \toprule
      &  & \multicolumn{3}{c}{\textbf{GP}} & & & \\
      \cmidrule(lr){3-5} 
      Type & Literature & \makecell{Tensor\\ structure} & \makecell{Separable \\ (Independent\\ modes)} & \makecell{Non-separable \\ (Cross-mode\\ correlations)} & BO (Regret) & BBO (Regret) & CBBO (Regret) \\
      \midrule
      HOGP & \makecell{\cite{belyaev2015gaussian}\\ \cite{kia2018scalable}\\ \cite{zhe2019scalable}} 
      & $\surd$ & $\surd$ & $\times$ & $\times$ & $\times$ & $\times$ \\
      \hline
      MOGP & \makecell{\cite{fricker2013multivariate}\\ \cite{li2016pairwise}\\ \cite{hristopulos2023non} }
      & $\times$ & $\times$ & $\surd$ & $\times$ & $\times$ & $\times$ \\
      \hline
      MTBO & \makecell{\cite{dai2020multi}\\ \cite{sessa2023multitask}}
      & $\times$ & $\surd$ & $\times$ & $\surd$ ($\mathcal O(\sqrt{T})$) & $\times$ & $\times$ \\
      \hline
      MTBO & \cite{chowdhury2021no} 
      & $\times$ & $\surd$ & $\surd$ & $\surd$ ($\mathcal O(\sqrt{T})$) & $\times$ & $\times$ \\
      \hline
      MOBO & \makecell{\cite{belakaria2019max}\\ \cite{zhang2025multi}}
      & $\times$ & $\times$ & $\times$ & $\surd$ ($\mathcal O(\sqrt{T})$) & $\times$ & $\times$ \\
      \hline
      MOBO & \cite{daulton2022multi} 
      & $\times$ & $\times$ & $\times$ & $\surd$ ($\mathcal O(\sqrt{T }\log T)$) & $\times$  & $\times$ \\
      \hline
      BBO & \makecell{\cite{nguyen2020bayesian}\\ \cite{huang2022mixed} } 
      & $\times$ & $\surd$ & $\times$ & $\surd$ ($\mathcal O(\sqrt{T^{\alpha+1} }\log T)$) & $\surd$ ($\mathcal O(\sqrt{T^{\alpha+1} }\log T)$) & $\times$ \\
      \hline
      BBO & \cite{ru2020bayesian} 
      & $\times$ & $\surd$ & $\times$ & $\surd$ ($\mathcal O(\sqrt{T }\log T)$) & $\surd$ ($\mathcal O(\sqrt{T }\log T)$) & $\times$ \\
      \hline
      TOBO+TOCBBO & \textbf{Our proposed method} 
      & $\surd$ & $\surd$ & $\surd$ & $\surd$ ($\mathcal O(\sqrt{T}\log T)$) & $\surd$ ($\mathcal O(\sqrt{T}\log T)$) & $\surd$ ($\mathcal O(\sqrt{T}\log T)$) \\
    \bottomrule
    \end{tabular}}
\end{table}

\section{Tensor-output Bayesian optimization}
\label{sec:tobo}
In this section, we propose a novel tensor-output Bayesian optimization (TOBO) framework for optimizing expensive black-box systems with tensor-valued responses. Let $\mathbf f: \mathcal{X}\to\mathcal{Y}$ denote the objective function, where the input $\bm x=(x_1,\ldots,x_d)$ lies in a compact and convex domain $\mathcal{X}\subset\mathbb{R}^d$, and the output $\mathbf f(\bm x)\in\mathcal{Y}\subset\mathbb{R}^{t_1\times\ldots\times t_m}$ is an $m$-mode tensor. Denote $f_{i_1,\ldots,i_m}(\bm x)$ as the $(i_1,\ldots,i_m)$-th entry of the tensor, where $i_l=1,\ldots,t_l$ for $l=1,\ldots,m$, and let $T=\prod_{l=1}^m t_l$ be the total number of elements. To optimize tensor-output systems, an intuitive way is to map the tensor-valued objective into a scalar function. To this end, we introduce a bounded linear operator $L_f\in\mathcal{L}(\mathcal{Y},\mathbb R)$, where $\mathcal{L}(\mathcal{Y},\mathbb R)$ denotes the set of bounded linear operators from $\mathcal{Y}$ to $\mathbb R$. The optimization problem is thus
\begin{align}
    \label{tobo-obj}
    {\bm x}^\star = \arg\max_{\bm x\in\mathcal{X}}L_f\mathbf f(\bm x).
\end{align}
To solve this problem, the proposed TOBO aims to sequentially select inputs $\bm x_i$ and observe the corresponding tensor outputs,
\begin{align}
    \mathbf y_i=\mathbf f\left(\bm x_i\right)+\bm\varepsilon_i, \qquad \forall i = 1, 2, \ldots
\end{align}
where $\bm\varepsilon_i\in\mathbb{R}^{t_1\times\cdots\times t_m}$ denotes i.i.d. measurement noise with $\mathrm{vec}(\bm\varepsilon_i)\sim\mathcal N(0,\tau^2\mathbf I_T)$, and $\mathrm{vec}:\mathcal{Y}\to\mathbb R^T$ is the vectorization operator.

Based on the collected data, we fit a tensor-output Gaussian process (TOGP) surrogate for $\mathbf f$ using two classes of tensor-output kernels, as detailed in Subsection \ref{subsec:togp}. Then, we develop a UCB-based acquisition strategy to efficiently identify the maximizer $\bm x^\star$, as presented in Subsection \ref{subsec:ucb}.

\subsection{Tensor-output Gaussian process}
\label{subsec:togp}
Define the prior of $\mathbf f:\mathcal{X}\to\mathcal{Y}$ as a tensor-output Gaussian process (TOGP):
\begin{align}
    \label{togp}
    \mathrm{vec}\left(\mathbf f(\bm x)\right)\sim\mathcal{TOGP}\left(\bm\mu,\sigma^2\mathbf{K}(\bm x,\bm x')\right), \qquad \forall \bm x,\bm x'\in\mathcal{X},
\end{align}
where $\bm\mu\in\mathbb{R}^T$ is the prior mean, $\sigma^2>0$ is a variance hyperparameter, and $\mathbf{K}(\bm x,\bm x')\in\mathbb{R}^{T\times T}$ is a symmetric and positive semi-definite kernel function. The classes of $\mathbf{K}(\bm x,\bm x')$ are discussed later.

Given $n$ observations $\mathbf X_n=(\bm x_1,\ldots,\bm x_n)^\top$ and $\bm Y_n=\left(vec(\mathbf y_1)^\top,\ldots,vec(\mathbf y_n)^\top\right)^\top$, the posterior distribution of $\mathrm{vec}(\mathbf f(\bm x))$ at a new input $\bm x\in\mathcal X$ is Gaussian in $\mathbb R^{T}$ with mean and covariance
\begin{align}
    \label{togp-mean}
    &\hat{\bm\mu}_n(\bm x)=\bm\mu+\mathbf K_n^\top(\bm x)\left(\mathbf K_n+\eta\mathbf I_{nT}\right)^{-1}\left(\bm Y_n-\bm 1_n\otimes\bm\mu\right),\\
    \label{togp-cov}
    &\hat{\mathbf K}_n(\bm x,\bm x)=\sigma^2\left[\mathbf K(\bm x,\bm x)-\mathbf K_n^\top(\bm x)\left(\mathbf K_n+\eta \mathbf I_{nT}\right)^{-1}\mathbf K_n(\bm x)\right],
\end{align}
where $\mathbf K_n(\bm x)\in\mathbb R^{nT\times T}$ is the block column matrix whose $i$-th block is $\mathbf K(\bm x_i,\bm x)$, $\mathbf K_n\in\mathbb{R}^{nT\times nT}$ is the block matrix with $(i,j)$-block $\mathbf K(\bm x_i,\bm x_j)$, $\bm 1_n$ is the $n$-vector of ones, and $\eta=\tau^2/\sigma^2$.

To specify the tensor-output kernel in TOGP, we introduce two classes of kernels. The first class consists of non-separable tensor-output kernels constructed via a tensor extension of the linear model of coregionalization (LMC) \citep{fricker2013multivariate,li2016pairwise}. Let $\mathrm{vec}\!\left(\mathbf f(\bm x)\right)
= \sum_{\ell=1}^{m}\sum_{j=1}^{r_\ell} \bm a_{\ell}\, u_{\ell j}(\bm x)$, where $\bm a_{\ell j}\in\mathbb R^{T}$ are loading vectors and $\{u_{\ell j}\}$ are independent scalar GPs with zero mean and covariance functions $\mathrm{Cov}(u_{\ell j}(\bm x),u_{\ell j}(\bm x'))=k_{\ell j}(\bm x,\bm x')$. We parameterize the loading vectors by tensor cores, namely $\bm a_{\ell j}=\mathrm{vec}(\mathbf A_{\ell j})$ with $\mathbf A_{\ell j}\in\mathbb R^{t_1\times\cdots\times t_m}$.
Then $\mathbf f(\bm x)$ has zero mean and covariance $\mathrm{Cov}\left(\mathrm{vec}(\mathbf f(\bm x)),\mathrm{vec}(\mathbf f(\bm x'))\right)=\sum_{\ell=1}^{m}\sum_{j=1}^{r_\ell} \mathrm{vec}(\mathbf A_{\ell})\,\mathrm{vec}(\mathbf A_{\ell})^\top\, k_{\ell j}(\bm x,\bm x')$. This yields a non-separable covariance structure since the cross-entry correlations are governed by a mixture of base kernels $\{k_{\ell j}\}$ and therefore can vary with the input. The non-separable tensor-output kernel is defined as
\begin{definition}
    \label{def-non-sep ker}
    Define the non-separable tensor-output kernel $\mathbf{K}(\bm x,\bm x')$ for any $\bm x,\bm x'\in\mathcal{X}$:
    \begin{align}
        \label{non-sep ker}
        \mathbf{K}(\bm x,\bm x')=\sum_{l=1}^m\sum_{j=1}^{t_l}\mathrm{vec}\big(\mathbf{A}_{l}\big)\mathrm{vec}\big(\mathbf{A}_{l}\big)^\top k_{lj}(\bm x,\bm x'),
    \end{align}
    where $\mathbf{A}_l \in \mathbb{R}^{t_1\times \cdots \times t_m}$ are core tensors and $k_{lj}(\bm x,\bm x')$ are base kernels on $\mathcal{X}$ (e.g., Mat\'ern).  
\end{definition}

Furthermore, if all $w_{i_1,\ldots,i_m}(\bm z)$ share the same covariance $k(\bm z,\bm z')$ for $i_l\in[t_l],\, l=1,\ldots,m$, 
and the convolution degenerates to $g_{i_1,\ldots,i_m}(\bm z)=A_{i_1,\ldots,i_m}\delta(\bm z-\bm x)$, then the induced tensor-output kernel reduces to a separable structure in which the correlations across tensor modes are independent of the input.
\begin{definition}
    \label{def-sep ker}
    For any $\bm x,\bm x'\in\mathcal X$, define the separable tensor-output kernel
    \begin{align}
        \label{sep ker}
        \mathbf{K}(\bm x,\bm x')=\mathrm{vec}\big(\mathbf{A}\big)\mathrm{vec}\big(\mathbf{A}\big)^\top k(\bm x,\bm x').
    \end{align}
    where $\mathbf{A}\in\mathbb{R}^{t_1\times \cdots \times t_m}$ is a core tensor, and $k(\cdot,\cdot)$ is a base kernel on $\mathcal{X}$.
\end{definition}

Note that $\mathbf K(\bm x,\bm x')$ is designed to capture both input correlations and dependencies among tensor entries. For the non-separable kernel in \eqref{non-sep ker}, each base kernel $k_{lj}(\bm x,\bm x')$ models correlations in the input space, while the matrix $\operatorname{vec}(\mathbf A_l)\operatorname{vec}(\mathbf A_l)^{\top}$ encodes the dependency structure across tensor entries. For the separable kernel in \eqref{sep ker}, $k(\bm x,\bm x')$ captures input correlations and $\mathrm{vec}(\mathbf A)\mathrm{vec}(\mathbf A)^\top$ specifies the dependency structure within the tensor output.
\begin{proposition}
    \label{pro 1}
    The kernels in Definitions \ref{def-non-sep ker} and \ref{def-sep ker} are symmetric and positive semidefinite on $\mathcal{X}$, i.e., 
    (1) $\forall\bm x,\bm x'\in\mathcal{X}$, $\mathbf{K}(\bm x,\bm x')=\mathbf{K}(\bm x',\bm x)^\top$;
    (2) $\forall n\in\mathbb N$, any $\bm x_1,\ldots,\bm x_n\in\mathcal X$, and any $\mathbf y_1,\ldots,\mathbf y_n\in\mathcal Y$, $\sum_{i,j=1}^n vec(\mathbf y_i)^\top\mathbf{K}(\bm x_i,\bm x_j)vec(\mathbf y_j)\geq 0$.
\end{proposition}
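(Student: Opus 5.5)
The plan is to reduce both claims to properties of the scalar base kernels, using the rank-one structure of the output matrices. Both kernels are finite sums of terms of the form $\bm a\bm a^\top k(\bm x,\bm x')$, with $\bm a=\mathrm{vec}(\mathbf A_l)$ (or $\mathrm{vec}(\mathbf A)$) and $k$ a scalar base kernel. We take the base kernels to be symmetric and positive semidefinite, as Mat\'ern kernels are. The separable kernel in Definition \ref{def-sep ker} is the special case of a single such term, so it suffices to prove (1) and (2) for one term $\bm a\bm a^\top k(\bm x,\bm x')$ and then sum.

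For symmetry (1), transpose one term directly:
$$\big(\bm a\bm a^\top k(\bm x',\bm x)\big)^\top=\bm a\bm a^\top k(\bm x',\bm x)=\bm a\bm a^\top k(\bm x,\bm x').$$
The first equality holds because $\bm a\bm a^\top$ is a symmetric matrix and $k$ is a scalar. The second uses the symmetry of the base kernel. Since transposition is linear, summing over $l$ and $j$ gives $\mathbf K(\bm x,\bm x')=\mathbf K(\bm x',\bm x)^\top$.

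For positive semidefiniteness (2), fix $\bm x_1,\ldots,\bm x_n$ and $\mathbf y_1,\ldots,\mathbf y_n$, and set the scalars $c_i=\bm a^\top \mathrm{vec}(\mathbf y_i)$. For a single term the quadratic form factors as
$$\sum_{i,j}\mathrm{vec}(\mathbf y_i)^\top \bm a\bm a^\top\mathrm{vec}(\mathbf y_j)\,k(\bm x_i,\bm x_j)=\sum_{i,j}c_i c_j\,k(\bm x_i,\bm x_j)=\bm c^\top \mathbf G\,\bm c,$$
where $\mathbf G=[k(\bm x_i,\bm x_j)]_{i,j}$ is the Gram matrix of the base kernel. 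This quantity is $\ge 0$ because $k$ is positive semidefinite. Summing the nonnegative contributions over all $(l,j)$ gives $\sum_{i,j}\mathrm{vec}(\mathbf y_i)^\top\mathbf K(\bm x_i,\bm x_j)\mathrm{vec}(\mathbf y_j)\ge 0$.

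An equivalent route is to write the full block Gram matrix of one term as $\mathbf G\otimes \bm a\bm a^\top$. It is positive semidefinite because the Kronecker product of two positive semidefinite matrices has nonnegative eigenvalues (products of the factors' eigenvalues), and a sum of positive semidefinite matrices is positive semidefinite.

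The argument is routine, and there is no real obstacle. The only point needing care is to state explicitly that the base kernels $k_{lj}$ and $k$ are assumed symmetric positive semidefinite, which holds for the Mat\'ern family cited in the definitions. The index bookkeeping in the double sum is then trivial, since the rank-one factor collapses each tensor output to a scalar coefficient $c_i$.
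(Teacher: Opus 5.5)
Your proposal is correct and is essentially the paper's proof. Symmetry comes from $\bm a\bm a^\top$ being symmetric together with symmetry of the scalar base kernels. Positive semidefiniteness comes from reducing each rank-one term to the scalar coefficients $\bm a_\ell^\top \mathrm{vec}(\mathbf y_i)$ (the paper's $s_{\ell j,i}$), applying positive semidefiniteness of the base-kernel Gram matrix, and summing over all terms.
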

Its proof is given in Appendix \ref{app:pro1}. Proposition \ref{pro 1} ensures that both classes of tensor-output kernels define valid covariance functions for TOGP, and hence the induced TOGP prior is well defined.
\begin{remark}
    \label{remark 1}
    The core tensors $\{\mathbf A_l\}_{l=1}^m$ in \eqref{non-sep ker} and $\mathbf A$ in \eqref{sep ker} contain $mT$ and $T$ free parameters, respectively. To reduce the parameter complexity, low-rank decomposition-based structures can be imposed on these cores, such as CANDECOMP/PARAFAC (CP) decomposition \citep{goulart2015tensor} or tensor-train (TT) decomposition \citep{oseledets2011tensor}. Details are shown in Appendix \ref{app:rem}.
\end{remark}

Denote the hyperparameters by $\bm\Theta=\left\{\bm\theta,\mathbf a,\sigma^2,\tau^2\right\}$. For the non-separable kernel in \eqref{non-sep ker}, $\bm\theta=\{\theta_{l,j}\}_{l,j=1}^{m,t_l}$ are the parameters of the base kernel $\{k_{l,j}(\cdot,\cdot)\}$, and $\bm a=\{a_{lij}\}_{l,i,j}^{m,m,t_l}$ are the entries of $\{\mathbf A_l\}_{l=1}^m$. For the kernel in \eqref{sep ker}, $\bm\theta=\{\theta_1,\ldots,\theta_d\}$ is the parameters of $k(\cdot,\cdot)$, 
and $\bm a=\{a_{ij}\}_{i,j}^{m,t_l}$ are the entries of $\mathbf A$. We estimate $\bm\Theta$ by maximum likelihood. The detailed estimation, algorithm, and complexity analysis are provided in Appendix \ref{app:likeli}. 
\begin{remark}
    \label{remark:cv}
    The ranks of the core tensors $\{\mathbf A_\ell\}_{\ell=1}^m$ are selected via cross-validation. Let $\mathbf R_c = \{\bm r_1,\dots,\bm r_c\}$ be the set of candidate ranks, we fit the TOGP model and evaluate its predictive performance on a held-out validation set using the mean absolute error (MAE) criterion. We select $\bm r^\star = \arg\min_{\bm r \in \mathbf R_c} \text{MAE}(\bm r)$, where $\text{MAE}(\bm r)=\frac{1}{n_{\text{test}}}\sum_{i=1}^{n_{\text{test}}}\left\Vert \frac{\mathbf f_i - \hat{\mathbf f}_i(\bm r)}{\mathbf f_i}\right\Vert$. The data-driven choice balances model flexibility and complexity, mitigating overfitting while preserving expressive capacity.
\end{remark}
\begin{remark}
    \label{remark:svd}
    To improve scalability for large $nT$, we adopt a Nystr\"om low-rank approximation of the block covariance matrix $\mathbf K_n\in\mathbb R^{nT\times nT}$. Specifically, it selects $n_\ell\ll nT$ landmark columns to construct a rank-$n_\ell$ approximation $\mathbf K_n \approx \tilde{\mathbf U}\tilde{\mathbf\Lambda}\tilde{\mathbf U}^\top$, which can be viewed as approximating the leading spectral components of $\mathbf K_n$. We choose the effective rank via cumulative explained variance, i.e., $l=\min\limits_{l_0}\left\{l_0 \in \{1,\ldots,nT\}:\sum_{i=1}^{l_0} \lambda_i/\sum_{i=1}^{nT}\lambda_i\ge c\right\}$, where $\{\lambda_i\}$ denote the eigenvalues of $\mathbf K_n$ in non-increasing order. Then, evaluating $(\mathbf K_n+\eta\mathbf I_{nT})^{-1}$ can be reduced to $\mathcal O(nT\,n_\ell^2+n_\ell^3)$ \citep{williams2000using}. Thus, the overall training complexity is reduced to $\mathcal O\left((nT\,n_\ell^2+n_\ell^3+n^2T^2m_h)\log n\right)$.
\end{remark}

\subsection{Upper Confidence Bound Acquisition Strategy}
\label{subsec:ucb}
Building on TOGP as a surrogate for $\mathbf f$, we develop a UCB-based acquisition strategy. At round $n+1$, given past observations $\mathbf X_n$ and $\bm Y_n$, we update the hyperparameters $\bm\Theta_n$ and the posterior mean \eqref{togp-mean} and covariance \eqref{togp-cov}. The UCB acquisition function for the scalarization-based objective in \eqref{tobo-obj} is defined as
\begin{align}
    \label{tobo-ucb}
    \alpha_{UCB}(\bm x\mid\mathcal{D}_n) 
    = L_f\hat{\bm\mu}_n(\bm x)
    + \beta_n \big\Vert \hat{\mathbf K}_n(\bm x,\bm x)\big\Vert^{1/2},
\end{align}
where $\beta_n>0$ balances exploration and exploitation. This criterion encourages exploration in directions with greater predictive uncertainty under the tensor-output setting. The next query is then selected by maximizing \eqref{tobo-ucb}:
\begin{align}
    \label{tobo-update}
    \bm x_{n+1}=\arg\max_{\bm x\in\mathcal{X}}\alpha_{UCB}(\bm x\mid\mathcal{D}_n).
\end{align}
The complete TOBO algorithm and its complexity analysis are given in Appendix \ref{app:algo}.

We now study theoretical properties of TOBO under the following conditions. We first define two commonly used notions of regret.
\begin{definition}
    At each round $n$, the TOBO method selects a queried input $\bm x_{n}\in\mathcal{X}$. The instantaneous regret is defined as $r_n = L_f\mathbf f(\bm x^\star)-L_f\mathbf f(\bm x_{n})$, and the cumulative regret up to round $N$ is defined as $R_{N} = \sum_{n=1}^{N} \left[L_f\mathbf f(\bm x^\star)-L_f\mathbf f(\bm x_{n})\right]$.
\end{definition}
The regret quantifies the gap from not knowing the objective in advance. A good strategy can achieve a sub-linear cumulative regret so that the average regret per round converges to zero as $N\to\infty$.
\begin{definition}
    At round $n$, TOBO selects an input $\bm x_n\in\mathcal X$. The instantaneous regret is $r_n = L_f\mathbf f(\bm x^\star)-L_f\mathbf f(\bm x_{n})$, and the cumulative regret up to round $N$ is $R_{N} = \sum_{n=1}^{N} \left[L_f\mathbf f(\bm x^\star)-L_f\mathbf f(\bm x_{n})\right]$.
\end{definition}
The regret quantifies the gap due to not knowing the objective in advance. A good strategy achieves sublinear cumulative regret so that the average regret $R_N/N$ converges to zero as $N\to\infty$.
\begin{assumption}
    \label{assum 1}
    The unknown function $\mathbf f$ is a TOGP prior with kernel $\mathbf K$ as defined in \eqref{non-sep ker}--\eqref{sep ker}.
\end{assumption}
\begin{assumption}
    \label{assum 2}
    The scalarized objective $\bm x\mapsto L_f\!\big(\mathbf f(\bm x)\big)$ is $L$-Lipschitz on $\mathcal X$, i.e., for any $\bm x,\bm x'\in\mathcal X$, $\vert L_f(\mathbf f(\bm x_i))-L_f(\mathbf f(\bm x_j)\vert\leq L\Vert \mathbf f(\bm x_i)-\mathbf f(\bm x_j)\Vert$ holds.
\end{assumption}
Assumption \ref{assum 1} specifies the Bayesian setting under which $\mathbf f$ follows a TOGP prior. Assumption \ref{assum 2} ensures that the scalarized objective is stable under small perturbations of the input.
\begin{lemma}
    \label{lem 1}
    Let $\partial vec(\mathbf{f})/\partial x_j\in\mathbb R^T$ be the gradient of $vec(\mathbf{f})$ with respect to the $j$-th coordinate of $\boldsymbol x\in\mathcal{X}$. Then, $\partial vec(\mathbf{f})/\partial x_j$ is a GP with covariance $\mathbf{K}^\nabla(x_j,x_j')$. Under Assumptions \ref{assum 1}, given data $\mathbf X_n$ and $\boldsymbol Y_n$ with $n\geq1$, there exist constants $a,b>0$ such that for any $L'>0$,
    \begin{align}
        \Pr\left(\sup_{\bm x\in\mathcal{X}}\Vert\partial vec(\mathbf{f})/\partial x_j\Vert>L'+C_\nabla\right)\leq a\exp(-L'^2/b^2), \quad j=1,\cdots,d.
    \end{align}
    where $C_\nabla=\sup_{\bm x\in\mathcal{X}} \sqrt{\mathrm{tr}(\hat{\mathbf K}_n^{\nabla}(x_j,x_j))}$.
\end{lemma}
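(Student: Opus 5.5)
First, I will show that the partial derivative process is itself Gaussian. Under Assumption \ref{assum 1}, $\mathrm{vec}(\mathbf f)$ is a TOGP whose kernel is a finite sum of terms $\mathrm{vec}(\mathbf A_l)\mathrm{vec}(\mathbf A_l)^\top k_{lj}(\bm x,\bm x')$ (or the single separable term). I assume each base kernel is twice continuously differentiable, as for Mat\'ern with $\nu>1$ or the squared exponential. Then the process is mean-square differentiable, and differentiation is a linear operation that preserves joint Gaussianity. So $\partial\mathrm{vec}(\mathbf f)/\partial x_j$ is a Gaussian process with cross-covariance
\begin{align*}
\mathbf K^\nabla(x_j,x_j')=\sum_{l}\sum_{j'}\mathrm{vec}(\mathbf A_l)\mathrm{vec}(\mathbf A_l)^\top\,\frac{\partial^2 k_{lj'}(\bm x,\bm x')}{\partial x_j\partial x_j'}.
\end{align*}
The same argument applies after conditioning on $\mathbf X_n,\bm Y_n$. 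The gradient and the observations are jointly Gaussian, so the conditional law of the gradient is again a GP. Its posterior mean is obtained by differentiating \eqref{togp-mean}, and its posterior covariance $\hat{\mathbf K}^\nabla_n$ by differentiating \eqref{togp-cov} in both arguments. Symmetry and positive semidefiniteness follow as in Proposition \ref{pro 1}.

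Second, I will bound the expected supremum and then apply Gaussian concentration. I write $\mathbf g(\bm x)=\partial\mathrm{vec}(\mathbf f)(\bm x)/\partial x_j$ and consider the scalar process $Z(\bm x,\bm u)=\bm u^\top\mathbf g(\bm x)$ indexed by $(\bm x,\bm u)\in\mathcal X\times S^{T-1}$. Then $\sup_{\bm x}\Vert\mathbf g(\bm x)\Vert=\sup_{\bm x,\bm u}Z(\bm x,\bm u)$, which is the supremum of a Gaussian process on a compact index set. The conditional variance satisfies
\begin{align*}
\sigma_{\max}^2:=\sup_{\bm x,\bm u}\mathrm{Var}\big(Z(\bm x,\bm u)\mid\mathcal D_n\big)\le\sup_{\bm x}\lambda_{\max}\big(\hat{\mathbf K}^\nabla_n\big)\le\sup_{\bm x}\mathrm{tr}\big(\hat{\mathbf K}^\nabla_n\big)=C_\nabla^2.
\end{align*}
For a fixed $\bm x$, the centered part satisfies $\mathbb E\Vert\mathbf g(\bm x)-\mathbb E\mathbf g(\bm x)\Vert\le\sqrt{\mathrm{tr}(\hat{\mathbf K}^\nabla_n)}\le C_\nabla$ by Jensen's inequality. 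To control the supremum over $\bm x$, I will use Dudley's entropy integral. The canonical metric on $\mathcal X\times S^{T-1}$ is Lipschitz-controlled, using smoothness of the kernel derivatives and compactness of $\mathcal X$. This gives $\mathbb E\sup\Vert\mathbf g-\mathbb E\mathbf g\Vert\le C_\nabla+c_0$ for a finite constant $c_0$.

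Third, the Borell--TIS inequality gives
\begin{align*}
\Pr\Big(\sup Z-\mathbb E\sup Z>u\Big)\le\exp\big(-u^2/(2\sigma_{\max}^2)\big).
\end{align*}
I will take $u=L'+C_\nabla-\mathbb E\sup Z$, after absorbing $c_0$ and the bounded posterior mean term $\sup_{\bm x}\Vert\mathbb E\mathbf g(\bm x)\Vert$. Using $(L'-c)^2\ge L'^2/2-c^2$, this yields $a\exp(-L'^2/b^2)$ with $b^2=4\sigma_{\max}^2$ and $a=\exp(c^2/(2\sigma_{\max}^2))$. The bound holds trivially with $a\ge1$ when $L'\le c$. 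I will then take the maximum of the constants over $j=1,\ldots,d$.

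The main obstacle is the second step. Specifically, I must make precise that $C_\nabla$ captures the expected supremum up to an additive constant absorbable into $a$. This requires finiteness of the entropy integral, hence the smoothness assumption on the base kernels, which I will state explicitly. It also requires a uniform bound on the posterior mean of the gradient. If the Dudley route becomes messy, I will fall back on the standard argument of \citet{srinivas2009gaussian}, applied coordinatewise. Their sample-path derivative bound gives $\Pr(\sup|\partial f_i/\partial x_j|>L)\le a_i e^{-(L/b_i)^2}$ for each entry. I will then take a union over the $T$ entries, using $\Vert\cdot\Vert\le\sqrt T\max_i|\cdot|$, which merges $T$ and $\sqrt T$ into $a$ and $b$.
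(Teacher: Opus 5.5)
Your proposal is correct, and for the probabilistic part it follows a genuinely different route from the paper. The first step is shared: you identify the derivative as a GP whose posterior mean and covariance come from differentiating the posterior formulas.

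\paragraph{The paper's argument.} It works pointwise. For each fixed $\bm x$ it applies Gaussian Lipschitz concentration to $\Vert\mathbf g_j(\bm x)-\hat{\bm\mu}^\nabla_n(x_j)\Vert$, using $\mathbb E\Vert\cdot\Vert\le\sqrt{\mathrm{tr}(\hat{\mathbf K}^\nabla_n(x_j,x_j))}$; this is where $C_\nabla$ comes from. It then takes a union bound over a finite net $\mathcal D_M$. Finally it assumes a.s.\ Lipschitz sample paths and cites an unspecified ``standard chaining argument'' to pass to all of $\mathcal X$, absorbing the posterior-mean term into $a,b$ without comment.

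\paragraph{Your argument.} You write the norm as the supremum of the scalar Gaussian process $\bm u^\top\mathbf g(\bm x)$ over $\mathcal X\times S^{T-1}$. You use Dudley only to get $\mathbb E\sup<\infty$, then apply Borell--TIS once to the whole supremum. This gives explicit constants, $b^2=4\sigma_{\max}^2\le 4C_\nabla^2$ and $a=\exp(c^2/(2\sigma_{\max}^2))$, and handles the posterior mean explicitly.

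\paragraph{What each approach buys.} Your route supplies rigor exactly where the paper is thin. A union bound over a net cannot by itself control the supremum over the continuum. If the net is refined, the $\sqrt{\log(M/\delta)}$ term grows. If it is fixed, you need a tail bound on the random Lipschitz modulus, which is again a Gaussian supremum. That tail control is what Borell--TIS provides and what the paper's chaining remark leaves implicit. The trade-off is that in your argument the offset $C_\nabla$ plays no real role, since any finite expected supremum is absorbed into $a$. The paper's pointwise step at least explains where the trace term comes from.

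\paragraph{Two small points.}
\begin{itemize}
\item Twice continuously differentiable base kernels give only mean-square differentiability. For a finite entropy integral and continuous sample-path derivatives you need a H\"older modulus on the mixed second derivative; the squared exponential and Mat\'ern with $\nu>1$ satisfy this. For Mat\'ern with $1<\nu<2$ the canonical metric is only H\"older of order $\nu-1$, not ``Lipschitz-controlled'', but Dudley needs no more than that. The paper likewise has to add a smoothness hypothesis beyond Assumption~\ref{assum 1}.
\item In your fallback, the coordinatewise derivative tail bound from the GP-UCB literature is an assumption justified for sufficiently smooth kernels, not a proved result. That route therefore imports it as a hypothesis, now needed for the conditional process.
\end{itemize}
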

Its proof is given in Appendix \ref{app:lemma1}. Lemma \ref{lem 1} shows that the derivative of the vectorized TOGP remains Gaussian and holds high-probability confidence bounds.
\begin{theorem}
\label{the 1}
    Under Assumptions \ref{assum 1}--\ref{assum 2}, define $C_n=\sup_{\bm x\in\mathcal{X}}\mathrm{tr}(\hat{\mathbf K}_n(\bm x,\bm x))/\lambda_{\max}^{(n)}(\bm x)$, where $\lambda_{\max}^{(n)}(\bm x)$ is the largest eigenvalue of $\hat{\mathbf K}_n(\bm x,\bm x)$. Suppose $\mathcal{X}\subseteq[0,r]^d$. Then, for any $\delta\in(0,1)$, TOBO with $\beta_n = \sqrt{C_n} + 2d\log(rdn^2\big(b\sqrt{\log(da/\delta)} + C_\nabla\big)/\delta)$ holds that, $\Pr\left(R_N \leq L\left(\sqrt{C_1N \gamma_{N}(\mathbf K,\eta)}\,\beta_N + \tfrac{\pi^2}{6}\right)\right)\geq 1-\delta$, where $C_1>0$ is a constant and $\gamma_{N}(\mathbf K,\eta) := \max_{\mathcal{X}_N\subset\mathcal{X}}\frac{1}{2}\log\det\left(\mathbf I_{NT}+\eta^{-1} \mathbf K_N\right)$ denotes the maximum information gain.
\end{theorem}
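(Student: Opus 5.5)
The plan is to follow the Srinivas et al.\ (2009) argument for GP-UCB on continuous domains and adapt it to vector-valued posteriors via the trace and largest eigenvalue of $\hat{\mathbf K}_n(\bm x,\bm x)$. The argument has four steps: a concentration bound on a discretization, a transfer to the whole domain by Lipschitz continuity, a per-round regret bound from the UCB choice, and a sum over rounds bounded by the information gain.

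\textbf{Step 1 (concentration on a grid).} At each round $n$, fix a discretization $\mathcal X_n\subset\mathcal X$ with $\tau_n$ points per coordinate, so that $\Vert\bm x-[\bm x]_n\Vert_1\le rd/\tau_n$, where $[\bm x]_n$ is the nearest grid point. Under Assumption \ref{assum 1}, conditional on $\mathcal D_{n-1}$, the vector $\bm g=\mathrm{vec}(\mathbf f(\bm x))-\hat{\bm\mu}_{n-1}(\bm x)$ is $\mathcal N(0,\hat{\mathbf K}_{n-1}(\bm x,\bm x))$. The Gaussian norm concentration (Borell--TIS, or Laurent--Massart for a chi-square with weights $\lambda_i$) gives
\begin{align*}
\Vert\bm g\Vert\le \sqrt{\mathrm{tr}\,\hat{\mathbf K}_{n-1}}+\sqrt{2\lambda_{\max}\log(1/\delta')}
\le \Big(\sqrt{C_n}+\sqrt{2\log(1/\delta')}\Big)\Vert\hat{\mathbf K}_{n-1}(\bm x,\bm x)\Vert^{1/2}.
\end{align*}
This is exactly where $C_n=\sup\mathrm{tr}/\lambda_{\max}$ enters. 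A union bound over $|\mathcal X_n|=\tau_n^d$ points and over $n$, using $\delta'=6\delta/(\pi^2n^2\tau_n^d)$, yields the logarithmic term. Then $|L_f\mathbf f(\bm x)-L_f\hat{\bm\mu}_{n-1}(\bm x)|\le \Vert L_f\Vert\,\Vert\bm g\Vert$, with the operator norm absorbed into $L$. One should also check that $\sqrt{2\log}\le\log$ in the stated form of $\beta_n$; this holds for large arguments, and constants are absorbed.

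\textbf{Step 2 (Lipschitz transfer to $\bm x^\star$).} By Lemma \ref{lem 1} with $L'=b\sqrt{\log(da/\delta)}$ and a union bound over $j$, with probability $1-\delta$ every partial derivative satisfies $\sup\Vert\partial\,\mathrm{vec}(\mathbf f)/\partial x_j\Vert\le L'+C_\nabla$. Hence $\Vert\mathbf f(\bm x)-\mathbf f(\bm x')\Vert\le (L'+C_\nabla)\Vert\bm x-\bm x'\Vert_1$. Choosing $\tau_n=rdn^2(L'+C_\nabla)$ gives $|L_f\mathbf f(\bm x^\star)-L_f\mathbf f([\bm x^\star]_n)|\le L/n^2$. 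This explains the argument of the logarithm in $\beta_n$, and summing $1/n^2$ gives the $\pi^2/6$ term.

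\textbf{Step 3 (instantaneous regret).} The standard UCB argument applies. First, $L_f\mathbf f(\bm x^\star)\le \alpha_{UCB}([\bm x^\star]_n)+L/n^2\le\alpha_{UCB}(\bm x_n)+L/n^2$. Second, $L_f\mathbf f(\bm x_n)\ge L_f\hat{\bm\mu}_{n-1}(\bm x_n)-\beta_n\Vert\hat{\mathbf K}_{n-1}(\bm x_n,\bm x_n)\Vert^{1/2}$, where $\bm x_n$ is covered by a separate union bound over the queried points. Combining these gives
\begin{align*}
r_n\le 2L\beta_n\Vert\hat{\mathbf K}_{n-1}(\bm x_n,\bm x_n)\Vert^{1/2}+L/n^2.
\end{align*}

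\textbf{Step 4 (sum via information gain).} By Cauchy--Schwarz and the monotonicity of $\beta_n$, $\sum_n r_n\le 2L\beta_N\sqrt{N\sum_n\Vert\hat{\mathbf K}_{n-1}\Vert}+L\pi^2/6$. Next, $\Vert\hat{\mathbf K}_{n-1}\Vert\le\sigma^2\,\mathrm{tr}\,\mathbf K(\bm x,\bm x)$ is bounded, and $s\le c\log(1+s)$ on a bounded range. Applying this to the eigenvalues of $\eta^{-1}\sigma^{-2}\hat{\mathbf K}_{n-1}$, together with the chain rule $\log\det(\mathbf I_{NT}+\eta^{-1}\mathbf K_N)=\sum_n\log\det(\mathbf I_T+\eta^{-1}\sigma^{-2}\hat{\mathbf K}_{n-1}(\bm x_n,\bm x_n))$ (the multi-output analogue of Srinivas Lemma 5.3, via Schur complements), gives $\sum_n\Vert\hat{\mathbf K}_{n-1}\Vert\le C_1\gamma_N(\mathbf K,\eta)$.

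\textbf{Main obstacle.} I expect the main difficulty to lie in the conditional-Gaussian concentration of Step 1 combined with hyperparameters that are re-estimated each round. I will treat the hyperparameters as fixed (the true prior, as in Assumption \ref{assum 1}), so that the posterior is exact and the chi-square bound applies with $C_n$ controlling the trace-to-eigenvalue ratio.
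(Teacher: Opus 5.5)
Your Steps 1, 2 and 4 follow the paper's proof. Its Part 1 uses the same Gaussian--Lipschitz bound $\Pr\bigl(\Vert\bm Z-\bm\mu\Vert\ge\sqrt{\mathrm{tr}\,\mathbf K}+t\bigr)\le\exp\bigl(-t^2/(2\lambda_{\max})\bigr)$ on a grid of $\tau_n^d$ points, and the same transfer via Lemma~\ref{lem 1} with $L'=b\sqrt{\log(da/\delta)}$ and $\tau_n=rdn^2(L'+C_\nabla)$. Its Part 2 sums the same way, via $s\le C_2\log(1+s)$ and $\log\det(\mathbf I_T+\cdot)$.

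The real difference is Step 3. The paper first writes $r_n\le L\Vert\mathrm{vec}\,\mathbf f(\bm x^\star)-\mathrm{vec}\,\mathbf f(\bm x_n)\Vert$ and then runs the UCB comparison on vectors. You instead apply $L_f$ to the confidence bound and run the optimism argument on scalars. Yours is the version the acquisition rule actually supports: maximizing the scalar $\alpha_{UCB}$ gives no control over the output-space distance $\Vert\mathbf f(\bm x^\star)-\mathbf f(\bm x_n)\Vert$. Your union bounds over rounds and over the queried points, which the paper omits, are also needed.

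\textbf{The gap.} The step that does not close as written is $L_f\mathbf f(\bm x^\star)\le\alpha_{UCB}([\bm x^\star]_n)+L/n^2$.
\begin{itemize}
\item Step 1 gives $|L_f\mathbf f(\bm x)-L_f\hat{\bm\mu}_{n-1}(\bm x)|\le\Vert L_f\Vert\,\beta_n\Vert\hat{\mathbf K}_{n-1}(\bm x,\bm x)\Vert^{1/2}$.
\item The exploration term of $\alpha_{UCB}$ is only $\beta_n\Vert\hat{\mathbf K}_{n-1}(\bm x,\bm x)\Vert^{1/2}$, with no factor $\Vert L_f\Vert$.
\item Absorbing $\Vert L_f\Vert$ into $L$ adjusts constants in the final bound, but it cannot widen the algorithm's confidence interval. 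For $\Vert L_f\Vert>1$, optimism at $[\bm x^\star]_n$ can therefore fail. The lower bound on $L_f\mathbf f(\bm x_n)$, by contrast, only loses a constant.
\end{itemize}
You must either impose the normalization $\Vert L_f\Vert\le 1$, or take the exploration weight in $\alpha_{UCB}$ to be $\Vert L_f\Vert\beta_n$. Then $r_n\le 2c\,\beta_n\Vert\hat{\mathbf K}_{n-1}(\bm x_n,\bm x_n)\Vert^{1/2}+L/n^2$, with $c=1$ or $c=\Vert L_f\Vert$ respectively, and the factor $c^2/L^2$ is folded into $C_1$.

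\textbf{Two smaller points.}
\begin{itemize}
\item Your three high-probability events (grid concentration, Lemma~\ref{lem 1}, queried points) each cost $\delta$, so you need to split the budget, e.g.\ $\delta/3$ each. Since the grid is only an analysis device, the slack you already invoke between $2d\log(\cdot)$ and $\sqrt{2\log(\cdot)}$ absorbs this, for the same large-argument reasons you note.
\item The step $\beta_n\le\beta_N$ is not automatic, because $C_n$ need not be monotone in $n$. Either state the bound with $\max_{n\le N}\beta_n$, or use $C_n\le T$ to replace $\sqrt{C_n}$ by $\sqrt T$.
\end{itemize}
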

Its proof is given in Appendix \ref{app:the1}. Theorem \ref{the 1} implies that TOBO achieves sublinear cumulative regret with high probability.
\begin{proposition}
    \label{pro 2}
    If $\mathbf{K}(\bm x,\bm x')$ is the separable kernel in Definition \ref{def-sep ker}, then the maximum information gain satisfies  $\gamma_{n}(\mathbf K,\eta)=\mathcal{O}\left(T\log(n)^{d+1}\right)$ when $k(\bm x,\bm x')$ is a Gaussian kernel, and $\gamma_{n}(\mathbf K,\eta)=\mathcal{O}(Tn^{d(d+1)/(2\nu+d(d+1))}\log(n))$ when $k(\bm x,\bm x')$ is a Mat\'ern kernel with $\nu>1$. Details are provided in Appendix \ref{app:pro2}.    
\end{proposition}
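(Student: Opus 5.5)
The plan is to exploit the Kronecker structure of the separable kernel and reduce the tensor-output information gain to a scalar-kernel information gain, then invoke the known bounds of \citet{srinivas2009gaussian}. With $\mathbf K(\bm x,\bm x')=\bm a\bm a^\top k(\bm x,\bm x')$ and $\bm a=\mathrm{vec}(\mathbf A)$, the block Gram matrix on any design $\mathcal X_n=\{\bm x_1,\ldots,\bm x_n\}$ factorizes as
\begin{align*}
\mathbf K_n = \mathbf k_n\otimes (\bm a\bm a^\top), \qquad \mathbf k_n=[k(\bm x_i,\bm x_j)]_{i,j=1}^n .
\end{align*}
The eigenvalues of a Kronecker product are the products of the factor eigenvalues. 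Let $\lambda_1,\ldots,\lambda_n$ be the eigenvalues of $\mathbf k_n$. Then
\begin{align*}
\tfrac12\log\det\left(\mathbf I_{nT}+\eta^{-1}\mathbf K_n\right)=\tfrac12\sum_{i=1}^n\log\left(1+\eta^{-1}\Vert\bm a\Vert^2\lambda_i\right).
\end{align*}
This holds because the rank-one matrix $\bm a\bm a^\top$ has a single nonzero eigenvalue $\Vert\bm a\Vert^2$ and $T-1$ zero eigenvalues, and each zero contributes $\log 1=0$. Taking the maximum over $\mathcal X_n$ gives $\gamma_n(\mathbf K,\eta)=\gamma_n(k,\eta/\Vert\bm a\Vert^2)$, the scalar maximum information gain with a rescaled noise level.

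This identity is actually stronger than $\mathcal O(T\cdot)$. To present it in the stated form, and to stay robust in case the core is treated more generally (for example a low-rank CP/TT core whose covariance factor $\mathbf B\succeq 0$ has rank up to $T$), I would instead write the bound as
\begin{align*}
\tfrac12\sum_{s=1}^{T}\sum_{i=1}^n\log\left(1+\eta^{-1}\mu_s\lambda_i\right)\le T\,\gamma_n\left(k,\eta/\mu_{\max}\right),
\end{align*}
where $\mu_s$ are the eigenvalues of $\mathbf B$. The explicit factor $T$ comes out of this. The rescaling of the noise level only changes constants, since the scalar bounds depend on $\eta$ only through multiplicative constants.

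The final step plugs in the scalar results of \citet{srinivas2009gaussian} for compact $\mathcal X\subset\mathbb R^d$:
\begin{itemize}
\item for the Gaussian (squared exponential) kernel, $\gamma_n(k)=\mathcal O((\log n)^{d+1})$;
\item for the Matérn kernel with $\nu>1$, $\gamma_n(k)=\mathcal O\!\left(n^{d(d+1)/(2\nu+d(d+1))}\log n\right)$.
\end{itemize}
Multiplying by $T$ gives the claimed rates.

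The main obstacle is the uniformity of constants. Those scalar bounds come from eigendecay arguments in which the constants depend on the noise variance and the kernel scale. I would check that replacing $\eta$ by $\eta/\mu_{\max}$ (a fixed positive number, because $\mathbf A$ is fixed) keeps the bounds valid, with only the hidden constant changing. The cleanest route is the monotonicity $\log(1+cx)\le\max(1,c)\log(1+x)$, which absorbs the scaling factor.
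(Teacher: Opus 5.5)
Your proposal is correct and follows the paper's proof essentially step for step: the Kronecker factorization $\mathbf K_n=\mathbf k_n\otimes\bm a\bm a^\top$, eigenvalues as products, bounding the inner sum over the eigenvalues of $\mathbf B$ by $T\log(1+\eta^{-1}\mu_{\max}\lambda_i)$, and then the scalar rates of \citet{srinivas2009gaussian}. Your two additions are sound refinements: the rank-one identity $\gamma_n(\mathbf K,\eta)=\gamma_n(k,\eta/\Vert\bm a\Vert^2)$ shows the factor $T$ is not even needed, and $\log(1+cx)\le\max(1,c)\log(1+x)$ (Bernoulli) justifies the rescaling step that the paper only asserts. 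One small correction to your aside: putting a CP or TT structure on $\mathbf A$ leaves $\mathrm{vec}(\mathbf A)\mathrm{vec}(\mathbf A)^\top$ rank one, so your general-$\mathbf B$ bound is only needed for kernels outside the separable class defined in the paper.
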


\section{Tensor-output combinatorial bandit Bayesian optimization}
\label{sec:tocbbo}
We now consider a more challenging setting in which only $k<T$ entries of the tensor output contribute to the objective. Formally, we define
$\tilde{\bm f}(\bm x,\bm\lambda)=\mathbf e(\bm\lambda)\mathrm{vec}(\mathbf f(\bm x))\in\tilde{\mathcal{Y}}$, where $\bm\lambda=(\lambda_{1},\ldots,\lambda_{T})^\top$ is a binary indicator vector in $\Lambda=\{\bm\lambda\in \{0,1\}^T: \bm 1_n^\top\bm\lambda=k\}$ and $\mathbf e(\bm\lambda)\in\{0,1\}^{k\times T}$ is a binary selection matrix whose $j$-th row selects the $i_j$-th coordinate of $\mathrm{vec}(\mathbf f(\bm x))$. The goal is to jointly identify $\bm x^\star\in\mathcal{X}$ and the optimal subset of $k$ elements, encoded by $\bm\lambda^\star\in\Lambda$, that maximize
\begin{align}
    \label{tocbbo-obj}
    ({\bm x}^\star,\bm\lambda^\star) =\arg\max_{\bm x\in\mathcal{X},\bm\lambda\in \Lambda} H_f\tilde{\bm f}(\bm x,\bm\lambda),
\end{align}
where $H_f$ is a bounded linear operator $H_f\in\mathcal{L}(\tilde{\mathcal{Y}},\mathbb R)$. 
By interpreting each tensor element as a base arm, any $\bm\lambda\in\Lambda$ corresponds to a super-arm $\mathcal S\subseteq[T]$ of size $k$, where $j\in\mathcal{S}$ if $\lambda_j=1$ and $j\notin\mathcal{S}$ otherwise. 
At each round $i\in[N]$, the learner selects an input $\bm x_i$ together with a super-arm $\mathcal{S}_i=\{i_1,\ldots,i_k\}$. The observation is the partial output $\tilde{\bm y}_i \in \mathbb{R}^k$ indexed by $\mathcal{S}_i$, while entries with indices $j\notin\mathcal S_i$ remain unobserved.

In this section, we develop a tensor-output combinatorial bandit Bayesian optimization (TOCBBO) framework to solve \eqref{tocbbo-obj}. In Subsection \ref{sub:po-togp}, we extend TOGP for partially observed outputs. In Subsection \ref{sub:tocbbo}, we develop an efficient CMAB-UCB2 acquisition strategy that couples UCB-based input selection with CMAB-UCB-based super-arm selection.

\subsection{Partially observed tensor-output Gaussian process}
\label{sub:po-togp}
From the proposed TOGP in \eqref{togp}, the prior of $\tilde{\bm f}:\mathcal{X}\times\Lambda\to\tilde{\mathcal{Y}}$ is a partially observed TOGP (PTOGP):
\begin{align}
    \label{ptogp}
    \tilde{\bm f}(\bm x,\bm\lambda)\sim\mathcal{PTOGP}\!\left(\mathbf e(\bm\lambda)\bm\mu(\bm x),\, \tau^2\mathbf e(\bm\lambda)\mathbf K(\bm x,\bm x')\mathbf e(\bm\lambda')^\top\right), \quad \forall \bm x,\bm x'\in\mathcal{X},\ \bm\lambda,\bm\lambda'\in\Lambda.
\end{align}
Let $\mathbf X_n=(\bm x_1,\ldots,\bm x_n)^\top$, $\bm\Lambda_n=(\bm\lambda_1,\ldots,\bm\lambda_n)^\top$, and $\tilde{\bm Y}_n=(\tilde{\bm y}_1,\ldots,\tilde{\bm y}_n)^\top$ be $n$ partial observations, where $\bm\lambda_i$ encodes the selected super-arm $\mathcal{S}_i$, and $\tilde{\bm y}_i=\tilde{\bm f}(\bm x_i,\bm\lambda_i)+\tilde{\bm\varepsilon}_i$ with $\tilde{\bm\varepsilon}_i\overset{\text{i.i.d.}}{\sim} \mathcal{N}(\mu, \tau^2\mathbf I_k)$. Then, for a new pair $(\bm x,\bm\lambda)$, the posterior of $\tilde{\bm f}(\bm x,\bm\lambda)$ is Gaussian in $\mathbb R^{k}$ with mean and covariance
\begin{align}
    \label{togp-sa-mean}
    &\tilde{\bm\mu}_{n}(\bm x,\bm\lambda)=\mathbf{e}(\bm\lambda)\bm\mu(\bm x)+\sigma^2\mathbf{e}(\bm\lambda)\mathbf K_n^\top(\bm x)\mathbf E_n^\top\tilde{\mathbf\Sigma}_n^{-1}\left(\mathrm{vec}(\tilde{\bm Y}_n)-\mathbf E_n(\bm 1_n\otimes\bm\mu)\right),\\
    \label{togp-sa-cov}
    &\tilde{\mathbf K}_{n}(\bm x,\bm x';\bm\lambda,\bm\lambda')=\sigma^2\big[\mathbf{e}(\bm\lambda)\mathbf K(\bm x,\bm x')\mathbf{e}(\bm\lambda')^\top-\sigma^2\mathbf{e}(\bm\lambda)\mathbf K_n^\top(\bm x)\mathbf E_n\tilde{\mathbf\Sigma}_n^{-1}\mathbf E_n^\top\mathbf K_n(\bm x')\mathbf{e}(\bm\lambda')^\top\big],
\end{align}
where $\tilde{\mathbf\Sigma}_n=\sigma^2\mathbf E_n\mathbf K_n\mathbf E_n^\top+\tau^2\mathbf I_{nk}$, and $\mathbf E_n \in \mathbb{R}^{nk\times nT}$ is a $n\times n$ block-diagonal matrix with the $i$-block given by $\mathbf e\big(\bm\lambda_i)$. It is easy to verify that the posterior covariance is symmetric and positive semidefinite. We estimate the hyperparameters of PTOGP via maximum likelihood. Detailed estimation, algorithm, and complexity analysis are presented in Appendix \ref{app:likeli}.

\subsection{CMAB-UCB2 Acquisition Strategy}
\label{sub:tocbbo}
Building on the PTOGP, we develop TOCBBO to sequentially select query inputs $\{\bm x_{1},\ldots,\bm x_{N}\}$ together with their associated super-arm indicators $\{\bm\lambda_{1},\ldots,\bm\lambda_{N}\}$. Directly optimizing \eqref{tocbbo-obj} over both $\bm x$ and $\bm\lambda$ is computationally intractable, since identifying the optimal super-arm of size $k$ from $T$ arms requires a combinatorial search over $\binom{T}{k}$ candidates, which becomes prohibitive when coupled with continuous optimization over $\mathcal X$. To address this challenge, we propose a CMAB-UCB2 criterion that decouples the joint optimization into two sequential steps.

At round $n+1$, let $(\bm x_n^\star,\bm\lambda_n^\star)$ denote the best pair observed so far, that is, $\{\bm x_n^\star, \bm\lambda_n^\star\} = \arg\max_{\{\bm x_{i}, \bm \lambda_{i}\},\,i=1,\ldots,n}H_f\tilde{\bm f}(\bm x_{i}, \bm \lambda_{i})$. In the first step, we fix the super-arm to $\bm\lambda_n^\star$ and select the next input by maximizing a UCB acquisition function under this fixed super-arm:
\begin{align}
    \label{tocbbo-ucb}
    \bm x_{n+1}=\arg\max_{\bm x\in\mathcal{X}}H_f\tilde{\bm\mu}_{n}(\bm x,\bm\lambda_n^\star)+\tilde{\beta}_n\Vert\tilde{\mathbf K}_{n}(\bm x,\bm x;\bm\lambda_n^\star,\bm\lambda_n^\star)\Vert^{1/2}.
\end{align}

In the second step, given the chosen input $\bm x_{n+1}$, selecting $\bm\lambda_{n+1}$ reduces to a combinatorial bandit problem. To this end, we adopt the CMAB-UCB criterion by constructing a UCB for each super-arm and selecting the one that maximizes the upper confidence value:
\begin{align}
    \label{tocbbo-cmab-ucb} \bm\lambda_{n+1}=\arg\max_{\bm\lambda\in\Lambda}H_f\tilde{\bm\mu}_{n}(\bm x_{n+1},\bm\lambda)+\tilde{\rho}_n\Vert\tilde{\mathbf K}_{n}(\bm x_{n+1},\bm x_{n+1};\bm\lambda,\bm\lambda)\Vert^{1/2}.
\end{align}
Here, $\tilde{\beta}_n$ and $\tilde{\rho}_n$ are tuning parameters controlling the trade-off between exploration and exploitation. Detailed algorithm and its computational complexity analysis are provided in Appendix \ref{app:algo}.

We further analyze the regret bound of TOCBBO under the following conditions. The regret for CBBO is defined as follows.
\begin{definition}
    \label{def2}
    At round $n$, TOCBBO selects an input an input $\bm x_n\in\mathcal X$ and a super-arm $\mathcal S_n$ encoded by $\bm\lambda_n\in\Lambda$. The instantaneous regret is $r_n =H_f\tilde{\bm f}(\bm x^\star,\bm\lambda^\star)- H_f\tilde{\bm f}(\bm x_{n},\bm\lambda_n)$, aand the cumulative regret up to round $N$ is $R_{N} = \sum_{n=1}^{N}\left[H_f\tilde{\bm f}(\bm x^\star,\bm\lambda^\star)-H_f\tilde{\bm f}(\bm x_{n},\bm\lambda_n)\right]$.
\end{definition}
\begin{assumption}
    \label{assum 3} 
    The operator $H_f$ in \eqref{tocbbo-obj} is $H$-Lipschitz with respect to $\tilde{\bm f}$ under the $l_2$ norm, i.e., for $\forall\bm x_i,\bm x_j\in\mathcal{X}$ and $\bm\lambda_i,\bm\lambda_j\in\Lambda$, $\vert H_f\tilde{\bm f}(\bm x_i,\bm\lambda_i)-H_f\tilde{\bm f}(\bm x_j,\bm\lambda_j)\vert\leq H\Vert \tilde{\bm f}(\bm x_i,\bm\lambda_i)-\tilde{\bm f}(\bm x_j,\bm\lambda_j)\Vert$ holds.
\end{assumption}
Assumption \ref{assum 3} ensures that $H_f\tilde{\bm f}$ varies smoothly to changes for partially observed tensor outputs. 
\begin{theorem}
    \label{the 2}
    Under Assumption \ref{assum 1} and Assumption \ref{assum 2}, denote $\tilde C_n=\sup_{\boldsymbol x\in\mathcal{X}}\frac{tr(\tilde{\mathbf{K}}_n(\boldsymbol x,\boldsymbol{x};\bm\lambda^\star_n,\bm\lambda^\star_n)))}{\lambda^{(n)}_{max}(\boldsymbol{x},\bm\lambda^\star_n)}$, where $\lambda^{(n)}_{max}(\boldsymbol{x},\bm\lambda^\star_n)$ is the largest eigenvalue of $\tilde{\mathbf{K}}_n(\boldsymbol{x},\boldsymbol{x};\bm\lambda^\star_n,\bm\lambda^\star_n)$. For any $\delta\in(0,1)$ and $\eta>0$, TOCBBO with $\tilde\beta_n = \sqrt{\tilde {C}_n} + 2d\log(\frac{rdn^2(b\sqrt{\log(da/\delta)} + \tilde{C}_\nabla)}l{\delta})$ and $\tilde\rho_n=\sqrt{2\log\left(\frac{NT\pi_n}{\delta}\right)}$ holds that, $\Pr\left(R_{N} \leq H\left(\sqrt{C_3N \gamma_{N}(\tilde{\mathbf K},\eta) }\tilde\beta_N+\frac{\pi^2}{6}+\sqrt{C_4TN\tilde{\gamma}_{N}\big(\tilde{\mathbf K},\eta)}\tilde\rho_N\right)\right)\geq 1-\delta$, where $C_3,C_4>0$ are constants, $\pi_n>0$ is a sequence such that $\sum_{n=1}^\infty 1/\pi_n=1$, $\gamma_{n}(\tilde{\mathbf K},\eta)=\max_{\mathbf X_N \subset \mathcal X} \frac{1}{2} \log \det \left(\mathbf I_{kN} + \eta^{-1}\mathbf{E}_N\mathbf K_N\mathbf{E}^\top_N\right)$, and $\tilde\gamma_{N}(\tilde{\mathbf K},\eta)=\max_{\bm\Lambda_N \subset \Lambda} \frac{1}{2} \log \det \left(\mathbf I_{kN} + \eta^{-1}\mathbf{E}_N\mathbf K_N\mathbf{E}^\top_N\right)$ is the maximum information gain for super-arms.
\end{theorem}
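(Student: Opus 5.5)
The plan is to split the instantaneous regret along the two-step structure of CMAB-UCB2. With $\bm\lambda_{n}^\star$ the incumbent super-arm used when choosing $\bm x_{n}$, write
\begin{align*}
r_n=\big[H_f\tilde{\bm f}(\bm x^\star,\bm\lambda^\star)-H_f\tilde{\bm f}(\bm x_n,\bm\lambda^\star_{n-1})\big]+\big[H_f\tilde{\bm f}(\bm x_n,\bm\lambda^\star_{n-1})-H_f\tilde{\bm f}(\bm x_n,\bm\lambda_n)\big]=:r_n^{(1)}+r_n^{(2)}.
\end{align*}
The first term is a TOBO-type regret in $\bm x$ for a fixed super-arm. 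The second is a combinatorial-bandit regret at the fixed input $\bm x_n$. I would bound $\sum_n r_n^{(1)}$ by reusing the argument of Theorem \ref{the 1} with $\mathbf K$ replaced by the partially observed kernel $\tilde{\mathbf K}$, and $\sum_n r_n^{(2)}$ by a finite-arm UCB argument over $\Lambda$. A union bound with confidence $\delta/2$ for each piece then gives the stated probability.

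\textbf{Continuous part.} Lemma \ref{lem 1} applies to $\mathbf e(\bm\lambda)\mathrm{vec}(\mathbf f)$, since a fixed linear selection of a GP is again a GP whose gradient obeys the same sub-Gaussian sup bound with constant $\tilde C_\nabla$. As in Theorem \ref{the 1}, I would discretize $\mathcal X\subseteq[0,r]^d$ into grids $\mathcal D_n$ of size $(rdn^2(b\sqrt{\log(da/\delta)}+\tilde C_\nabla))^d$. On this event, Gaussian concentration for the vector $\tilde{\bm f}$ gives, via $\mathbb E\|\bm g\|\le\sqrt{\mathrm{tr}}$ together with the $\sqrt{\lambda_{\max}}$ tail, that $\|\tilde{\bm f}-\tilde{\bm\mu}_{n-1}\|\le\tilde\beta_n\|\tilde{\mathbf K}_{n-1}\|^{1/2}$ holds uniformly on $\mathcal D_n$; this is where $\sqrt{\tilde C_n}$ enters. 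The discretization error from $\bm x^\star$ to the nearest grid point is at most $1/n^2$, and summing gives the $\pi^2/6$ term. By Assumption \ref{assum 3} and the UCB rule \eqref{tocbbo-ucb}, $r^{(1)}_n\le 2H\tilde\beta_n\|\tilde{\mathbf K}_{n-1}(\bm x_n,\bm x_n;\bm\lambda^\star,\bm\lambda^\star)\|^{1/2}+H/n^2$. One point needs care here: optimality of $\bm\lambda^\star$ relative to the incumbent. I would handle it by comparing against $(\bm x^\star,\bm\lambda^\star)$ and using that the UCB at the incumbent is at least the true value there; any residual is absorbed into $r^{(2)}$. Cauchy--Schwarz and the standard bound $\sum_n\|\tilde{\mathbf K}_{n-1}\|\le C\sum_n\log\det(\mathbf I+\eta^{-1}\cdot)$ (using $\|\cdot\|\le\mathrm{tr}$ and $x\le C\log(1+x)$ for bounded $x$) then give $H\sqrt{C_3N\gamma_N(\tilde{\mathbf K},\eta)}\tilde\beta_N$.

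\textbf{Combinatorial part.} For fixed $\bm x_n$ each base arm $j\in[T]$ is a scalar Gaussian. A union bound over $N$ rounds, $T$ arms and the weights $\pi_n$ gives, with $\tilde\rho_n=\sqrt{2\log(NT\pi_n/\delta)}$, per-arm confidence intervals that hold simultaneously. These aggregate to super-arm intervals with width $\tilde\rho_n\|\tilde{\mathbf K}_{n-1}(\bm x_n,\bm x_n;\bm\lambda,\bm\lambda)\|^{1/2}$, up to a $\sqrt T$ loss from summing arm-level widths. The rule \eqref{tocbbo-cmab-ucb} then yields $r^{(2)}_n\le 2H\tilde\rho_n\sqrt T\|\tilde{\mathbf K}_{n-1}(\bm x_n,\bm x_n;\bm\lambda_n,\bm\lambda_n)\|^{1/2}$. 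Summing with Cauchy--Schwarz, and maximizing the information gain over the super-arm sequence $\bm\Lambda_N$ rather than the inputs, gives $H\sqrt{C_4TN\tilde\gamma_N}\tilde\rho_N$.

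\textbf{Main obstacle.} The hardest step is the decomposition itself. The input step is optimized under the incumbent $\bm\lambda^\star_{n-1}$, not $\bm\lambda^\star$, so the comparison with $(\bm x^\star,\bm\lambda^\star)$ does not follow directly from UCB optimality. I would first try to chain the two UCB inequalities: the UCB of $(\bm x_n,\bm\lambda_n)$ under $\tilde\rho$ is at least the UCB of $(\bm x_n,\bm\lambda^\star_{n-1})$, which in turn is at least the UCB of $(\bm x^\star,\bm\lambda^\star_{n-1})$ under $\tilde\beta$. The remaining gap between $\bm\lambda^\star_{n-1}$ and $\bm\lambda^\star$ at $\bm x^\star$ would then be charged to the bandit term through the per-arm confidence bounds. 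If this chaining fails, I would fall back on a simpler route: bound the whole regret by confidence widths at the points actually played, which still produces the two stated sums.
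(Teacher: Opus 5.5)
The gap is the step you yourself call the main obstacle, and neither of your proposed fixes closes it. After using optimism of the input step \eqref{tocbbo-ucb} under the incumbent, your first term at best gives
\[
r_n^{(1)}\le \Delta_n+2H\tilde\beta_n\big\Vert\tilde{\mathbf K}_{n-1}(\bm x_n,\bm x_n;\bm\lambda^\star_{n-1},\bm\lambda^\star_{n-1})\big\Vert^{1/2}+\frac{H}{n^2},\qquad \Delta_n:=\tilde h(\bm x^\star,\bm\lambda^\star)-\tilde h(\bm x^\star,\bm\lambda^\star_{n-1})\ge 0 .
\]
Nothing you propose controls $\sum_n\Delta_n$.

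You cannot charge $\Delta_n$ to the bandit term. The rule \eqref{tocbbo-cmab-ucb} is optimistic only over $\bm\lambda$ at the fixed input $\bm x_n$, so per-arm intervals at $\bm x_n$ say nothing about how super-arms compare at $\bm x^\star$. The incumbent rule also relates $\bm\lambda^\star_{n-1}$ only to pairs already played, not to $\tilde h(\bm x^\star,\cdot)$. Your chain of UCB inequalities therefore ends at the index of $(\bm x^\star,\bm\lambda^\star_{n-1})$, not of $(\bm x^\star,\bm\lambda^\star)$. The fallback (``widths at played points'') fails for the same reason: the decoupled two-step rule never produces an index that dominates the joint UCB at $([\bm x^\star]_n,\bm\lambda^\star)$.

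Nothing in your argument excludes the following scenario. The incumbent stays at some $A\neq\bm\lambda^\star$, and the iterates settle near $\arg\max_{\bm x}\tilde h(\bm x,A)$ with $\bm\lambda_n=A$. Then every width you sum goes to zero while $r_n$ stays bounded away from zero. An extra ingredient is needed, for example a lemma forcing the incumbent to equal $\bm\lambda^\star$ after finitely many rounds (or $\sum_n\Delta_n=o(N)$), or an input step that maximizes $\max_{\bm\lambda}$ of the index.

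The paper's proof does not supply this either. It puts $\bm\lambda^\star$ in the middle of the decomposition, so $r_{2n}=\tilde h(\bm x_n,\bm\lambda^\star)-\tilde h(\bm x_n,\bm\lambda_n)$ is handled by \eqref{tocbbo-cmab-ucb} (citing \cite{accabi2018gaussian}). It then bounds $r_{1n}=\tilde h(\bm x^\star,\bm\lambda^\star)-\tilde h(\bm x_n,\bm\lambda^\star)$ by $2H\tilde\beta_{n-1}\Vert\tilde{\mathbf K}_{n-1}(\bm x_n,\bm x_n;\bm\lambda^\star,\bm\lambda^\star)\Vert^{1/2}$ ``as in Theorem~\ref{the 1}''. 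That step tacitly treats $\bm x_n$ as the UCB maximizer under $\bm\lambda^\star$ rather than under the incumbent. Your decomposition makes the obstruction explicit; the paper's route hides it but does not remove it.

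Two further steps also need repair.

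First, the bound $\sum_n\Vert\tilde{\mathbf K}_{n-1}\Vert\le C\sum_n\log\det(\cdots)\le C\gamma_N$ telescopes into an information gain only when the posterior covariance is taken at the coordinates actually observed at round $n$, namely $\mathbf e(\bm\lambda_n)\mathrm{vec}(\mathbf f(\bm x_n))$. The widths in $r^{(1)}_n$ sit at $(\bm x_n,\bm\lambda^\star_{n-1})$ (you even write $\bm\lambda^\star$), which are unobserved whenever $\bm\lambda_n\neq\bm\lambda^\star_{n-1}$. That sum is therefore not bounded by $\gamma_N(\tilde{\mathbf K},\eta)$; the paper makes the same move. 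Your chaining idea would fix this by moving all width to $(\bm x_n,\bm\lambda_n)$. However, its middle link compares a $\tilde\rho_n$-index with a $\tilde\beta_n$-index at the same pair, so it needs $\tilde\rho_n\ge\tilde\beta_n$. The stated schedules do not guarantee this: with $\pi_n=\pi^2n^2/6$, $\tilde\beta_n$ grows like $d\log n$ while $\tilde\rho_n$ grows only like $\sqrt{\log(Nn)}$.

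Second, per-arm intervals give $\Vert\tilde{\bm f}-\tilde{\bm\mu}_{n-1}\Vert\le\tilde\rho_n\sqrt{\mathrm{tr}\,\tilde{\mathbf K}_{n-1}}\le\tilde\rho_n\sqrt{k}\,\Vert\tilde{\mathbf K}_{n-1}\Vert^{1/2}$. After applying Assumption~\ref{assum 3}, the index in \eqref{tocbbo-cmab-ucb}, which lacks the factor $H\sqrt{k}$, is not guaranteed to be optimistic. The argmax comparison behind your bound on $r^{(2)}_n$ therefore requires rescaling $\tilde\rho_n$ by that factor before the selection rule is invoked. This is harmless for the rate but must be done explicitly.
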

The proof is provided in Appendix \ref{app:the2}. Theorem \ref{the 2} implies that TOCBBO achieves a sublinear regret upper bound with high probability.
\begin{proposition}
    \label{pro 3}
    Suppose the kernel $\mathbf{K}(\bm x, \bm x')$ follows the separable structure specified in Definition \ref{def-sep ker}. The maximum information gain $\gamma_{n}(\tilde{\mathbf{K}}, \eta)$ and $\tilde{\gamma}_{n}(\tilde{\mathbf{K}}, \eta)$ are $\mathcal{O}\left(T (\log n)^{d+1}\right)$ and $\mathcal{O}\left(T n^{\frac{d(d+1)}{2\nu + d(d+1)}} \log n\right)$ for the Gaussian kernel and the Matérn kernel ($\nu > 1$), respectively, where $d$ denotes the input dimension. Details are provided in Appendix \ref{app:pro3}.       
\end{proposition}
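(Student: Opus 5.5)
The plan is to reduce both information gains to the separable, full-observation quantity already controlled in Proposition \ref{pro 2}, and then invoke that result.

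\textbf{Step 1 (monotonicity under selection).} Under Definition \ref{def-sep ker} we have $\mathbf K_N=\mathbf k_N\otimes \bm a\bm a^\top$, where $\bm a=\mathrm{vec}(\mathbf A)$ and $\mathbf k_N=[k(\bm x_i,\bm x_j)]_{i,j}$. By Sylvester's determinant identity,
\[
\log\det(\mathbf I_{kN}+\eta^{-1}\mathbf E_N\mathbf K_N\mathbf E_N^\top)=\log\det(\mathbf I_{NT}+\eta^{-1}\mathbf K_N^{1/2}\mathbf E_N^\top\mathbf E_N\mathbf K_N^{1/2}).
\]
Each block $\mathbf e(\bm\lambda_i)^\top\mathbf e(\bm\lambda_i)$ is a diagonal $0/1$ projection, so $\mathbf E_N^\top\mathbf E_N\preceq\mathbf I_{NT}$. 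Since $\log\det$ is monotone in the Loewner order, the right-hand side is at most $\log\det(\mathbf I_{NT}+\eta^{-1}\mathbf K_N)$.

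This holds for every $\mathbf X_N$ and every $\bm\Lambda_N$. Taking maxima over inputs gives $\gamma_N(\tilde{\mathbf K},\eta)\le\gamma_N(\mathbf K,\eta)$. For $\tilde\gamma_N$, where the maximum is over $\bm\Lambda_N$ at arbitrary inputs, bounding by the supremum over inputs gives the same inequality $\tilde\gamma_N(\tilde{\mathbf K},\eta)\le\gamma_N(\mathbf K,\eta)$.

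\textbf{Step 2 (invoke Proposition \ref{pro 2}).} Proposition \ref{pro 2} gives $\gamma_N(\mathbf K,\eta)=\mathcal O(T(\log N)^{d+1})$ for the Gaussian base kernel and $\mathcal O(TN^{d(d+1)/(2\nu+d(d+1))}\log N)$ for Matérn with $\nu>1$. Both bounds transfer immediately to $\gamma_N(\tilde{\mathbf K},\eta)$ and $\tilde\gamma_N(\tilde{\mathbf K},\eta)$.

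\textbf{Sharper route via the rank-one factor.} Alternatively, $\mathbf E_N\mathbf K_N\mathbf E_N^\top=\mathbf D(\mathbf k_N\circ\bm 1\bm 1^\top)\mathbf D$ with $\mathbf D=\mathrm{blkdiag}(\mathbf e(\bm\lambda_i)\bm a)$. This matrix has rank at most $N$, with eigenvalues bounded by $\Vert\bm a\Vert^2$ times those of $\mathbf k_N$. Using the Srinivas et al.\ eigen-decay bounds for $k$ would then give the scalar rates multiplied by the constant $\Vert\bm a\Vert^2$, and the factor $T$ in the stated bound is a safe upper bound.

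\textbf{Main obstacle.} The one step needing care is the determinant-monotonicity argument under the selection matrix $\mathbf E_N$, specifically verifying the projection property $\mathbf E_N^\top\mathbf E_N\preceq\mathbf I_{NT}$, since the rows of each $\mathbf e(\bm\lambda)$ are distinct unit vectors. The rate bookkeeping afterwards is inherited directly from Proposition \ref{pro 2}.
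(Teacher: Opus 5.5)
Your proposal follows the paper's proof essentially verbatim. Both arguments use Sylvester's identity, the bound $\mathbf E_N^\top\mathbf E_N\preceq\mathbf I_{NT}$ and Loewner monotonicity of $\log\det$ to bound $\gamma_N(\tilde{\mathbf K},\eta)$ and $\tilde\gamma_N(\tilde{\mathbf K},\eta)$ by the full-observation $\gamma_N(\mathbf K,\eta)$, and then import the rates of Proposition~\ref{pro 2}. Your additional rank-one remark is also correct once written as $\mathbf D\mathbf k_N\mathbf D^\top$ with $\mathbf D\in\mathbb R^{kN\times N}$, and it shows that the factor $T$ is unnecessary, which the paper's argument does not make explicit.
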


\section{Experiments}
\label{sec:experiments}
We evaluate TOBO and TOCBBO on both synthetic and real-case data, and compare them with baselines where the tensor output is vectorized and MOGPs are used as surrogate models. Specifically, we consider three GPs in the literature: (1) sMTGP: the scalable multi-task GP \cite{kia2018scalable}; (2) MLGP: the multi-linear GP \cite{yu2018tensor}; and (3) MVGP: the multi-variate GP \cite{chen2020multivariate}. For each GP, we examine two sequential BO sampling strategies: UCB-based BO and random sampling. In addition, we replace the UCB acquisition in TOBO and TOCBBO with random sampling to construct an ablation baseline, denoted as TOGP-RS. Details of all baseline settings are provided in Appendix~\ref{app:detail-setting}.

\subsection{Synthetic experiments}
\label{sub:sy}
We generate synthetic functions of the form $\mathbf f(\bm x)=\mathbf B\otimes_1\mathbf U_1\otimes_{2}\ldots\otimes_{m-1}\mathbf U_{m-1}\otimes_m\mathbf g(\bm x)$, where each element of $\mathbf B\in\mathbb R^{P_1\times\ldots P_m}$ is independently sampled from $U(0,1)$, the $ij$-th element of $\mathbf U_l\in\mathbb R^{P_l\times T_l}$ is defined as $li\cos(ijl/2)+\sin(li)$, and $\mathbf g(\bm x)=\left(\sin(5\bm x),\cos(\bm x)\right)\in\mathbb R^{P_m\times T_m}$. Here $P_m=d$ and $\bm x\in[0,1]^d$. We consider three parameter settings: $\mathbf f(\bm x)$: (1) $m=3$, $(T_1,T_2,T_3)=(2,4,2)$, $(P_1,P_2,P_3)=(3,3,3)$; (2) $m=2$, $(T_1,T_2)=(3,2)$, $(P_1,P_2)=(3,2)$; and (3) $m=3$, $(T_1,T_2,T_3)=(4,5,2)$, $(P_1,P_2,P_3)=(3,3,3)$. Observations are collected as $\mathbf y_i = \mathbf f(\bm x_i) + \bm\varepsilon_i$, where $\bm\varepsilon_i\overset{\text{i.i.d.}}{\sim}N(0,0.1^2\mathbf I)$. For CBBO tasks, we set $k=T/6$.

We generate $n_{\text{train}}=10d$ training samples and $n_{\text{test}}=5d$ testing samples via a Latin hypercube design \citep{santner2003design}. The training set is used for hyperparameter estimation, and predictive performance is evaluated on the test set in terms of NLL, MAE, and $\Vert\text{Cov}\Vert$, with details provided in Appendix \ref{app:detail-setting}. To balance modeling flexibility and computational cost, we use the separable tensor-output kernel in \eqref{sep ker} for Settings (1) and (2), and the non-separable tensor-output kernel in \eqref{non-sep ker} for Setting (3). The results are summarized in Table \ref{sy-gp-pre}.
\begin{table}[!htbp]
  \caption{{The prediction performance of GPs in the three synthetic settings.}}
  \label{sy-gp-pre}
  \centering
  \scalebox{0.75}{
  \begin{tabular}{cccccccccc}
    \toprule
    & \multicolumn{3}{c}{Setting (1)} & \multicolumn{3}{c}{Setting (2)} & \multicolumn{3}{c}{Setting (3)}\\
    \midrule
    & NLL & MAE &$\Vert \text{Cov}\Vert$ &NLL & MAE &$\Vert \text{Cov}\Vert$ &NLL & MAE &$\Vert \text{Cov}\Vert$ \\ 
    \midrule
    TOGP &\textbf{503.0} & \textbf{0.1571} &2.02 &\textbf{-18.1}  &\textbf{0.1052} &\textbf{0.04} & \textbf{-3923.1}&\textbf{0.1372}  &2.82\\
    sMTGP &749.4 &0.1684  &\textbf{1.44} & -5.0 &0.1566 &0.06  &-3743.0 &0.1501 &22.01  \\
    MLGP&707937.1 &0.9428  &67.00 &7066.9  &0.8789 &5.12 &-55800.7 &1.1670 & \textbf{0.06}\\
    MVGP&11152.2 &0.6746  &22.20 &46.54  &0.1784 &0.10 &-2583.1&1.0000 &142.72\\
    \bottomrule
  \end{tabular}}
\end{table}
As shown, our proposed method achieves the lowest NLL and MAE, indicating that the TOGP model provides the highest prediction accuracy. Among the three baselines, MLGP performs the worst because its fitted covariance matrix becomes singular under our experimental configuration. sMTGP outperforms MVGP because it considers modeling each mode of the tensor output, and MVGP ignores the tensor structure by vectorizing it.

Table \ref{sy-bo} summarizes the optimization performance of different methods for the BO and CBBO tasks. We set $N=10d$ and evaluate each method using $\text{MSE}_x$, $\text{MAE}_y$, and Acc, as defined in Appendix \ref{app:detail-setting}.
\begin{table}[!htbp]
  \caption{{The optimization performance of different methods in the three synthetic settings.}}
  \label{sy-bo}
  \centering
  \scalebox{0.75}{
  \begin{tabular}{ccccccccccc}
    \toprule   
    && \multicolumn{3}{c}{Setting (1)} & \multicolumn{3}{c}{Setting (2)} & \multicolumn{3}{c}{Setting (3)}\\
    \midrule
    && $\text{MSE}_x$ &$\text{MAE}_y$& Acc&$\text{MSE}_x$&$\text{MAE}_y$& Acc&$\text{MSE}_x$&$\text{MAE}_y$&Acc\\
    \midrule    
    \multirow{8}{*}{BO}&TOBO&\textbf{0.0000}&\textbf{0.0008}&-&\textbf{0.0003}&\textbf{0.0350}&-&\textbf{0.0001}&\textbf{0.0050}&-\\
    &sMTGP-UCB&0.0001&0.0031&-&\textbf{0.0003}&0.0361&-&0.0048&0.0590&-\\
    &MLGP-UCB&0.0433&0.3793&-&0.0512&0.9295&-&0.0342&0.6263&-\\
    &MVGP-UCB&0.0015&0.0523&-& 0.0044&0.0550&-&0.0342&0.6263&-\\
    &TOGP-RS&0.0893&0.3145&-&0.0026&0.0351&-&0.0106&0.1526&-\\
    &sMTGP-RS&0.0251&0.3242&-&0.0206&0.3684&-&0.0084&0.1223&-\\
    &MLGP-RS&0.0433&0.3793&-&0.0435&0.7976&-&0.0075&0.0934&-\\
    &MVGP-RS&0.0148&0.2036&-&0.0157&0.2697&-&0.0084&0.1223&-\\
    \midrule      
    \multirow{8}{*}{CBBO}&TOCBBO&\textbf{0.0023}&\textbf{0.0172}&\textbf{1.00}&\textbf{0.0000}&\textbf{0.0000}&\textbf{1.00}&\textbf{0.0021}&\textbf{0.0145}&\textbf{1.00}\\
    &sMTGP-UCB&0.1832&0.5614&0.67&\textbf{0.0000}&\textbf{0.0000}&\textbf{1.00}&0.0075&0.2171&0.86\\
    &MLGP-UCB&0.0667&0.6527&0.33&0.1826&0.0779&\textbf{1.00}&  0.2070&0.7105&0.43\\
    &MVGP-UCB&0.0032&0.0285&\textbf{1.00}&0.0725&0.0312&\textbf{1.00}&0.0151&0.1988&0.71\\ 
    &TOGP-RS&0.0438&0.5319&0.67&0.0908&0.0395& \textbf{1.00}&0.0512&0.8793&0.43\\
    &sMTGP-RS&0.3151&0.5882&0.67&0.1826&0.0779&\textbf{1.00}&  0.0117&0.9453& 0.29\\
    &MLGP-RS&0.1053&0.6909&0.33&0.1826&0.0779&\textbf{1.00}&0.0117&0.9453&0.29\\
    &MVGP-RS&0.1313&0.5975&0.33&0.1489&0.0654&0.00&  0.0512&0.8793&0.43\\
    \bottomrule
  \end{tabular}}
\end{table}
For each surrogate, the UCB-based strategy consistently outperforms random sampling. This is intuitive due to UCB's better theoretical guarantees. Across all GPs, TOBO and TOCBBO achieve the smallest $\mathrm{MSE}_x$ and $\mathrm{MAE}_y$, indicating that the selected pairs yield objective values closest to the true optimum. Among the baseline methods, sMTGP-UCB delivers the second-best performance, followed by MVGP-UCB, while MLGP-UCB performs the worst. This result is consistent with their modeling abilities shown in Table \ref{sy-gp-pre}.

Finally, we provide each round's logarithmic instantaneous regret for different methods for BO and CBBO in Figure \ref{sy-ins-re}. 
\begin{figure}[!htbp]
    \centering
    \subfigure{\includegraphics[width=0.8\linewidth]{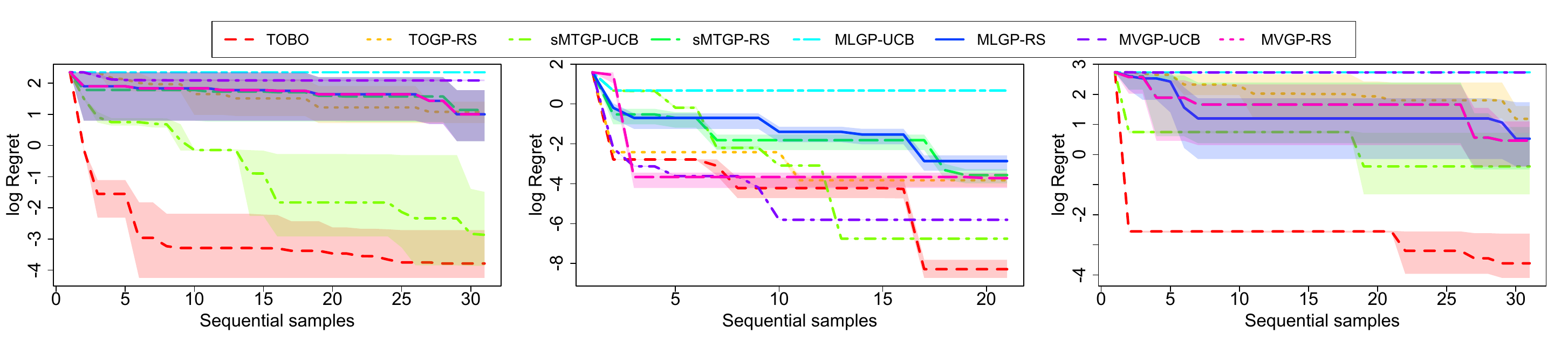}}
    \subfigure{\includegraphics[width=0.8\linewidth]{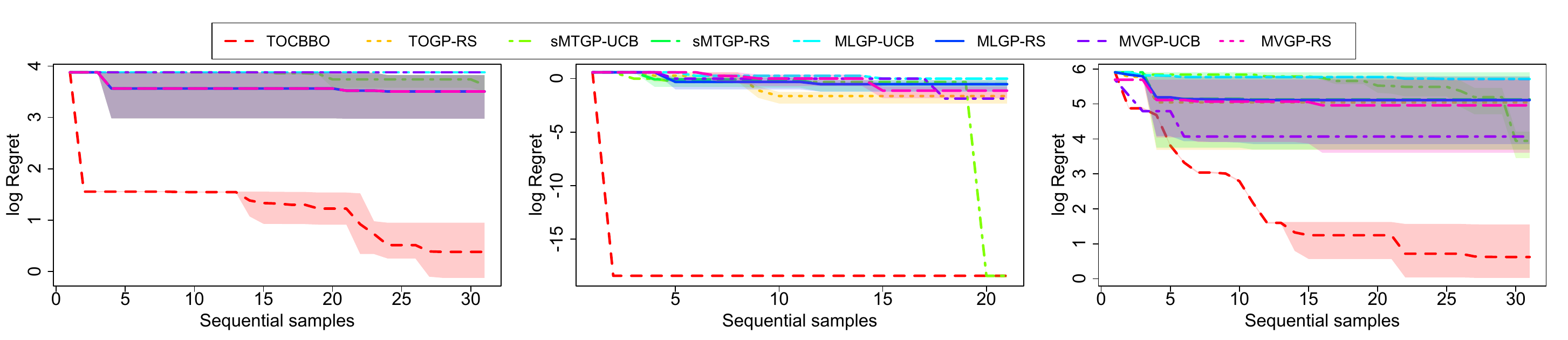}}
    \caption{Each round's logarithmic instantaneous regret of different methods in the Setting (1) (L), (2) (M), and (3) (R) for BO (Top row) and CBBO (Bottom row).}
    \label{sy-ins-re}
\end{figure}
We can observe that TOBO and TOCBBO consistently achieve the lowest instantaneous regret across all three settings, highlighting their superiority. Additional synthetic results are provided in Appendix \ref{app:add-sy}.

\subsection{Case studies}
\label{sub:case}
We further apply apply TOBO and TOCBBO to four real-world datasets:
(1) \textbf{CHEM} \citep{shields2021bayesian}: input $\bm x\in\mathbb{R}^{2}$ and output $\mathbf y\in\mathbb R^{4\times3\times3}$;  
(2) \textbf{MAT} \citep{wang2020harnessing}: input $\bm x\in\mathbb{R}^{4}$ and output $\mathbf y\in\mathbb R^{5\times4\times4}$;  
(3) \textbf{PRINT} \citep{zhai2023robust}: input $\bm x\in\mathbb{R}^{5}$ and output $\mathbf y\in\mathbb R^{3\times4\times3}$;  
(4) \textbf{REEN}: input $\bm x\in\mathbb{R}^{6}$ and output $\mathbf y\in\mathbb R^{10\times2}$. 
Detailed description of these datasets is provided in Appendix \ref{app:detail-setting}.

Since the REEN dataset provides fully observed tensor outputs, we first evaluate surrogate modeling performance by randomly selecting $30$ samples for training and $5$ samples for testing. Table \ref{c4-gp-pre} reports the predictive performance of the four GP surrogates on the test set. TOGP achieves the best accuracy in terms of NLL and MAE.
Figure \ref{case4-obj} further compares optimization performance on both BO (left) and CBBO (right), where TOBO and TOCBBO consistently perform the best, demonstrating their effectiveness on real-world black-box systems.
\begin{table}[H]
  \caption{The prediction performance of GPs in the REEN dataset.}
  \label{c4-gp-pre}
  \centering
  \scalebox{0.7}{\begin{tabular}{ccccc}
    \toprule
    & TOGP & sMTGP & MLGP & MVGP\\ 
    \midrule
    NLL& \textbf{15.6664}& 33.3198 &88.2722& 48.7167\\
    MAE& \textbf{0.0883}&  0.0993 &  0.0929 & 0.1054\\
    $\Vert \text{Cov}\Vert$& 0.4918&  \textbf{0.3555} &  0.4318 & 0.3711\\
    \bottomrule
  \end{tabular}}
\end{table}

\begin{figure}[H]
    \begin{center}
    \includegraphics[width=0.7\linewidth]{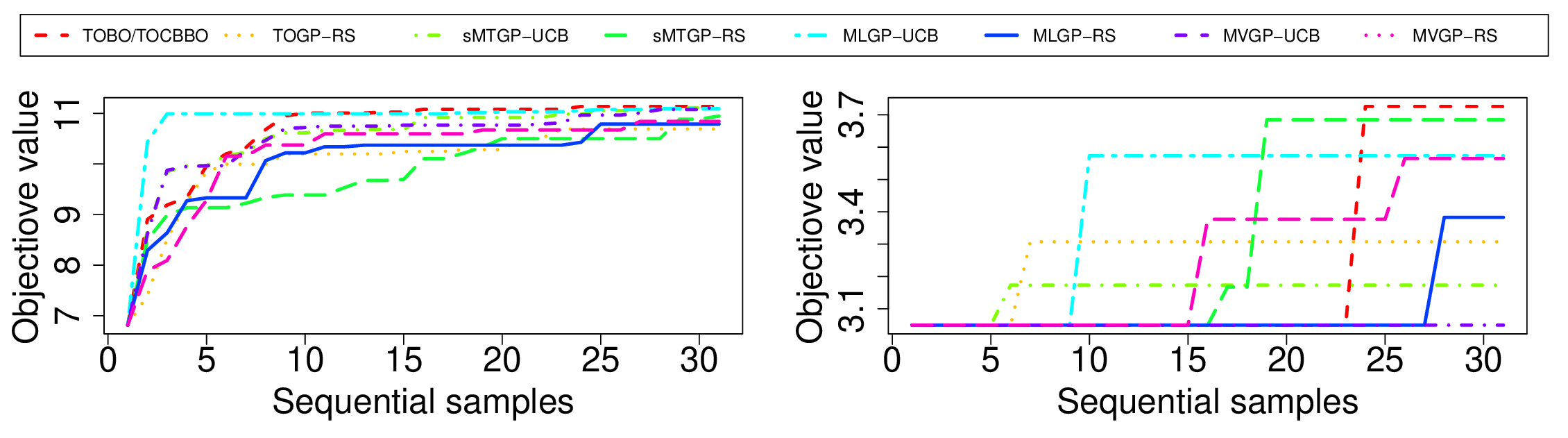}
    \end{center}
    \caption{Each round's optimal objective value in REEN for BO (L) and CBBO (R).}
    \label{case4-obj}
\end{figure}

The remaining three datasets contain only partially observed tensor outputs, we only evaluate optimization performance under the CBBO setting. Figure \ref{case4-obj} shows that TOCBBO can identify the optimal input consistently using fewer rounds than the baselines, further demonstrating its superior effectiveness.
\begin{figure}[H]
    \begin{center}
    \includegraphics[width=0.8\linewidth]{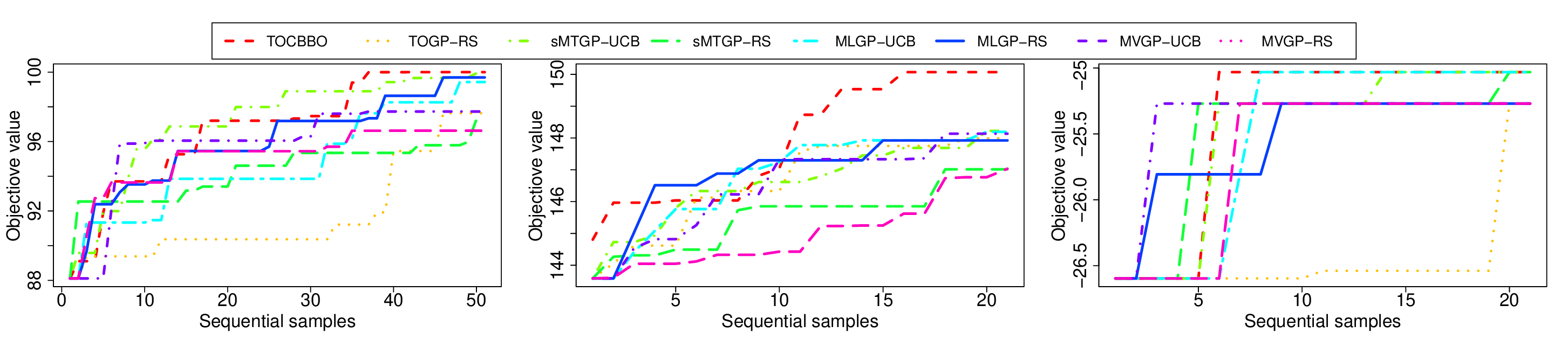}
    \end{center}
    \caption{Each round’s optimal objective value in CHEM (L), MAT (M), and PRINT (R).}
    \label{case-cbbo-obj}
\end{figure}

\section{Conclusion and discussion}
In this work, we develop two BO methods for tensor-output systems. TOBO employs two classes of kernel-based TOGP as a surrogate model and selects query points via a UCB acquisition function. TOCBBO extends TOGP to the partially observed tensor outputs and adopts a CMAB-UCB2 criterion to sequentially select both the query input and the super-arm. We establish theoretical regret bounds for both methods and demonstrate their effectiveness on extensive synthetic and real-world experiments. Future work could consider integrating the proposed tensor-output kernels with sparse techniques, such as sparse GPs \citep{snelson2005sparse} and scalable LMC \citep{bruinsma2020scalable}, to improve the computational efficiency of TOGP. It would be valuable to explore alternative acquisition functions within this framework. For example, one may combine the TOGP model with improvement-based criteria such as EI or PI \citep{frazier2018tutorial}, and further extend them to the TOCBBO setting. Moreover, acquisition functions based on improvement \citep{uhrenholt2019efficient} or information-theoretic criteria \citep{tu2022joint} are also be considered in our framework, provided that the computational challenges for tensor outputs can be addressed. Finally, it is worth exploring more meaningful tensor structures for our proposed framework, such as a spatiotemporal system.

\section*{Acknowledgments}
This work was supported by National Natural Science Foundation of China (Grant No.72271138) and Tsinghua-National University of Singapore Joint Funding (Grant No.20243080039). We gratefully acknowledge this support.

\bibliographystyle{iclr2026_conference}
\bibliography{References}

\appendix
\section{Use of LLMs}
\label{app:llm}
Large Language Models (LLMs) were used to aid in the writing and polishing of the manuscript. Specifically, we used an LLM to assist in refining the language, improving readability, and ensuring clarity in various sections of the paper. The model helped with tasks such as sentence rephrasing, grammar checking, and enhancing the overall flow of the text.

It is important to note that the LLM was not involved in the ideation, research methodology, or experimental design. All research concepts, ideas, and analyses were developed and conducted by the authors. The contributions of the LLM were solely focused on improving the linguistic quality of the paper, with no involvement in the scientific content or data analysis.

The authors take full responsibility for the content of the manuscript, including any text generated or polished by the LLM. We have ensured that the LLM-generated text adheres to ethical guidelines and does not contribute to plagiarism or scientific misconduct.

\section{The details of the tensor decomposition in Remark \ref{remark 1}}
\label{app:rem}
As discussed in Remark \ref{remark 1}, directly estimating the entries of the core tensors $\{\mathbf A_l\}_{l=1}^m$ or $\mathbf A$ requires $mT$ parameters for the non-separable tensor-output kernel and $T$ parameters for the separable tensor-output kernel, which becomes intractable for large-scale or high-order tensors. To reduce the parameter complexity, we impose low-rank tensor structures and adopt different decompositions depending on the tensor order.

\paragraph{Low-order tensors ($m \leq 3$):} When the tensor order is small, we employ the CP decomposition:  
\begin{align}
    \label{cp}
    &\mathbf A_l = \sum_{r=1}^{R_l} \bm a_{lr1} \circ \bm a_{lr2} \circ \cdots \circ \bm a_{lrm} \quad \text{for the non-separable kernel},\\
    &\mathbf A = \sum_{r=1}^R \bm a_{r1} \circ \bm a_{r2} \circ \cdots \circ \bm a_{rm}  \quad \text{for the separable kernel},
\end{align} 
where $\bm a_{lri},\bm a_{ri}\in\mathbb R^{t_l}$ for $r=1,\ldots,R$ and $i=1,\ldots,m$ for $r=1,\ldots,R$ and $i=1,\ldots,m$. The number of free parameters becomes $\sum_{l=1}^m\sum_{i=1}^m R_lt_i$ for the non-separable kernel and $R\sum_{i=1}^m t_i$ for the separable kernel, which scales linearly with each mode size $t_i$. Thus, CP provides a compact representation when $m\le 3$. However, CP can be ill-posed for higher-order tensors because the set of tensors with bounded CP rank is not closed, and a best low-rank approximation may fail to exist \citep{de2008tensor}. Moreover, the factor matrices can easily become ill-conditioned as $m$ increases, leading to numerical instability \citep{chi2012tensors}.

\paragraph{High-order tensors ($m > 3$):} When the tensor order is large, we employ the TT decomposition:   
\begin{align}
  &\mathbf A_l = \mathbf G_{l1}(t_1)\mathbf G_{l2}(t_2)\cdots \mathbf G_{lm}(t_m) \quad \text{for the non-separable kernel},\\
  &\mathbf A = \mathbf G_{1}(t_1)\mathbf G_{2}(t_2)\cdots \mathbf G_{m}(t_m) \quad \text{for the separable kernel},
\end{align}
where each $\mathbf G_{lj}(t_j)$ is a $r_{l,j-1}\times t_j\times r_{l,j}$ three-mode tensor and the TT-ranks satisfy $r_{l,0}=r_{l,m}=1$ for $l=1,\ldots,m$. Similarly, $\mathbf G_{j}(t_j)$ is a $r_{j-1}\times t_j\times r_{j}$ three-mode tensor that satisfies $r_{0}=r_{m}=1$. The total number of free parameters is $\sum_{l=1}^m\sum_{j=1}^m r_{l,j-1}t_jr_{l,j}$ for the non-separable tensor-output kernel in \eqref{non-sep ker} and $\sum_{j=1}^m r_{j-1}t_jr_{j}$ for the separable tensor-output kernel in \eqref{sep ker}. It scales linearly with the tensor order $m$, instead of exponentially as in the full tensor. This makes TT decomposition highly suitable for high-order tensors ($m>3$), as it balances modeling flexibility with computational scalability and avoids the instability of CP.

In summary, we use CP for low-order cores and TT for high-order cores, ensuring both efficiency and numerical robustness across different tensor settings.

\section{The estimation of hyperparameters for the TOGP and PTOGP}
\label{app:likeli}
In this appendix, we provide the details of hyperparameter estimation for both TOGP and PTOGP. Without loss of generality, we assume a zero prior mean  $\bm\mu=0$ in \eqref{togp} and \eqref{ptogp}.

\subsection{Parameter estimation for training TOGP}
Given the training data $\mathbf{X}_n=(\bm x_1,\ldots,\bm x_n)$ and $\bm Y_n=\left(vec(\mathbf{y}_1)^\top,\ldots,vec(\mathbf{y}_n)^\top\right)^\top$, the log marginal likelihood of TOGP is given by:  
\begin{align}
\label{togp-like}
    \log L(\bm\Theta)=-\frac{1}{2}\log\vert\mathbf\Sigma_n\vert-\frac{1}{2}\bm Y_n^\top\mathbf\Sigma_n^{-1}\bm Y_n,
\end{align}
where $\mathbf\Sigma_n=\sigma^2\mathbf{K}_n+\tau^2\mathbf{I}_{nT}$. Then, \eqref{togp-like} can be optimized by applying gradient-based optimization methods, such as L-BFGS algorithm.

The gradients of the log-likelihood function in \eqref{togp-like} with respect to the hyperparameters $\tau^2$, $\sigma^2$, $\bm\theta$, and $\mathbf a$ is given by
\begin{align}
    \label{up-tau}
    \frac{\partial \log L}{\partial \tau^2} &= \frac{1}{2} \mathrm{tr}\left( \bm{\Sigma}_n^{-1} \mathbf{K}_n \right) - \frac{1}{2} \bm Y_n^\top \bm{\Sigma}_n^{-1} \mathbf{K}_n \bm{\Sigma}_n^{-1}\bm Y_n, \\
    \label{up-sig}
    \frac{\partial \log L}{\partial \sigma^2} &= \frac{1}{2} \mathrm{tr}\left( \bm{\Sigma}_n^{-1} \right) - \frac{1}{2}\bm Y_n^\top \bm{\Sigma}_n^{-1}  \bm{\Sigma}_n^{-1}\bm Y_n, \\
    \label{up-th}
    \frac{\partial \log L}{\partial \theta_{lij}} &= \frac{\tau^2}{2} \mathrm{tr}\left( \bm{\Sigma}_n^{-1}\frac{\partial \mathbf{K}_n}{\partial \theta_{lij}} \right) - \frac{\tau^2}{2} \bm Y_n^\top \bm{\Sigma}_n^{-1} \frac{\partial \mathbf{K}_n}{\partial \theta_{lij}} \bm{\Sigma}_n^{-1} \bm Y_n, \\
    \label{up-om}
    \frac{\partial \log L}{\partial a_{lij}} &= \frac{1}{2} \mathrm{tr}\left( \bm{\Sigma}_n^{-1} \tau^2 \frac{\partial \mathbf{K}_n}{\partial a_{lij}} \right) - \frac{\tau^2}{2} \bm Y_n^\top \bm{\Sigma}_n^{-1}\frac{\partial \mathbf{K}_n}{\partial a_{lij}} \bm{\Sigma}_n^{-1} \bm Y_n,
\end{align}
where $\theta_{lij}$ represents the scale parameters of $k_{lj}$, the kernel function associated with the $l$-th mode. The matrices $\frac{\partial \mathbf{K}_n}{\partial \theta_{lij}}$ and $\frac{\partial \mathbf{K}_n}{\partial a_{lij}}$ are the partial derivatives of the kernel matrix with respect to the corresponding kernel parameters.

The detailed algorithm for training TOGP is given as follows:
\begin{algorithm}[H]
   \caption{Parameter estimation for training TOGP}
   \label{togp-algo}
   \begin{algorithmic}[1]
   \Require 
   Training data $\mathbf X_n$ and $\bm Y_n$, initial hyperparameters $\bm\Theta_0=\{\sigma^2_0,\tau^2_0,\bm\theta_0,\mathbf a_0\}$; 
   \Ensure $\sigma^2\leftarrow\sigma^2_0$, $\tau^2\leftarrow\tau^2_0$,  $\bm\theta\leftarrow\bm\theta_0$ $\mathbf a\leftarrow\mathbf a_0$;
   \While {$\tau^2$, $\sigma^2$, $\bm\theta$, $\bm\omega$ not converge}
     \State Update $\tau^2$ based on \eqref{up-tau};
     \State Update $\sigma^2$ based on \eqref{up-sig};
     \State Update $\bm\theta$ based on \eqref{up-th};
     \State Update $\mathbf a$ based on \eqref{up-om}.
   \EndWhile
\end{algorithmic}
\end{algorithm}

\begin{remark}
\label{remark2}
 For the non-separable tensor-output kernel in Definition \ref{def-non-sep ker}, the total number of hyperparameters to be estimated in TOGP is $m_h=2+T+\sum_{l=1}^m\sum_{i=1}^m R_lt_i$ when $m\leq 3$ and $m_h=2+T+\sum_{l=1}^m\sum_{j=1}^m r_{l,j-1}t_jr_{l,j}$ when $m>3$. For the separable tensor-output kernel in Definition \ref{def-sep ker}, the total number of hyperparameters to be estimated in TOGP is $m_h=2+T+R\sum_{i=1}^m t_i$ when $m\leq 3$ and $m_h=2+T+\sum_{j=1}^m r_{j-1}t_jr_{j}$ when $m>3$. The computational complexity of computing the gradient of $\log L(\bm\Theta)$ with respect to all $m_h$ parameters is $\mathcal O(n^3T^3+n^2T^2m_h)$. When using the L-BFGS algorithm to optimize the likelihood function results, the number of iterations typically scales as $\mathcal O\left(\log(n)\right)$ \citep{Bottou}. Therefore, the overall computational complexity for training the TOGP takes $\mathcal O\left(n^3T^3\log(n)+n^2T^2m_h\log(n)\right)$ computational complexity.
\end{remark}

\subsection{Parameter estimation for training PTOGP}
Given the partially observed training data $\mathbf{X}_n=(\bm x_1,\ldots,\bm x_n)$, $\bm\Lambda_n=(\bm\lambda_1,\ldots,\bm\lambda_n)^\top$, and $\tilde{\bm Y}_n=(\tilde{\bm y}_1,\ldots,\tilde{\bm y}_n)^\top$,
the log marginal likelihood of PTOGP is given by:  
\begin{align}
\label{ptogp-like}
    \log \tilde L(\bm\Theta)=-\frac{1}{2}\log\vert\tilde{\mathbf\Sigma}_n\vert-\frac{1}{2}\mathrm{vec}(\tilde{\bm Y}_n)^\top\tilde{\mathbf\Sigma}_n^{-1}\mathrm{vec}(\tilde{\bm Y}_n),
\end{align}
where $\tilde{\mathbf\Sigma}_n=\sigma^2\mathbf E_n\mathbf K_n\mathbf E_n^\top+\tau^2\mathbf I_{nk}$. Then, \eqref{ptogp-like} can also be optimized by applying gradient-based optimization methods. 

The gradients of the log-likelihood function in \eqref{ptogp-like} with respect to the hyperparameters $\sigma^2$, $\tau^2$, $\bm\theta$, and $\mathbf a$ is given by
\begin{align}
\label{cup-tau}
\frac{\partial \tilde L}{\partial \tau^2} &= \frac{1}{2} \mathrm{tr}\left( \tilde{\mathbf{\Sigma}}_n^{-1} \mathbf{E}_n\mathbf{K}_n\mathbf{E}_n^\top \right) - \frac{1}{2} \tilde{\bm Y_n}^\top \tilde{\mathbf{\Sigma}}_n^{-1} \mathbf{E}_n\mathbf{K}_n\mathbf{E}_n^\top \tilde{\mathbf{\Sigma}}_n^{-1}\tilde{\bm Y_n}, \\
\label{cup-sig}
\frac{\partial \tilde L}{\partial \sigma^2} &= \frac{1}{2} \mathrm{tr}\left( \tilde{\bm{\Sigma}}_n^{-1}  \right) - \frac{1}{2}\tilde{\bm Y_n}^\top \tilde{\mathbf{\Sigma}}_n^{-1} \tilde{\mathbf{\Sigma}}_n^{-1}\tilde{\bm Y_n}, \\
\label{cup-th}
\frac{\partial \tilde L}{\partial \theta_{lij}} &= \frac{\tau^2}{2} \mathrm{tr}\left( \tilde{\bm{\Sigma}}_n^{-1}\mathbf{E}_n\frac{\partial \mathbf{K}_n}{\partial \theta_{lij}}\mathbf{E}_n^\top \right) - \frac{\tau^2}{2} \tilde{\bm Y_n}^\top \tilde{\bm{\Sigma}}_n^{-1}\mathbf{E}_n\frac{\partial \mathbf{K}_n}{\partial \theta_{lij}}\mathbf{E}_n^\top \tilde{\bm{\Sigma}}_n^{-1}\tilde{\bm Y_n}, \\
\label{cup-om}
\frac{\partial \tilde L}{\partial a_{lij}} &= \frac{\tau^2}{2} \mathrm{tr}\left( \tilde{\bm{\Sigma}}_n^{-1} \mathbf{E}_n \frac{\partial \mathbf{K}_n}{\partial a_{lij}} \mathbf{E}_n^\top\right) - \frac{\tau^2}{2} \tilde{\bm Y_n}^\top \tilde{\bm{\Sigma}}_n^{-1}\mathbf{E}_n\frac{\partial \mathbf{K}_n}{\partial a_{lij}} \mathbf{E}_n^\top\tilde{\bm{\Sigma}}_n^{-1}\tilde{\bm Y_n},
\end{align}

The detailed algorithm for training PTOGP is given as follows:
\begin{algorithm}[H]
   \caption{Parameter estimation for training PTOGP}
   \label{ptogp-algo}
   \begin{algorithmic}[1]
   \Require 
   Training data $\mathbf X_n$, $\bm\Lambda_n$ and $\tilde{\bm Y}_n$, initial hyperparameters $\bm\Theta_0=\{\sigma^2_0,\tau^2_0,\bm\theta_0,\mathbf a_0\}$; 
   \Ensure $\sigma^2\leftarrow\sigma^2_0$, $\tau^2\leftarrow\tau^2_0$, $\bm\theta\leftarrow\bm\theta_0$ $\mathbf a\leftarrow\mathbf a_0$;
   \While {$\sigma^2$, $\tau^2$, $\bm\theta$, $\mathbf a$ not converge}
     \State Update $\tau^2$ based on \eqref{cup-tau};
     \State Update $\sigma^2$ based on \eqref{cup-sig};
     \State Update $\bm\theta$ based on \eqref{cup-th};
     \State Update $\mathbf a$ based on \eqref{cup-om}.
   \EndWhile
\end{algorithmic}
\end{algorithm}

\begin{remark}
\label{remark3}
  For the non-separable tensor-output kernel in Definition \ref{def-non-sep ker}, the total number of hyperparameters to be estimated in PTOGP is $m_h=2+T+\sum_{l=1}^m\sum_{i=1}^m R_lt_i$ when $m\leq 3$ and $m_h=2+T+\sum_{l=1}^m\sum_{j=1}^m r_{l,j-1}t_jr_{l,j}$ when $m>3$. For the separable tensor-output kernel in Definition \ref{def-sep ker}, the total number of hyperparameters to be estimated in PTOGP is $m_h=2+T+R\sum_{i=1}^m t_i$ when $m\leq 3$ and $m_h=2+T+\sum_{j=1}^m r_{j-1}t_jr_{j}$ when $m>3$. The computational complexity of computing the gradient of $\tilde L(\bm\Theta)$ with respect to all $m_h$ parameters is $\mathcal O(k^3T^3+n^2kTm_h)$. When using the L-BFGS algorithm to optimize the likelihood function results, the number of iterations typically scales as $\mathcal O\left(\log(n)\right)$ \cite{Bottou}. Therefore, the overall computational complexity for training the PTOGP takes $\mathcal O\left(n^3k^3\log(n)+n^2kTm_h\log(n)\right)$ computational complexity.
\end{remark}

\section{The proposed algorithms and computational complexity analysis}
\label{app:algo}
\subsection{UCB-based TOBO algorithm}
The detailed procedure of the proposed TOBO method is given in Algorithm \ref{tobo-algo}.
\begin{algorithm}[H]
   \caption{UCB-based TOBO}
   \label{tobo-algo}
   \begin{algorithmic}[1]
   \Require Total rounds $N$, initial dataset $\mathcal{D}_0=\varnothing$, initial hyperparameters $\bm\Theta_0$;
   \For{round $n=1,\ldots,N$}
     \State Update the posterior mean \eqref{togp-mean} and covariance \eqref{togp-cov} of $\mathbf f$ given $\bm\Theta_{n-1}$;
     \State Select the next input $\bm x_{n}\leftarrow\arg\max_{\bm x\in\mathcal{X}}\alpha_{UCB}(\bm x\mid\mathcal{D}_{n-1})$;
     \State Evaluate the black-box system and observe output $\mathbf y_{n}$;
     \State Update dataset $\mathcal{D}_{n}\leftarrow \mathcal{D}_{n-1}\cup\{\bm x_{n},\mathbf y_{n}\}$;
     \State Update hyperparameters $\bm\Theta_n$ by maximizing \eqref{togp-like} with L-BFGS;
   \EndFor 
   \State Identify $i^\star=\arg\max_{i\in[N]}L_f\mathbf f(\bm x_i)$;
   \State \Return Optimal input $\bm x_{i^\star}$ and corresponding output $\mathbf y_{i^\star}$.
\end{algorithmic}
\end{algorithm}

\begin{remark}
\label{remark4}
    When using the TOBO method to select $\bm x^\star$, the computational complexity of updating TOGP is $\mathcal O\left((n-1)^3T^3\right)$ at round $n$. Then, the computational complexity of querying the next point is $\mathcal O\left((n-1)^2T^2\log(n)\right)$ by using the L-BFGS method. After updating the design dataset, the computational complexity of updating the hyperparameters $\bm\Theta$ is $\mathcal O\left(n^3T^3\log(n)+n^2T^2m_h\log(n)\right)$. Thus, the computational complexity of TOBO at round $n$ is $\mathcal O\left(n^3T^3\log(n)+n^2T^2m_h\log(n)\right)$. Suppose that there are $N$ points needed to query, the total computational complexity of Algorithm \ref{tobo-algo} is $\mathcal O\left(\sum_{n=1}^{N}\left[n^3T^3\log(n)+n^2T^2m_h\log(n)\right]\right)$. 
\end{remark}

\subsection{CMAB-UCB2-based TOCBBO algorithm}
The detailed algorithm of TOCBBO is provided in Algorithm \ref{tocbbo-algo}.
\begin{algorithm}[H]
   \caption{CMAB-UCB2-based TOCBBO}
   \label{tocbbo-algo}
   \begin{algorithmic}[1]
   \Require Total rounds $N$, initial dataset $\tilde{\mathcal{D}}_0=\varnothing$, initial hyperparameters $\bm\Theta_0$;
   \For{round $n=1,\ldots,N$}  
     \State Update the posterior mean \eqref{togp-sa-mean} and covariance \eqref{togp-sa-cov};
     \State Select the next input $\bm x_{n}$ using \eqref{tocbbo-ucb};
     \State Select the super-arm $\bm\lambda_{n}$ using \eqref{tocbbo-cmab-ucb};
     \State Evaluate $\mathbf f$ under $(\bm x_{n},\bm\lambda_{n})$ and observe $\tilde{\bm y}_{n}$;
     \State Update dataset $\tilde{\mathcal{D}}_{n}\leftarrow \tilde{\mathcal{D}}_{n-1}\cup\{(\bm x_{n},\bm\lambda_{n}),\tilde{\bm y}_{n}\}$;
     \State Update hyperparameters $\bm\Theta_n$ by maximizing $\log \tilde L(\bm\Theta)$ with L-BFGS;
     \State Update incumbent solution $\{\bm x_n^\star, \bm\lambda_n^\star\}=\arg\max_{i=1,\ldots,n}H_f\tilde{\bm f}(\bm x_i,\bm\lambda_i)$;
   \EndFor
   \State Identify $i^\star=\arg\max_{i\in[N]}H_f\tilde{\bm f}(\bm x_i,\bm \lambda_i)$;
   \State \Return Optimal input $\bm x_{i^\star}$, optimal super-arm $\bm \lambda_{i^\star}$, and output $\tilde{\bm y}_{i^\star}$.
\end{algorithmic}
\end{algorithm}

\begin{remark}
\label{remark5}
   When using the proposed TOCBBO method to jointly select $\bm x^\star$ and $\bm\lambda^\star$, the computational complexity of updating the PTOGP at round $n$ is $\mathcal O\left((n-1)^3k^3\right)$. Then, based on the proposed CMAB-UCB2 criterion, the computational complexity of querying the next input by using L-BFGS method is $\mathcal O\left((n-1)^2k^2\right)$ and selecting the next super-arm by using greedy Top-$k$ method is $\mathcal O(kT^3)$, respectively. After updating the current design dataset, the computational complexity of updating hyperparameters $\bm\Theta$ is $\mathcal O(n^3k^3\log(n)+n^2kTm_h\log(n))$.
   Therefore, at round $n$, the computational complexity of TOCBBO method is $\mathcal O(n^3k^3\log(n)+n^2kTm_h\log(n)+kT^3)$. Assuming a total of $N$ rounds, the overall computational complexity of Algorithm \ref{tocbbo-algo} is $\mathcal O\left(\sum_{n=1}^{N}\left[n^3k^3\log(n)+n^2kTm_h\log(n)+kT^3\right]\right)$.
\end{remark}

\section{The proof of Proposition \ref{pro 1}}
\label{app:pro1}
\begin{proof}
We prove symmetry and positive semi-definite for the two kernel classes in Definitions \ref{def-non-sep ker}--\ref{def-sep ker}.
\paragraph{Non-separable kernel (Definition~\ref{def-non-sep ker}):} Denote $\bm a_\ell:=\mathrm{vec}(\mathbf A_\ell)\in\mathbb R^{T}$ (resp.\ $\bm a:=\mathrm{vec}(\mathbf A)\in\mathbb R^{T}$). The tensor-output kernel is
\begin{align}
    \mathbf K(\bm x,\bm x')=\sum_{\ell=1}^{m}\sum_{j=1}^{t_\ell}\,\bm a_\ell \bm a_\ell^\top\, k_{\ell j}(\bm x,\bm x')\;,\qquad \bm x,\bm x'\in\mathcal X,
\end{align}
where each $k_{\ell j}$ is a scalar positive semi-definite kernel.

For any $\bm x,\bm x'\in\mathcal X$, we have
\begin{align}
    \notag
    \mathbf K(\bm x,\bm x')^\top&=\sum_{\ell,j} \big(\bm a_\ell \bm a_\ell^\top\big)^\top\, k_{\ell j}(\bm x,\bm x')=\sum_{\ell,j} \bm a_\ell \bm a_\ell^\top\, k_{\ell j}(\bm x,\bm x')\\
    \notag
    &=\sum_{\ell,j} \bm a_\ell \bm a_\ell^\top\, k_{\ell j}(\bm x',\bm x)=\mathbf K(\bm x',\bm x),
\end{align}
This shows that the full tensor-output kernel $\mathbf K$ is symmetric.

Let $\{\bm x_i\}_{i=1}^n\subset\mathcal X$ and $\{\bm y_i\}_{i=1}^n\subset\mathbb R^{T}$ be arbitrary. Then, we have
\begin{align}
    \notag
    \sum_{i,j=1}^n \bm y_i^\top \mathbf K(\bm x_i,\bm x_j)\bm y_j&=\sum_{\ell,j}\sum_{i,j} \bm y_i^\top \big(\bm a_\ell \bm a_\ell^\top\big)\bm y_j\, k_{\ell j}(\bm x_i,\bm x_j)\\
    &=\sum_{\ell,j}\sum_{i,j} s_{\ell j,i}\, k_{\ell j}(\bm x_i,\bm x_j)\, s_{\ell j,j},
\end{align}
where $s_{\ell j,i}:=\bm a_\ell^\top \bm y_i\in\mathbb R$. For each fixed $(\ell,j)$, the matrix $\big[k_{\ell j}(\bm x_i,\bm x_j)\big]_{i,j=1}^n$ is positive semi-definite, so $\sum_{i,j} s_{\ell j,i}\, k_{\ell j}(\bm x_i,\bm x_j)\, s_{\ell j,j}\ge 0$. Summing over $(\ell,j)$ preserves nonnegativity, hence the Gram matrix induced by $\mathbf K$ is positive semi-definite.

\paragraph{Separable kernel (Definition \ref{def-sep ker}):}
The kernel is
\begin{align}
    \mathbf K(\bm x,\bm x')=\bm a\,\bm a^\top\, k(\bm x,\bm x')\;,\qquad \bm x,\bm x'\in\mathcal X,
\end{align}
where $k$ is a scalar positive semi-definite kernel. Since $\big(\bm a\bm a^\top\big)^\top=\bm a\bm a^\top$ and $k(\bm x,\bm x')=k(\bm x',\bm x)$, we have $\mathbf K(\bm x,\bm x')^\top=\mathbf K(\bm x',\bm x)$. For arbitrary $\{\bm x_i\}_{i=1}^n$ and $\{\bm y_i\}_{i=1}^n$, $\sum_{i,j=1}^n \bm y_i^\top \mathbf K(\bm x_i,\bm x_j)\bm y_j
=\sum_{i,j} \big(\bm a^\top \bm y_i\big)\, k(\bm x_i,\bm x_j)\, \big(\bm a^\top \bm y_j\big)\ge 0$, because the Gram matrix $\big[k(\bm x_i,\bm x_j)\big]$ is positive semi-definite.

Therefore, both kernel classes are symmetric and generate positive semi-definite Gram matrices for any finite set of inputs, i.e., they are valid tensor-output kernels on $\mathcal X$.
\end{proof}

\section{The proof of Lemma \ref{lem 1}}
\label{app:lemma1}
\begin{proof}
First, for a tensor-output system $\mathrm{vec}(\mathbf f(\bm x))$ following a TOGP defined in \eqref{togp}, denote the derivative field of $\mathrm{vec}(\mathbf f(\bm x))$ to the $j$-coordinate element of $\bm x$ as $\bm g_j(\bm x)\;:=\partial \mathrm{vec}(\mathbf f(\bm x))/\partial x_j$, where $\mathrm{vec}\big(\mathbf f(\boldsymbol x)\big)\in\mathbb R^{T}$ and $j\in\{1,\ldots,d\}$. According to the derivative property of GP \citep{santner2003design}, we have that $\bm g_j(\bm x)\in\mathbb R^\top$ is a multivariate-output GP with mean $\hat{\bm\mu}^{\nabla}_n(x_j)\;:=\;\frac{\partial}{\partial x_j}\,{\boldsymbol\mu}_n(\boldsymbol x)$ and covariance $\hat{\mathbf K}^{\nabla}_n(x_j,x'_j)\;:=\;\mathrm{Cov}\!\left(\mathbf g_j(\boldsymbol x),\mathbf g_j(\boldsymbol x')\right)=\frac{\partial^2}{\partial x_j\,\partial x'_j}\sigma^2\mathbf K(\boldsymbol x,\boldsymbol x')$.

At round $n+1$, the observed data is denoted as $\mathbf X_n$ and $\bm Y_n$, then we have the posterior distribution of $\bm g_j(\bm x)$ is a $T$-dimensional Gaussian with mean and covariance
\begin{align}
    \hat{\bm\mu}^{\nabla}_n(x_j)=\frac{\partial \hat{\bm\mu}_n(\bm x)}{\partial x_j}; \quad \hat{\mathbf K}^{\nabla}_n(x_j,x'_j)=\frac{\partial^2\hat{\mathbf K}_n(\bm x,\bm x)}{\partial x_j\,\partial x'_j}.
\end{align}
It is easy to verify that $\hat{\mathbf K}^{\nabla}_n(x_j,x'_j)$ is positive semi-definite for every $\bm x$.

For any fixed $\boldsymbol x\in\mathcal X$, given $\mathbf X_n$ and $\bm Y_n$, the random vector
\begin{align}
    \mathbf g_j(\boldsymbol x)-\hat{\boldsymbol\mu}^{\nabla}_n(x_j)\sim \mathcal N\!\big(\mathbf 0,\hat{\mathbf K}^{\nabla}_n(\boldsymbol x,\boldsymbol x;j)\big).
\end{align}

Applying the Gaussian Lipschitz concentration (Proposition 2.5.2 and Theorem 5.2.2 in \cite{papaspiliopoulos2020high}) to the norm $\|\cdot\|_2$ yields, for all $t>0$,
\begin{align}
    \label{eq:ptwise-deriv}
    \Pr\!\left(\left\|\mathbf g_j(\boldsymbol x)-\hat{\boldsymbol\mu}^{\nabla}_n(x_j)\right\|
    \;\ge\; \sqrt{\mathrm{tr}\!\big(\hat{\mathbf K}^{\nabla}_n(x_j,x_j)\big)}\;+\;t\right)
    \;\le\; \exp\!\left(-\frac{t^2}{2\,\lambda_{\max}^{(n)}(x_j)}\right),
\end{align}
where $\lambda_{\max}^{(n)}(\boldsymbol x;j)$ is the largest eigenvalue of $\hat{\mathbf K}^{\nabla}_n(\boldsymbol x,\boldsymbol x;j)$.

Let $\mathcal D_M=\{\boldsymbol x_1,\ldots,\boldsymbol x_M\}\subset\mathcal X$ be any finite discretization.
Using \eqref{eq:ptwise-deriv} with $t(\boldsymbol x)=\sqrt{2\,\lambda_{\max}^{(n)}(x_j)\,\log(M/\delta)}$, and applying the union bound, we obtain with probability at least $1-\delta$,
\begin{align}
    \label{eq:net-bound}
    \left\|\mathbf g_j(\boldsymbol x)-\hat{\boldsymbol\mu}^{\nabla}_n(x_j)\right\|
    \;\le\; \sqrt{\mathrm{tr}\!\big(\hat{\mathbf K}^{\nabla}_n(x_j,x_j)\big)}
    \;+\;\sqrt{2\,\lambda_{\max}^{(n)}(x_j)\,\log(M/\delta)},
    \qquad \forall\,\boldsymbol x\in\mathcal D_M.
\end{align}
Let $C_\nabla \;:=\; \sup_{\boldsymbol x\in\mathcal X}\sqrt{\mathrm{tr}\!\big(\hat{\mathbf K}^{\nabla}_n(x_j,x_j)\big)}$ and $\Lambda_\nabla:=\sup_{\boldsymbol x\in\mathcal X}\lambda^{(n)}_{max}(x_j)$, then \eqref{eq:net-bound} implies that, with probability at least $1-\delta$,
\begin{align}
    \label{eq:net-uniform}
    \left\|\mathbf g_j(\boldsymbol x)\right\|\;\le\; \left\|\hat{\boldsymbol\mu}^{\nabla}_n(x_j)\right\|\;+\; C_\nabla\;+\;\sqrt{2\,\Lambda_\nabla\,\log(M/\delta)},\qquad \forall\,\boldsymbol x\in\mathcal D_M.
\end{align}

Assume the kernel is sufficiently smooth so that $\mathbf g_j(\cdot)$ has almost surely Lipschitz sample paths. Then there exist absolute constants $a,b>0$ (depending on the Lipschitz modulus and a covering-number bound of $\mathcal X$) such that for any $L'>0$,
\begin{align}
    \label{eq:sup-deriv}
    \Pr\!\left(\sup_{\boldsymbol x\in\mathcal X}\left\|\mathbf g_j(\boldsymbol x)\right\|
    \;>\; L' + C_\nabla\right)
    \;\le\; a\,\exp\!\left(-\frac{(L')^2}{b^2}\right).
\end{align}
This follows by combining the net bound \eqref{eq:net-uniform} with a standard chaining argument to pass from a finite net to the full domain; the Gaussian tail is preserved with a possible adjustment of absolute constants into $a,b$.
\end{proof}

\section{The proof of Theorem \ref{the 1}}
\label{app:the1}
\begin{proof}
The proof consists of two main parts. We first establish a concentration inequality for $\mathrm{vec}(\mathbf{f}(\boldsymbol{x}))$ for any $\boldsymbol x\in\mathcal{X}$, and then use this result to derive an upper bound for the regret.

\paragraph{Part 1. Concentration inequality.}
We begin by proving a concentration inequality for $\mathrm{vec}(\mathbf{f}(\boldsymbol{x}))$ evaluation on a discrete set of points in the domain $\mathcal{X}$. We then extend the result to neighborhoods of these discrete points, and ultimately to the entire space $\mathcal{X}$.

We first prove a basic concentration inequality for a general $T$-dimensional Gaussian distribution, i.e., $\boldsymbol{Z} \sim\mathcal{N}(\boldsymbol\mu,\mathbf K)$. If we define $\boldsymbol U = \boldsymbol Z - \boldsymbol\mu$ and set another $T$-dimensional standard Gaussian distribution $\boldsymbol{V}\sim\mathcal{N}(0,\mathbf{I}_T)$, we can obtain
\begin{align*}
    \boldsymbol U \overset{d}{=} \mathbf K^{1/2}\boldsymbol{V}, \quad \Vert \boldsymbol U\Vert = \Vert \mathbf{K}^{1/2}\boldsymbol V\Vert. 
\end{align*}
Define the function $f(\boldsymbol{U}) \triangleq \Vert \mathbf{K}^{1/2}\boldsymbol{V}\Vert$. For any $\boldsymbol{U}, \boldsymbol U'$, we have
\begin{align*}
    |f(\boldsymbol U) - f(\boldsymbol U')| \leq \Vert\mathbf{K}^{1/2}(\boldsymbol U - \boldsymbol U') \Vert \leq \Vert\mathbf{K}^{1/2} \Vert_{op}\,\Vert\boldsymbol U - \boldsymbol U'\Vert = \sqrt{\lambda_{\max} (\mathbf{K})}\,\Vert\boldsymbol U - \boldsymbol U'\Vert,
\end{align*}
where $\lambda_{\max}(\mathbf{K})$ is the maximum eigenvalue of $\mathbf K$. Therefore $f$ is a Lipschitz function with constant $L=\sqrt{\lambda_{\max}(\mathbf{K})}$.

According to Proposition 2.5.2 and Theorem 5.2.2 in \cite{papaspiliopoulos2020high}, we have the following concentration inequality for a Lipschitz function of a standard Gaussian distribution:
\begin{align*}
    \Pr\big(f(\boldsymbol{U}) \geq \mathbb{E}[f(\boldsymbol{U})] + t\big) \leq \exp\Big(-\frac{t^2}{2L^2}\Big).
\end{align*}
Substituting $f$ and $L$, we obtain
\begin{align*}
    \Pr\big(\Vert\boldsymbol Z - \boldsymbol{\mu} \Vert \geq \mathbb{E}[\Vert\boldsymbol Z - \boldsymbol{\mu} \Vert] + t\big) \leq \exp \Big(-\frac{t^2}{2\lambda_{\max}(\mathbf{K})}\Big).
\end{align*}
Since
\begin{align*}
    \mathbb{E}[\Vert\boldsymbol Z - \boldsymbol\mu \Vert] \leq \sqrt{\mathbb{E}[\Vert\boldsymbol Z- \boldsymbol\mu \Vert^2]} = \sqrt{\mathrm{tr}(\mathbf{K})},
\end{align*}
we get the final result for a general Gaussian distribution:
\begin{align}
    \Pr\!\left(\Vert\boldsymbol Z-\boldsymbol{\mu}\Vert\geq \sqrt{\mathrm{tr}(\mathbf{K})} + t\right)\leq\exp\!\Big(-\tfrac{t^2}{2\lambda_{\max}(\mathbf{K})}\Big).
\end{align}

Define the discrete set $\mathcal{D}_M = \{\boldsymbol{x}_1,\dots,\boldsymbol{x}_M \}\subset\mathcal{X}$. According to the above concentration inequality, we have
\begin{align*}
    \Pr\!\left(\Vert\mathrm{vec}(\mathbf{f}(\boldsymbol{x}))-\hat{\boldsymbol{\mu}}_n(\boldsymbol{x})\Vert>\sqrt{\mathrm{tr}(\hat{\mathbf{K}}_n(\boldsymbol{x},\boldsymbol{x}))} + z\right)\leq \exp\!\Big(-\tfrac{z^2}{2\lambda^{(n)}_{\max}(\boldsymbol{x})}\Big), \quad \forall \boldsymbol{x} \in \mathcal{D}_M,
\end{align*}
where $\lambda^{(n)}_{\max}(\boldsymbol{x})$ is the maximum eigenvalue of $\hat{\mathbf{K}}_n (\boldsymbol{x},\boldsymbol{x})$.

By setting $z = \sqrt{2\lambda^{(n)}_{\max}(\boldsymbol{x})\log\frac{M}{\delta}}$, we obtain
\begin{align}
    \Pr\!\left(\Vert\mathrm{vec}(\mathbf{f}(\boldsymbol{x}))-\hat{\boldsymbol{\mu}}_n(\boldsymbol{x})\Vert\leq \beta_n\sqrt{\lambda^{(n)}_{\max}(\boldsymbol{x})}\right)\geq 1- \delta , \quad \forall \boldsymbol{x} \in \mathcal{D}_M,
\end{align}
where $\beta_n = \sqrt{\sup_{\boldsymbol x\in\mathcal{X}}\frac{\mathrm{tr}(\hat{\mathbf{K}}_n(\boldsymbol{x},\boldsymbol{x}))}{\lambda^{(n)}_{\max}(\boldsymbol{x})}} + \sqrt{2\log\frac{M}{\delta}} = \sqrt{C_n}+\sqrt{2\log\frac{M}{\delta}}$.

From Lemma \ref{lem 1}, we obtain that $\Vert\mathrm{vec}(\mathbf{f}(\boldsymbol{x}))- \mathrm{vec}(\mathbf{f}(\boldsymbol{x}^\prime))\Vert\leq(L'+C^\nabla)\Vert\boldsymbol{x}-\boldsymbol{x}^\prime \Vert_1$ holds with probability at least $1 - a\exp(-L'^2/b^2)$ for all $\boldsymbol{x}, \boldsymbol x'\in \mathcal{X}$.

Then, at round $n$, we set the size of $\mathcal{D}_{M(n)}$ as $(\tau_n)^d$, i.e., $M(n)=(\tau_n)^d$. For $\boldsymbol{x}\in\mathcal{D}_{M(n)}$, we have $\Vert\boldsymbol{x}-[\boldsymbol{x}]_n \Vert_1 \leq \tfrac{rd}{\tau_n}$, where $[\boldsymbol{x}]_n$ is the closest point in $\mathcal{D}_{M(n)}$ to $\boldsymbol{x}$.

Using the above aligns, if we set $L' = b\sqrt{\log\frac{da}{\delta}}$, we obtain
\begin{align*}
    \Vert\mathrm{vec}(\mathbf f(\boldsymbol x)) - \mathrm{vec}(\mathbf f ([\boldsymbol x ]_n))\Vert \leq \Big(b\sqrt{\log\tfrac{da}{\delta}} + C^\nabla\Big)\Vert\boldsymbol x-[\boldsymbol x]_n \Vert_1 \leq \Big(b\sqrt{\log\tfrac{da}{\delta}} + C^\nabla\Big)\frac{rd}{\tau_n},
\end{align*}
with probability at least $1-\delta$ for all $\boldsymbol{x}\in\mathcal{X}$.

Choosing $\tau_n = rdn^2(b\sqrt{\log\frac{da}{\delta}} + C^\nabla)$, we have
\begin{align*}
    \Pr\!\Big(\Vert\mathrm{vec}(\mathbf{f}(\boldsymbol{x})) - \mathrm{vec}(\mathbf{f}([\boldsymbol{x}]_M))\Vert \leq \tfrac{1}{n^2}\Big) \geq 1-\delta, \quad \forall \boldsymbol{x}\in\mathcal{X}.
\end{align*}

Combining the above results, we obtain
\begin{align*}
    \Pr\!\Big(\Vert\mathrm{vec}(\mathbf{f}(\boldsymbol{x}^\star)) - \hat{\boldsymbol{\mu}}([\boldsymbol{x}^\star]_n)\Vert \leq \beta_n \sqrt{\lambda_{\max}^{(n)}([\boldsymbol{x}^\star]_n)} + \tfrac{1}{n^2}\Big) \geq 1 - \delta,
\end{align*}
where $[\boldsymbol x^\star]_n$ is the closest point in $\mathcal D_{M(n)}$ to $\boldsymbol x^\star$, and $\beta_n = \sqrt{C_n}+2d\log\!\Big(\tfrac{rdn^2(b\sqrt{\log(da/\delta)} + C^\nabla)}{\delta}\Big)$.

\paragraph{Part 2. Regret bound.}
According to the Lipschitz property of $h$, we have
\begin{align}
    r_n = h(\boldsymbol{x}^\star) - h(\boldsymbol{x}_n) \leq L\Vert\mathrm{vec}(\mathbf{f}(\boldsymbol{x}^\star)) - \mathrm{vec}(\mathbf{f}(\boldsymbol{x}_n))\Vert,
\end{align}
where $L>0$ is the Lipschitz constant.

Since
\begin{align*}
    \hat{\boldsymbol{\mu}}_{n-1}(\boldsymbol x_n) + \beta_n \sqrt{\lambda_{\max}^{(n-1)}(\boldsymbol x_n)}\geq \hat{\boldsymbol{\mu}}_{n-1}([\boldsymbol{x}^\star]_n)+ \beta_n\sqrt{\lambda_{\max}^{(n-1)}([\boldsymbol{x}^\star]_n)}\geq \mathrm{vec}(\mathbf{f}(\boldsymbol{x}^\star))-\tfrac{1}{n^2},
\end{align*}
we have
\begin{align}
    r_n \leq L \Big(2\beta_n\sqrt{\lambda_{\max}^{(n-1)}(\boldsymbol{x}_n)} + \tfrac{1}{n^2}\Big).
\end{align}

We first consider the first term:
\begin{align*}
    4\beta_n^2\lambda_{\max}^{(n-1)}(\boldsymbol{x}_n)&\leq 4\beta_N^2\eta \big(\eta^{-1} \lambda_{\max}^{(n-1)}(\boldsymbol{x}_n)\big)
    \leq 4\beta_N^2\eta C_2 \log\big(1 + \eta^{-1}\lambda_{\max}^{(n-1)}(\boldsymbol{x}_n)\big)\\
    &\leq 4\beta_N^2\eta C_2\log\!\big|\mathbf I_T+\eta^{-1}\hat{\mathbf{K}}_{n-1}(\boldsymbol{x}_n,\boldsymbol{x}_n)\big|,
\end{align*}
where $C_2$ is a constant.

Define $C_1 = 4\eta C_2$. Then
\begin{align*}
    \sum_{n=1}^N 4\beta_n^2 \lambda_{\max}^{(n-1)}(\boldsymbol{x}_n) 
\leq C_1N\beta_N^2 \sum^N_{n=1}\log\!\big|\mathbf{I}_T + \eta^{-1}\hat{\mathbf K}_{n-1} (\boldsymbol{x}_n,\boldsymbol{x}_n)\big|
\leq C_1N\beta_N^2 \gamma_N(\mathbf K,\eta),
\end{align*}
where the last inequality holds by the definition of $\gamma_N(\mathbf K,\eta)$.

Since $\sum_{n=1}^N \tfrac{1}{n^2}\leq \tfrac{\pi^2}{6}$, we have the final result:
\begin{align}
    \sum_{n=1}^N r_n \leq L\Big(\sqrt{C_1N \gamma_N(\mathbf K,\eta)}\,\beta_N + \tfrac{\pi^2}{6}\Big).
\end{align}
\end{proof}

\section{The proof of Proposition \ref{pro 2}}
\label{app:pro2}
\begin{proof}
We start from the definition of the (maximum) information gain:
\begin{align}
    \gamma_n(\mathbf K,\eta)= \max_{\mathbf X_n}\;\frac12 \log\det\!\big(\mathbf I_{nT}+\eta^{-1}\sigma^2\,\mathbf K_n\big),\quad\mathbf K_n := \big[\mathbf K(\bm x_i,\bm x_j)\big]_{i,j=1}^n \in \mathbb R^{nT\times nT}.
\end{align}

Under the separable kernel in Definition \ref{def-sep ker}, we have
\begin{align}
    \mathbf K(\bm x,\bm x')=\big(\mathrm{vec}(\mathbf A)\mathrm{vec}(\mathbf A)^\top\big)\,k(\bm x,\bm x')=: \mathbf B\,k(\bm x,\bm x'),
\end{align}
where $\mathbf B:=\mathrm{vec}(\mathbf A)\mathrm{vec}(\mathbf A)^\top\in\mathbb R^{T\times T}$. Then, the Gram matrix factorizes as a Kronecker product
\begin{align}
    \mathbf K_n=\mathbf K_x\otimes \mathbf B,\qquad\mathbf K_x=[k(\bm x_i,\bm x_j)]_{i,j=1}^n\in\mathbb R^{n\times n}.
\end{align}

Let $\{\alpha_i\}_{i=1}^n$ be the eigenvalues of $\mathbf K_x$ and $\{\beta_j\}_{j=1}^T$ be the eigenvalues of $\mathbf B$. By the spectral property of the Kronecker product, the eigenvalues of $\mathbf K_x\otimes \mathbf B$ are $\{\alpha_i\beta_j: i=1,\ldots,n,\; j=1,\ldots,T\}$. Therefore, we have
\begin{align}
    \log\det\!\big(\mathbf I_{nT}+\eta^{-1}\sigma^2(\mathbf K_x\otimes \mathbf B)\big)=\sum_{i=1}^n\sum_{j=1}^T \log\!\big(1+c\,\alpha_i\,\beta_j\big),\qquad c:=\eta^{-1}\sigma^2.
\end{align}

For each fixed $i$, the function $u\mapsto \log(1+c\,\alpha_i\,u)$ is non-decreasing for $u\ge 0$, hence
\begin{align}
    \sum_{j=1}^T \log\!\big(1+c\,\alpha_i\,\beta_j\big)=\sum_{j:\,\alpha_j>0} \log\!\big(1+c\,\alpha_i\,\beta_j\big)\;\le\; T\cdot \log\!\big(1+c\,\alpha_i\,\beta_{\max}\big),
\end{align}
where $\beta_{\max}:=\max_j \beta_j$.  
Summing over $i=1,\ldots,n$ and multiplying by $1/2$ gives
\begin{align}
    \gamma_n(\mathbf K,\eta)\;\le\; \frac T2\sum_{i=1}^n \log\!\big(1+c\,\beta_{\max}\,\lambda_i\big)\;=\; T\cdot \gamma_n\!\big(k^\sharp,\eta\big),
\end{align}
where $k^\sharp:=\beta_{\max}\,k$ is simply a rescaled version of $k$.  
Since rescaling by a positive constant does not change the asymptotic order of the information gain, we obtain the general comparison bound
\begin{align*}
    \gamma_n(\mathbf K,\eta)\;\le\; T\cdot \gamma_n(k,\eta).
\end{align*}

In our specific setting, the tightest bound is $\gamma_n(\mathbf K,\eta)=\mathcal O(T\gamma_n(k,\eta))$.  Finally, we substitute known results for the scalar kernel $k$:  
(i) If $k$ is the Gaussian (squared exponential) kernel in $d$ dimensions, then $\gamma_n(k,\eta)=\mathcal O\!\big((\log n)^{d+1}\big)$, which gives $\gamma_n(\mathbf K,\eta)=\mathcal O\!\big(T(\log n)^{d+1}\big)$.

(ii) If $k$ is a Mat\'ern kernel with smoothness parameter $\nu>1$, then $\gamma_n(k,\eta)=\mathcal O\!\Big(n^{\frac{d(d+1)}{2\nu+d(d+1)}}\log n\Big)$, which gives $\gamma_n(\mathbf K,\eta)=\mathcal O\!\Big(T\,n^{\frac{d(d+1)}{2\nu+d(d+1)}}\log n\Big)$.
\end{proof}

\section{The proof of Theorem \ref{the 2}}
\label{app:the2}
Denote $\tilde{h}(\bm x, \bm\lambda)=H_f\tilde{\bm f}(\bm x, \bm\lambda)$, then we have
\begin{align*}
    \tilde{h}(\bm x^\star, \bm\lambda^\star) - \tilde{h}(\bm x_n, \bm\lambda_n)&=\left[\tilde{h}(\bm x^\star, \bm \lambda^\star)-\tilde{h}(\bm x_n, \bm \lambda^\star)\right]+\left[\tilde{h}(\bm x_n, \bm \lambda^\star)-\tilde{h}(\bm x_n, \bm \lambda_n)\right]\\
    &=r_{1n}+r_{2n}.
\end{align*}

For the first item, according to the Cauchy-Schwarz inequality, we have
\begin{align}
    \label{lip-cbbo}
    H_f\tilde{\bm f}(\bm x, \bm\lambda)-H_f\tilde{\bm \mu}_n(\bm x, \bm\lambda)\leq H\Vert\tilde{\bm f}(\bm x, \bm\lambda)-\tilde{\bm \mu}_n(\bm x, \bm\lambda)\Vert.
\end{align}

Similar to the proof of Theorem \ref{the 1}, we provide the following lemma.
\begin{lemma}
\label{lemma6}
    Under Assumption \ref{assum 1}--\ref{assum 3}, suppose the noise vectors $\big\{vec(\bm\varepsilon_i)\big\}_{i\geq 1}$ are independently and identically distributed in  $N(0,\sigma^2\mathbf{I}_k)$. Then, for any $\delta\in (0,1]$, with probability at least $1-\delta$, $\Vert \tilde{\bm{f}}(\bm x,\bm\lambda) - \tilde{\bm\mu}_{n}(\bm x,\bm\lambda) \Vert_2\leq \tilde{\beta}_n \Vert\tilde{\mathbf K}_{n}(\bm x,\bm x;\bm\lambda,\bm\lambda)\Vert^{1/2}$ holds uniformly over all $\bm x \in \mathcal{X}$ and $\bm\lambda\in\Lambda$ and $i\geq 1$, where $\beta_n = \sqrt{\tilde {C}_n} + 2d\log(\frac{rdn^2(b\sqrt{\log(da/\delta)} + \tilde{C}_\nabla)}l{\delta})$, $\tilde C_n=\sup_{\boldsymbol x\in\mathcal{X}}\frac{tr(\tilde{\mathbf{K}}_n(\boldsymbol x,\boldsymbol{x};\bm\lambda^\star_n,\bm\lambda^\star_n)))}{\lambda^{(n)}_{max}(\boldsymbol{x},\bm\lambda^\star_n)}$,  and $\tilde{C}_\nabla=\sup_{\bm x\in\mathcal{X}} \sqrt{\mathrm{tr}(\tilde{\mathbf K}_n^{\nabla}(x_j,x_j))}$.
\end{lemma}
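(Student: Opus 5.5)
We follow the Part 1 argument in the proof of Theorem \ref{the 1}, applied now to the PTOGP posterior. The plan has three steps: a Gaussian Lipschitz concentration at each fixed $(\bm x,\bm\lambda)$, a union bound over a discretization of $\mathcal X$ and over the finite set $\Lambda$, and an extension to all of $\mathcal X$ using a derivative bound in the style of Lemma \ref{lem 1}.

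\textbf{Step 1 (pointwise concentration).} Fix $(\bm x,\bm\lambda)$ and condition on $\mathbf X_n,\bm\Lambda_n,\tilde{\bm Y}_n$. Under Assumption \ref{assum 1}, the vector $\tilde{\bm f}(\bm x,\bm\lambda)=\mathbf e(\bm\lambda)\mathrm{vec}(\mathbf f(\bm x))$ is a linear image of a Gaussian vector. Its posterior is therefore $\mathcal N\big(\tilde{\bm\mu}_n(\bm x,\bm\lambda),\tilde{\mathbf K}_n(\bm x,\bm x;\bm\lambda,\bm\lambda)\big)$, given by \eqref{togp-sa-mean}--\eqref{togp-sa-cov}. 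The map $\bm V\mapsto\Vert\tilde{\mathbf K}_n^{1/2}\bm V\Vert$ is $\sqrt{\lambda_{\max}}$-Lipschitz, so the general Gaussian inequality from the proof of Theorem \ref{the 1} gives
\begin{align*}
\Pr\Big(\Vert\tilde{\bm f}-\tilde{\bm\mu}_n\Vert\ge\sqrt{\mathrm{tr}(\tilde{\mathbf K}_n)}+z\Big)\le\exp\!\big(-z^2/(2\lambda^{(n)}_{\max}(\bm x,\bm\lambda))\big).
\end{align*}
Dividing by $\sqrt{\lambda_{\max}}=\Vert\tilde{\mathbf K}_n\Vert^{1/2}$ (operator norm) turns the threshold into $\big(\sqrt{\tilde C_n}+\sqrt{2\log(1/\delta')}\big)\Vert\tilde{\mathbf K}_n\Vert^{1/2}$.

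\textbf{Step 2 (union bound).} We take a grid $\mathcal D_{M(n)}\subset[0,r]^d$ with $M(n)=\tau_n^d$ points. The union is over $\mathcal D_{M(n)}\times\Lambda$, which has at most $\tau_n^d\binom{T}{k}$ elements, and over rounds $n$ with weights $1/n^2$. We set $\delta'=\delta/(\tau_n^d|\Lambda|\pi_n)$. Then $\sqrt{2\log(1/\delta')}$ is at most of order $2d\log(\tau_n/\delta)$ plus a $\log|\Lambda|$ term; the latter is either absorbed into constants, or we restrict to $\bm\lambda=\bm\lambda_n^\star$, which is the case actually used in \eqref{tocbbo-ucb}. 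This yields the stated form of $\tilde\beta_n$ on the grid.

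\textbf{Step 3 (extension to all of $\mathcal X$).} We need the analogue of Lemma \ref{lem 1} for $\partial\tilde{\bm f}/\partial x_j=\mathbf e(\bm\lambda)\,\partial\mathrm{vec}(\mathbf f)/\partial x_j$. Since $\Vert\mathbf e(\bm\lambda)\Vert_{op}\le1$, the same chaining bound holds with $\tilde C_\nabla$ in place of $C_\nabla$. Setting $L'=b\sqrt{\log(da/\delta)}$ and $\tau_n=rdn^2(b\sqrt{\log(da/\delta)}+\tilde C_\nabla)$ gives $\Vert\tilde{\bm f}(\bm x,\bm\lambda)-\tilde{\bm f}([\bm x]_n,\bm\lambda)\Vert\le 1/n^2$. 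We then split $\delta$ between the grid event and the Lipschitz event.

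\textbf{Main obstacle.} The difficulty is passing from the grid to every $\bm x$ with the bound still stated purely as $\tilde\beta_n\Vert\tilde{\mathbf K}_n(\bm x,\bm x;\bm\lambda,\bm\lambda)\Vert^{1/2}$ and no additive slack. This needs either continuity of $\tilde{\bm\mu}_n$ and $\tilde{\mathbf K}_n$ (they are smooth in $\bm x$ for smooth base kernels), or accepting an extra $1/n^2$ term as in Theorem \ref{the 1}. I would first try to carry the $1/n^2$ slack explicitly, since it is harmless for the regret sum ($\sum_n 1/n^2\le\pi^2/6$), and only then attempt to absorb it into $\tilde\beta_n$ by enlarging the logarithmic factor.
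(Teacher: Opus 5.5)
Your route (pointwise Gaussian--Lipschitz concentration, a union bound over a grid, then a Lemma \ref{lem 1}-type derivative bound with $\tau_n=rdn^2(b\sqrt{\log(da/\delta)}+\tilde C_\nabla)$) is exactly what the paper intends; the paper gives no separate proof, only ``similar to the proof of Theorem \ref{the 1}''. However, it does not reach the lemma as stated, and one of the failing steps is not among those you flagged. In Step 1 you rewrite the threshold as $(\sqrt{\tilde C_n}+\sqrt{2\log(1/\delta')})\Vert\tilde{\mathbf K}_n(\bm x,\bm x;\bm\lambda,\bm\lambda)\Vert^{1/2}$ for arbitrary $\bm\lambda$. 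But $\tilde C_n$ is a supremum over $\bm x$ at the single super-arm $\bm\lambda_n^\star$. For any other $\bm\lambda$ the ratio $\mathrm{tr}(\tilde{\mathbf K}_n)/\lambda_{\max}$ is not controlled by $\tilde C_n$; the only general bound is $k$. The union over $\Lambda$ also adds $\sqrt{2\log\binom{T}{k}}$, which is not a constant. It can be hidden in the stated $\tilde\beta_n$ only by inflating $a,b$ in a $(k,T)$-dependent way. So uniformity over $\Lambda$ does not follow.

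Restricting to $\bm\lambda=\bm\lambda_n^\star$ does not rescue the statement. The lemma claims all of $\Lambda$, and the proof of Theorem \ref{the 2} applies it at $\bm\lambda^\star$ (and, for the super-arm term, at $\bm\lambda_n$). Moreover, $\bm\lambda_n^\star$ is not a fixed index. It ranges over $\{\bm\lambda_1,\ldots,\bm\lambda_n\}$, so a union over $n$ values is needed; that part is cheap. The more serious issue is the definition: if the incumbent is computed from the true values $H_f\tilde{\bm f}(\bm x_i,\bm\lambda_i)$ as written, the design depends on $\mathbf f$ beyond the observations. Then \eqref{togp-sa-mean}--\eqref{togp-sa-cov} is no longer the exact conditional law you use in Step 1, so the incumbent should be defined from the $\tilde{\bm y}_i$.

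The passage from the grid to all of $\mathcal X$ also fails for a reason beyond the one you name. Step 3 controls only $\tilde{\bm f}(\bm x,\bm\lambda)-\tilde{\bm f}([\bm x]_n,\bm\lambda)$, so combined with the grid event it gives
\[
\Vert\tilde{\bm f}(\bm x,\bm\lambda)-\tilde{\bm\mu}_n([\bm x]_n,\bm\lambda)\Vert\le\tilde\beta_n\Vert\tilde{\mathbf K}_n([\bm x]_n,[\bm x]_n;\bm\lambda,\bm\lambda)\Vert^{1/2}+1/n^2,
\]
with the posterior evaluated at the grid point rather than at $\bm x$. Carrying the $1/n^2$ slack is therefore not an alternative to continuity of $\tilde{\bm\mu}_n$ and $\tilde{\mathbf K}_n$; you need both, and neither is free:
\begin{itemize}
\item the Lipschitz constant of $\tilde{\bm\mu}_n$ depends on the random data $\tilde{\bm Y}_n$, which requires another high-probability event;
\item $\bm x\mapsto\Vert\tilde{\mathbf K}_n(\bm x,\bm x;\bm\lambda,\bm\lambda)\Vert^{1/2}$ is only $\tfrac12$-H\"older where the variance is small, which forces a finer grid and changes the logarithm in $\tilde\beta_n$.
\end{itemize}
Absorbing the additive slack into $\tilde\beta_n$, as the statement requires, means dividing it by $\Vert\tilde{\mathbf K}_n(\bm x,\bm x;\bm\lambda,\bm\lambda)\Vert^{1/2}$. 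That needs a uniform positive lower bound on the posterior covariance, which the setting does not provide, or else a chaining argument for the normalized process, which is not in your plan.

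What your argument actually establishes, like Part 1 of the proof of Theorem \ref{the 1}, is the weaker pair used by \citet{srinivas2009gaussian}. The first part is the grid bound plus $1/n^2$ at $[\bm x^\star]_n$ for the comparator. The second is a pointwise bound at the queried points $\bm x_n$ via a separate union over rounds, valid once $\bm x_n$ is determined by past observations. You should state and prove the lemma in that form. Use $\sqrt{k}$ (or a supremum over $\Lambda$) in place of $\sqrt{\tilde C_n}$, and include an explicit $\log\binom{T}{k}$ if uniformity over super-arms is required.
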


From Lemma \ref{lemma6} and \eqref{lip-cbbo}, we have
\begin{align*}
    H_f\tilde{\bm f}(\bm x, \bm\lambda)-H_f\tilde{\bm \mu}_n(\bm x, \bm\lambda)
    \leq H\Vert\tilde{\bm f}(\bm x, \bm\lambda)-\tilde{\bm \mu}_n(\bm x, \bm\lambda)\Vert\leq H\tilde{\beta}_{n}\Vert\tilde{\mathbf K}_{n}(\bm x,\bm x;\bm\lambda,\bm\lambda)\Vert^{1/2}.
\end{align*}

Then, we have
\begin{align*}
    r_{1n}&=\tilde{h}(\bm x^\star, \bm \lambda^\star)-\tilde{h}(\bm x_n, \bm \lambda^\star)\\
    &\leq H_f(\tilde{\bm\mu}_{n}(\bm x^\star,\bm\lambda^\star))+H\tilde{\beta}_{n}\Vert\tilde{\mathbf K}_{n-1}(\bm x^\star,\bm x^\star)\Vert^{1/2}-H_f(\tilde{\bm f}(\bm x_n,\bm\lambda^\star))\\
    &\leq 2H\tilde{\beta}_{n-1}\Vert\tilde{\mathbf K}_{n-1}(\bm x_n,\bm x_n;\bm\lambda^\star,\bm\lambda^\star)\Vert^{1/2}.
\end{align*}
Similar to Theorem \ref{the 1}, we obtain
\begin{align*}
    R_{1N}&:=\sum_{n=1}^N r_{1n} \leqslant 2H\tilde{\beta}_{n-1}\Vert\tilde{\mathbf K}_{n-1}(\bm x_n,\bm x_n;\bm\lambda^\star,\bm\lambda^\star)\Vert^{1/2}\\
    &\leq H\left(\sqrt{C_3N \gamma_{N}(\tilde{\mathbf K},\eta)}\,\tilde\beta_N + \tfrac{\pi^2}{6}\right),
\end{align*}
where $C_3>0$ is a constant.

For the second item, followed in \cite{accabi2018gaussian}, we have
\begin{align*}
    R_{2N}\leq H\sqrt{C_4TN\tilde\gamma_{n}(\tilde{\mathbf K},\eta)} \tilde{\rho}_{N},
\end{align*}
where $C_4>0$ is a constant.

Then we have the cumulative regret over $N$ rounds is bounded by 
\begin{align*}
    r_{n}&\leq r_{1n}+r_{2n}\\
    &\leq H\left(\sqrt{C_3 \gamma_{N}(\tilde{\mathbf K},\eta) N}\,\tilde\beta_N + \tfrac{\pi^2}{6}+\sqrt{C_4TN\tilde{\gamma}_{N}\big(\tilde{\mathbf K},\eta)}\tilde\rho_N\right).
\end{align*}

\section{The proof of Proposition \ref{pro 3}}
\label{app:pro3}
\begin{proof}
Based on the definition of the (maximum) information gain, we have
\begin{align}
    \gamma_n(\tilde{\mathbf K},\eta)
    = \max_{\mathbf X_n}\; \frac12 \log\det\!\big(\mathbf I_m + c\,\tilde{\mathbf K}_n\big),
    \qquad
    c:=\eta^{-1}\sigma^2,
\end{align}
where the Gram matrix takes the form
$\tilde{\mathbf K}_n = \mathbf E\mathbf K_n\mathbf E^\top$ and $\mathbf K_n = \mathbf B \otimes \mathbf K_x$,
with $\mathbf K_x=[k(\bm x_i,\bm x_j)]_{i,j=1}^n\in\mathbb R^{n\times n}$, $\mathbf B=\mathrm{vec}(\mathbf A)\mathrm{vec}(\mathbf A)^\top\in\mathbb R^{T\times T}$. 

By Sylvester's identity, we have
\begin{align}
    \det\big(\mathbf I_{nk}+c\,\mathbf E_n\mathbf K_n\mathbf E_n^\top\big)
    =\det\big(\mathbf I_{nT}+c\,\mathbf K_n^{1/2}\mathbf E_n^\top\mathbf E_n \mathbf K_n^{1/2}\big).
\end{align}
Since $\mathbf E_n$ selects rows, $0\preceq E_n^\top E_n\preceq \mathbf I_{nT}$, hence
\begin{align}
    \notag
    \log\det\!\big(\mathbf I_{nk}+c\,\mathbf E_n\mathbf K_n\mathbf E_n^\top\big)
    &= \log\det\!\big(\mathbf I_{nT}+c\,\mathbf K_n^{1/2}\mathbf E_n^\top\mathbf E_n \mathbf K_n^{1/2}\big)\\
    &\le \log\det\!\big(\mathbf I_{nT}+c\,\mathbf K_n\big). \label{eq:sel-upper}
\end{align}
Then, followed in Proposition \ref{pro 1}, we have 
\begin{align}
    &\gamma_n(\tilde{\mathbf K}_n,\eta)=\mathcal O\big(T(\log n)^{d+1}\big)\ \text{for the Gaussian kernel}, \\
    &\gamma_n(\tilde{\mathbf K}_n,\eta)=\mathcal O\!\Big(Tn^{\frac{d(d+1)}{2\nu+d(d+1)}}\log n\Big)\ \text{for the Mat\'ern kernel}.
\end{align}

From \eqref{eq:sel-upper}, we also obtain that
\begin{align}
    &\tilde\gamma_n(\tilde{\mathbf K}_n,\eta)=\mathcal O\big(T(\log n)^{d+1}\big)\ \text{for the Gaussian kernel}, \\
    &\tilde\gamma_n(\tilde{\mathbf K}_n,\eta)=\mathcal O\!\Big(Tn^{\frac{d(d+1)}{2\nu+d(d+1)}}\log n\Big)\ \text{for the Mat\'ern kernel}.
\end{align}
\end{proof}

\section{Detailed settings of the experiments}
\label{app:detail-setting}
\subsection{Detailed settings of synthetic experiments}
\label{app:detail-sy}
The setting of simulations are as follows: 
\begin{itemize} 
    \item \textbf{Compute resources:} All experiments were run on a Windows 10 Pro (Build 19045) desktop with an Intel Core i9-7900X CPU (3.10 GHz) and 32 GB RAM. 
    \item \textbf{Kernel setting:} We use the Mat\'ern kernel function with the smoothing parameter $\nu=5/2$ as the input kernel for different GPs.
    \item \textbf{Data setting:} For the GP prediction, we generate $n_{train}=10d$ training samples for estimating hyperparameters and $n_{test}=5d$ testing samples for predicting. In the BO and CBBO framework, we generate $n_0=5d$ initial design points based on Latin Hypercube Sampling (LHS), and $N=10d$ sequential design points. 
    \item \textbf{Criteria:} To compare the GPs prediction performance of different methods, we use the following criteria:
    
    (1) Negative Log-Likelihood (NLL): It measures how well a probabilistic model fits the observed data. For given data $\mathbf{X}_n$ and $\bm Y_n$, the NLL is defined as $\text{NLL} = \frac{1}{2}\log(\tau^2\mathbf{K}_n+\sigma^2\mathbf I_{nT}) + \frac{1}{2}\bm Y_n^\top(\tau^2\mathbf{K}_n+\sigma^2\mathbf I_{nT})^{-1}\bm Y_n$. 
    
    (2) Mean Absolute Error (MAE): $\text{MAE} = \frac{1}{n_{test}} \sum_{i=1}^{n_{test}} \Vert \frac{\mathbf y_i - \hat{\mathbf y}_i}{\mathbf y_i}\Vert$. 
    
    (3) Covariance Operator Norm ($\Vert Cov\Vert$): $\| \hat{\mathbf K}_n \| = \sup_{\|\bm v\| = 1} \| \hat{\mathbf K}_n \|$.

   For the BO framework, we use the the mean squared error of inputs ($\text{MSE}_x$) and the mean absolute error of outputs $\text{MAE}_y$ to compare the optimization performance of different methods. Here $\text{MSE}_x=\Vert\bm x^\star-\bm x^\star_N\Vert^2$, and $\text{MAE}_y=\Vert\frac{\mathbf f^\star-\mathbf f^\star_N}{\mathbf f^\star}\Vert$ over $N$ rounds. For the CBBO framework, we add the Acc criterion to compare the match between the optimal super arm and the super arm chosen over N rounds, that is, $Acc=\mathbb{I}_{\bm\lambda^\star=\bm\lambda_N}/k$, where $\mathbb{I}$ is a indictor function.
\end{itemize}

\subsection{Detailed settings of case studies}
\label{app:detail-cs}
The detailed datasets of case studies are as follows:

\textbf{Chemistry reaction (CHEM):} Reaction optimization is fundamental to synthetic chemistry, from improving yields in industrial processes to selecting reaction conditions for drug candidate synthesis. According to \cite{shields2021bayesian}, we aim to evaluate the  reaction parameters ($x_1$: concentration, $x_2$: temperature) to improve the experimental yields ($\mathbf y:4\times3\times3$) of palladium-catalysed direct arylation reaction under varying bases (Mode 1), ligands (Mode 2), and solvents (Mode 3).

\textbf{PS/PAN material (MAT):} Electrospun polystyrene/polyacrylonitrile (PS/PAN) materials are commonly used as potential oil sorbents for marine oil spill remediation. From \cite{wang2020harnessing}, we aim to optimize the fabrication parameters of PS/PAN materials,  including spinneret speed ($x_1$), collector distance ($x_2$), applied voltage ($x_3$), and fiber diameter ($x_4$), to improve their oil absorption capacity ($\mathbf y: 5\times4\times4$) under varying PS content (Mode 1), mass fraction (Mode 2), and SiO$_2$ content (Mode 3).

\textbf{3D printing (PRINT):} Material extrusion-based three-dimensional printed products have been widely used in aerospace, automotive, and other fields. Following \cite{zhai2023robust}, we focus on selecting appropriate process parameters ($x_1$: layer thickness, $x_2$: platform temperature, $x_3$: nozzle temperature, $x_4$: infill density, and $x_5$: printing speed) to reduce variations in part quality ($\mathbf y:3\times4\times3$) caused by different printer nozzles (Mode 1) and printing geometries (Mode 2). The quality (Mode 3) is evaluated in terms of compression deformation, compressive strength, and the printing cost.

\textbf{Renewable energy (REEN):} Climate change affects the availability and reliability of renewable energy sources such as wind, solar, and hydropower. We employ the operational energy dataset from the Copernicus Climate Change Service (\url{https://cds.climate.copernicus.eu/datasets/sis-energy-derived-reanalysis?tab=overview}) to explore the climate conditions that are most beneficial to renewable energy generation in various European nations. The climate-related variables, used as input features, include air temperature ($x_1$), precipitation ($x_2$), surface incoming solar radiation ($x_3$), wind speed at 10 meters ($x_4$) and 100 meters ($x_5$), and mean sea level pressure ($x_6$). The energy-related indicators ($\mathbf y: 10\times2)$ collected from ten European countries (Mode 1), used as outputs, include the capacity factor ratio of solar photovoltaic power generation and wind power generation onshore (Mode 2).

\section{The comparison of computational complexity for baselines}
\label{app:com}
To provide a clearer comparison, we summarize the computational complexity of GP training, BO, and CBBO for the baseline methods sMTGP, MLGP, and MVGP, together with our proposed method, in the table below.
\begin{table}[!htbp]
\centering
\caption{The computational complexity of different methods for GP training, BO, and CBBO.}
    \begin{tabular}{ccc}
        \toprule
        Task&\textbf{Method} &Computational complexity\\
        \midrule
        \multirow{4}{*}{GP Training}&
        TOGP&$\mathcal{O}\big((n^3T^3 + n^2T^2 m_h)\log n\big)$\\
        &sMTGP&$\mathcal{O}\big((n^3 + \sum_{l=1}^m t_l^3 + n^2 m_h + \sum_{l=1}^m t_l^2 m_{hl})\log n\big)$\\
        &MLGP&$\mathcal{O}\big((n^3 + \sum_{l=1}^m t_l^3 + n^2 m_h + \sum_{l=1}^m t_l^2 m_{hl})\log n\big)$\\
        &MVGP&$\mathcal{O}\big((n^3 + T^3 + n^2 m_{h1} + T^2 m_{h2})\log n\big)$\\
        \hline
        \multirow{4}{*}{BO (Round $n$)}&TOGP&$\mathcal{O}\big((n^3T^3 + n^2T^2 m_h)\log n\big)$\\
        &sMTGP& $\mathcal{O}\big((n^3 + \sum_{l=1}^m t_l^3 + n^2 m_h + \sum_{l=1}^m t_l^2 m_{hl})\log n\big)$\\
        &MLGP& $\mathcal{O}\big((n^3 + \sum_{l=1}^m t_l^3 + n^2 m_h + \sum_{l=1}^m t_l^2 m_{hl})\log n\big)$\\
        &MVGP& $\mathcal{O}\big((n^3 + T^3 + n^2 m_{h1} + T^2 m_{h2})\log n\big)$\\
        \hline
        \multirow{4}{*}{CBBO (Round $n$)}&TOGP&$\mathcal{O}\big((n^3 k^3 + n^2 k T m_h)\log n + k T^3\big)$\\
        &sMTGP&$\mathcal{O}\big((n^3 + k^3 + n^2 m_h + k\sum_{l=1}^m t_l m_{hl})\log n + k\sum_{l=1}^m t_l^3\big)$ \\
        &MLGP&$\mathcal{O}\big((n^3 + k^3 + n^2 m_h + k\sum_{l=1}^m t_l m_{hl})\log n + k\sum_{l=1}^m t_l^3\big)$ \\
        &MVGP&$\mathcal{O}\big((n^3 + T^3 + n^2 m_{h1} + kT m_{h2})\log n + kT^3\big)$ \\
        \bottomrule
    \end{tabular}
\end{table}
Here $m_h$, $m_{h1}$, $m_{h2}$, and $m_{hl}$ for $l=1,\cdots,m$ denote the numbers of hyperparameters for the corresponding methods.

Furthermore, we report the empirical computational time for GP training across all methods under the three experimental settings. The results are summarized below.
\begin{table}[H]
    \centering
    \caption{The runtime (s) of different methods for GP training in the three synthetic settings.}
    \scalebox{0.85}{
    \begin{tabular}{lcccc}
        \toprule
         & \textbf{TOGP} & \textbf{sMTGP} & \textbf{MLGP} & \textbf{MVGP} \\
        \midrule
        \textbf{Setting (1)} & 266.04 & 27.26 & 30.57 & \textbf{6.50} \\
        \textbf{Setting (2)} & 69.85 & 16.08 & 21.99 & \textbf{1.31} \\
        \textbf{Setting (3)} & 900.13 & 851.34 & 891.73 & \textbf{769.50} \\
        \bottomrule
    \end{tabular}}
\end{table}

We further report the empirical running time of BO and CBBO under all baseline methods. The results are summarized in the table below.
\begin{table}[!htbp]
    \centering
    \caption{The runtime (s) of different methods for BO and CBBO in the three synthetic settings.}
    \scalebox{0.85}{
    \begin{tabular}{ccccccc}
        \toprule
        &\multicolumn{2}{c}{Setting (1)} & \multicolumn{2}{c}{Setting (2)} & \multicolumn{2}{c}{Setting (3)}\\
        \midrule
        Method/Task&BO&CBBO&BO&CBBO&BO&CBBO\\
        \midrule    
        TOBO/TOCBBO&6968.25& 8017.08& 1588.53& 2348.16& 6035.61& 11699.09\\
        sMTGP-UCB&1516.31&953.82& 117.78& 179.11& 5291.26& 9436.90\\
        MLGP-UCB&1545.41& 981.86& 133.93& 185.49& 5452.90& 9696.60\\
        MVGP-UCB&340.44& 437.29& 14.86& 20.76& 3006.97& 3199.47\\
        TOGP-RS&6176.84& 7990.44& 1573.36& 2338.97& 5786.02& 11501.01\\
        sMTGP-RS&1503.24& 939.27& 106.93& 164.95& 5273.84& 9421.08\\
        MLGP-RS&1530.32& 971.25& 109.14& 180.70& 5252.32& 9431.96\\
        MVGP-RS&\textbf{334.79}& \textbf{386.86}& \textbf{7.70}& \textbf{6.70}& \textbf{2997.75}& \textbf{3116.23}\\
        \bottomrule
    \end{tabular}}
\end{table}

Finally, we provide the prediction and optimization performance of TOGP based on the Nystr\"m low-rank approximation (SVD-TOGP) in Remark \ref{remark:svd} under the three synthetic settings. The results are summarized below.
\begin{table}[!htbp]
    \centering
    \caption{Summary of TOGP and SVD-TOGP performance for GP training, BO and CBBO in the three synthetic settings.}
    \scalebox{0.85}{
    \begin{tabular}{cccccccc}
    \toprule
        &&\multicolumn{2}{c}{Setting (1)} & \multicolumn{2}{c}{Setting (2)} & \multicolumn{2}{c}{Setting (3)}\\
        \hline
        Task & Criterion & SVD-TOGP & TOGP &SVD-TOGP &TOGP &SVD-TOGP &TOGP \\
        \midrule
        \multirow{4}{*}{GP Training}& NLL & 557.09 & \textbf{503.00}    & \textbf{-18.39}  & -18.10   & -3778.74 & \textbf{-3923.10} \\
        & MAE  & 0.1756 & \textbf{0.1571} & 0.1099  & \textbf{0.1052}  & 0.1436   & \textbf{0.1372}  \\
        & $\|Cov\|$& 3.9627 & \textbf{2.0200}   & 0.6261  & \textbf{0.0400}    & \textbf{0.0881}   & 2.8200    \\
        & Time (s) & \textbf{21.80}  & 266.04 & \textbf{10.83}   & 69.85   & \textbf{436.17}   & 900.13  \\
        \midrule
        \multirow{4}{*}{BO}& $\text{MSE}_x$ & 0.0001 & \textbf{0.0000}      & \textbf{0.0003} & \textbf{0.0003} & 0.0002 & \textbf{0.0001} \\
        & $\text{MAE}_y$ & 0.0041 & \textbf{0.0008} & 0.0356 & \textbf{0.0350}  & 0.0051 & \textbf{0.0050}  \\
        & Ins Regret& 0.0040  & \textbf{0.0001} & \textbf{0.0002} & \textbf{0.0002} & 0.0402 & \textbf{0.0302} \\
        & Time (s)   & \textbf{831.02} & 6968.25& \textbf{162.78} & 1588.53& \textbf{3604.66}& 6035.61\\
        \midrule
        \multirow{4}{*}{CBBO}& $\text{MSE}_x$ & 0.0024 & \textbf{0.0023} & \textbf{0.0000}      & \textbf{0.0000}      & 0.0035 & \textbf{0.0021} \\
        & $\text{MAE}_y$      & 0.0180  & \textbf{0.0172} & \textbf{0.0000}      & \textbf{0.0000}     & 0.0246 & \textbf{0.0145} \\
        & Acc & \textbf{1}      & \textbf{1}      & \textbf{1}      & \textbf{1}      & \textbf{1}      & \textbf{1}      \\
        & Ins Regret   & 1.0944 & \textbf{0.9807} & \textbf{0.0000}      & \textbf{0.0000}      & 1.8571 & \textbf{1.4406} \\
        & Time (s) & \textbf{655.83} & 8017.08& \textbf{134.22} & 2348.16& \textbf{4446.12}& 11699.09\\
        \bottomrule
    \end{tabular}}
\end{table}

\section{Additional results of Synthetic Experiments}
\label{app:add-sy}
\begin{table}[!htbp]
  \caption{The additional optimization performance of different methods in three synthetic settings.}
  \label{add-sy-bo}
  \centering
  \scalebox{0.85}{
  \begin{tabular}{ccccccccccc}
        \toprule   
        && \multicolumn{3}{c}{Setting (1)} & \multicolumn{3}{c}{Setting (2)} & \multicolumn{3}{c}{Setting (3)}\\
        \midrule
        $k$&&$\text{MSE}_x$ &$\text{MAE}_y$& Acc&$\text{MSE}_x$&$\text{MAE}_y$& Acc&$\text{MSE}_x$&$\text{MAE}_y$&Acc\\
        \midrule    
        \multirow{8}{*}{$T/3$}&TOCBBO&\textbf{0.0054}& \textbf{0.0509}& \textbf{1.00}& \textbf{0.0001}& \textbf{0.0036}&  \textbf{1.00} &\textbf{0.0021}& \textbf{0.0312}& \textbf{1.00}\\
        &SMTGP-UCB&0.0057& 0.0942& \textbf{1.00}& 0.0548& 0.3336&  \textbf{1.00}& 0.0340& 0.8616& 0.71\\
        &MLGP-UCB&0.0677& 0.9775& 0.33& 0.0434& 0.6961&  0.50& 0.3549& 0.8205& 0.29\\
        &MVGP-UCB&0.0325& 0.7530& 0.33& 0.0019& 0.0191&  \textbf{1.00}& 0.0405& 0.3647& 0.71\\
        &TOGP-RS&0.0208& 1.3279& 0.83& 0.0206& 0.4347&  0.50& 0.0709& 1.5521& 0.57\\
        &SMTGP-RS&0.0203& 1.2625& 0.50& 0.1201& 0.4443&  0.50& 0.0254& 1.0703& 0.50\\
        &MLGP-RS&0.0619& 1.0296& 0.67& 0.1191& 0.5081&  \textbf{1.00}& 0.1816& 1.1735& 0.64\\
        &MVGP-RS&0.0634& 0.9733& 0.67& 0.0469& 0.4444&  0.50& 0.0574&1.3837& 0.43\\
        \midrule      
        \multirow{8}{*}{$2T/3$}&TOCBBO&\textbf{0.0006}& \textbf{0.0125} &\textbf{1.00} &\textbf{0.0010}& \textbf{0.0076}& \textbf{1.00}& \textbf{0.0103}&  \textbf{0.1163} &\textbf{0.96}\\
        &SMTGP-UCB&0.0066& 0.0585& \textbf{1.00}& 0.0012& 0.0177& \textbf{1.00}& 0.0122&  1.9890& 0.89\\
        &MLGP-UCB&0.0323& 3.6747& 0.82& 0.2229& 1.4532& 0.75& 0.0423&  4.0294& 0.81\\
        &MVGP-UCB&0.0047& 0.4765& \textbf{1.00}& 0.0022& 0.0309& \textbf{1.00}& 0.0324&  1.4985& 0.93\\ 
        &TOGP-RS&0.0128& 4.6512& 0.82& 0.0202& 1.1872& \textbf{1.00}& 0.0218& 10.4919 &0.70\\
        &SMTGP-RS&0.0128& 4.6512& 0.82& 0.2229& 1.4532& 0.75& 0.0110&  6.0795& 0.70\\
        &MLGP-RS&0.0323& 3.6747& 0.82& 0.2229& 1.4532& 0.75& 0.0423&  4.0294& 0.81\\
        &MVGP-RS&0.0128& 4.6512& 0.82& 0.0439& 0.8309& \textbf{1.00}& 0.0423 & 4.0294& 0.81\\
        \bottomrule
  \end{tabular}}
\end{table}

\begin{figure}[H]
    \centering
    \subfigure{\includegraphics[width=1\linewidth]{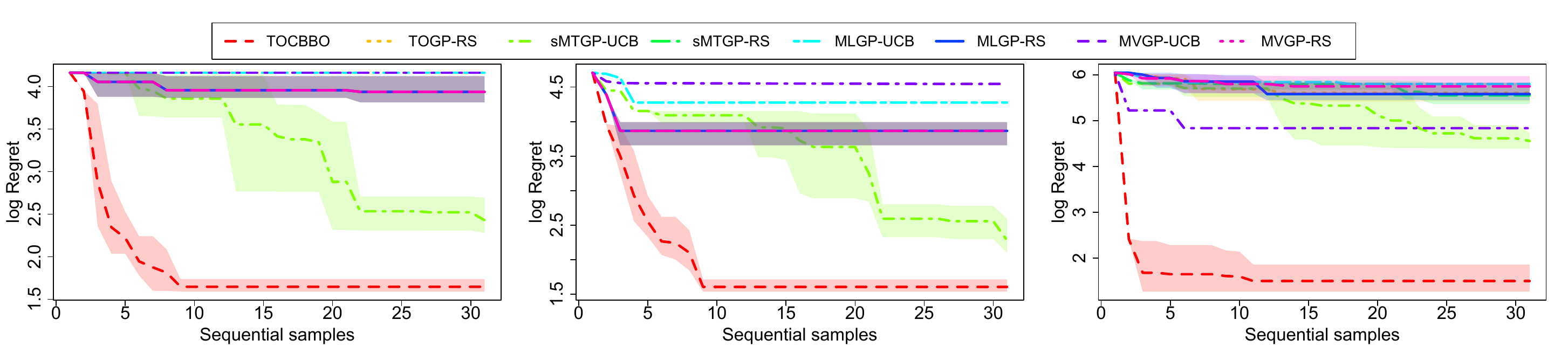}}
    \subfigure{\includegraphics[width=1\linewidth]{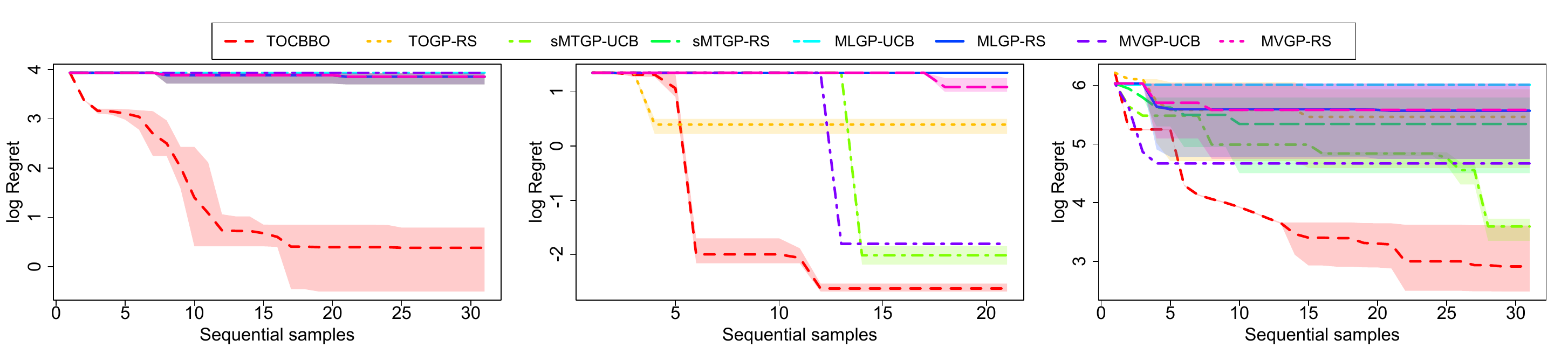}}
    \caption{Each round's logarithmic instantaneous regret of different methods in the Setting (1) (Left), (2) (Middle), and (3) (Right) when $k=T/3$ (Top row) and $k=2T/3$ (Bottom row).}
    \label{add-sy-ins-re}
\end{figure}

\begin{table}[H]
    \centering
    \caption{Summary across CHEM, MAT, and PRINT datasets with 10 repetitions.}
    \label{tab:results-transposed}
    \scalebox{0.85}{
    \begin{tabular}{cccc}
        \hline
        Method & CHEM & MAT & PRINT \\
        \hline
        TOBO       & 100 (0.00)    & 150.07 (0.00) & -25.03 (0.00) \\
        sMTGP-UCB  & 99.91 (0.10)  & 147.97 (2.24) & -25.03 (0.00) \\
        MLGP-UCB   & 99.29 (0.75)  & 147.90 (2.32) & -25.13 (0.11) \\
        MVGP-UCB   & 96.32 (4.89)  & 147.20 (3.44) & -25.27 (0.32) \\
        TOGP-RS    & 93.84 (1.56)  & 147.83 (2.32) & -25.27 (0.23) \\
        sMTGP-RS   & 96.63 (3.74)  & 146.22 (4.30) & -25.03 (0.13) \\
        MLGP-RS    & 99.41 (0.76)  & 148.28 (1.64) & -25.27 (0.23) \\
        MVGP-RS    & 96.32 (3.72)  & 147.20 (2.80) & -25.27 (0.33) \\
        \hline
    \end{tabular}}
\end{table}

In the BO setting, we additionally include a single-objective GP-UCB baseline, which directly assumes that $L_f\mathbf f$ follows a GP and uses the UCB acquisition function to select query points sequentially. The optimization results under the three settings in our numerical experiments are summarized below.
\begin{table}[!htbp]
    \centering
    \caption{The results of Single GP-UCB in three synthetic settings.}
    \scalebox{0.85}{
    \begin{tabular}{ccccc}
        \toprule
        & Criterion & Single GP-UCB & TOBO \\
        \midrule
        \multirow{3}{*}{Setting (1)} 
        & $\text{MSE}_{x}$     & 0.0004 & \textbf{0.0000} \\
        & Ins Regret           & 0.0459 & \textbf{0.0001} \\
        & Time                 & \textbf{457.34} & 6968.25 \\
        \midrule
        \multirow{3}{*}{Setting (2)} 
        & $\text{MSE}_{x}$     & 0.4761 & \textbf{0.0003} \\
        & Ins Regret           & 0.2093 & \textbf{0.0002} \\
        & Time                 & \textbf{187.30} & 1588.53 \\
        \midrule
        \multirow{3}{*}{Setting (3)} 
        & $\text{MSE}_{x}$     & 0.0084 & \textbf{0.0001} \\
        & Ins Regret           & 0.1964 & \textbf{0.0302} \\
        & Time                 & \textbf{415.61} & 6035.61 \\
        \bottomrule
    \end{tabular}}
\end{table}

We include an independent GP-UCB baseline (Ind GP-UCB), which assumes that each tensor element follows its own GP model and applies the proposed CMAB-UCB2 acquisition function to sequentially select both query points and super-arms. The optimization results under the three synthetic settings are summarized below.
\begin{table}[!htbp]
    \centering
    \small
    \caption{The results of the Ind GP-UCB in three synthetic settings.}
    \begin{tabular}{cccc}
        \toprule
         & Criterion & Ind GP-UCB & TOBO \\
        \midrule
        \multirow{5}{*}{\textbf{Setting (1)}}
        & MSE$_x$      & 0.2121 & \textbf{0.0023} \\
        & MAE$_y$      & 0.6739 & \textbf{0.0172} \\
        & Acc          & 0.67   & \textbf{1}      \\
        & Ins Regret   & 36.301 & \textbf{0.9807} \\
        & Time (s)     & \textbf{328.56} & 8017.08 \\
        \midrule
        \multirow{5}{*}{\textbf{Setting (2)}}
        & MSE$_x$      & 0.0731 & \textbf{0.0000} \\
        & MAE$_y$      & 0.0486 & \textbf{0.0000} \\
        & Acc          & \textbf{1} & \textbf{1} \\
        & Ins Regret   & 0.1561 & \textbf{0.0000} \\
        & Time (s)     & \textbf{110.77} & 2348.16 \\
        \midrule
        \multirow{4}{*}{\textbf{Setting (3)}}
        & MAE$_y$      & 0.0139 & \textbf{0.0021} \\
        & Acc          & 0.86   & \textbf{1}      \\
        & Ins Regret   & 54.2714 & \textbf{1.4406} \\
        & Time (s)     & \textbf{3296.72} & 11699.09 \\
        \bottomrule
    \end{tabular}
\end{table}

For non-separable multi-output GPs \citep{fricker2013multivariate}, we additionally provide the results under the three synthetic settings below.
\begin{table}[!htbp]
    \centering
    \caption{The results of the non-separable MOGP in three synthetic settings.}
    \scalebox{0.85}{
    \begin{tabular}{ccccc}
        \toprule
        \textbf{Task} & \textbf{Criterion} & \textbf{Setting (1)} & \textbf{Setting (2)} & \textbf{Setting (3)} \\
        \midrule
        \multirow{4}{*}{\textbf{GP}}
        & NLL            & 507.19  & -39.33 & -3094.10 \\
        & MAE            & 0.1923  & 0.1493 & 0.1591 \\
        & $\Vert \mathrm{Cov}\Vert$ & 6.7398 & 0.5618 & 20.5800 \\
        & Time (s)       & 453.11  & 151.31 & 1397.68 \\
        \midrule
        \multirow{4}{*}{\textbf{BO}}
        & MSE$_x$        & 0.0014 & 0.0005 & 0.0244 \\
        & MAE$_y$        & 0.0536 & 0.0503 & 0.6277 \\
        & Ins Regret     & 1.0284 & 0.0030 & 6.6283 \\
        & Time (s)       & 9109.08 & 2101.45 & 7881.31 \\
        \midrule
        \multirow{5}{*}{\textbf{CBBO}}
        & MSE$_x$        & 0.0073 & 0.0683 & 0.0091 \\
        & MAE$_y$        & 0.0643 & 0.0390 & 0.1257 \\
        & Acc            & 1 & 1 & 0.86 \\
        & Ins Regret     & 1.7389 & 0.1551 & 51.4646 \\
        & Time (s)       & 10462.18 & 3107.82 & 14175.50 \\
        \bottomrule
    \end{tabular}}
\end{table}

As shown, TOBO achieves consistently better optimization accuracy than Single GP-UCB, Ind GP-UCB, and non-separable MOGP-UCB across all settings. Although the runtime of TOBO is higher, we further introduce an efficient variant named SVD-TOBO, which employs a low-rank eigen-decomposition to approximate the TOGP covariance in Remark \ref{remark:svd} and significantly reduces runtime.

To evaluate the sensitivity of our method with respect to different choices of $L_f$ and $H_f$, we consider three types of operators defined as follows \citep{chugh2020scalarizing}.
\begin{itemize}
    \item \textbf{Sum operator (used in the main manuscript):} 
    $L_f\mathbf{f}(\boldsymbol{x})=\sum f_{i_1,\ldots,i_m}(\boldsymbol x)$, \quad 
    $H_f\tilde{\mathbf{f}}(\boldsymbol{x},\boldsymbol{\lambda})=\sum \tilde f_{i_1,\ldots,i_m}(\boldsymbol x,\boldsymbol{\lambda})$.
    \item \textbf{Weighted sum operator:}
    $L_f\mathbf{f}(\boldsymbol{x})=\sum w_{i_1,\ldots,i_m} f_{i_1,\ldots,i_m}(\boldsymbol x)$, 
    \quad
    $H_f\tilde{\mathbf{f}}(\boldsymbol{x},\boldsymbol{\lambda})=\sum w_{i_1,\ldots,i_m}\tilde f_{i_1,\ldots,i_m}(\boldsymbol x,\boldsymbol{\lambda})$, where $w_{i_1,\ldots,i_m}\sim U(0,1)$.
    \item \textbf{Exponential weighted operator:}
    $L_f\mathbf{f}(\boldsymbol{x})=\sum e^{p w_{i_1,\ldots,i_m}-1} e^{p f_{i_1,\ldots,i_m}(\boldsymbol x)}$, 
    \quad
    $H_f\tilde{\mathbf{f}}(\boldsymbol{x},\boldsymbol{\lambda})=\sum e^{p w_{i_1,\ldots,i_m}-1} e^{p\tilde f_{i_1,\ldots,i_m}(\boldsymbol x,\boldsymbol{\lambda})}$, where $p=2$ and $w_{i_1,\ldots,i_m}\sim U(0,1)$.
\end{itemize}
The results under Setting (1) in our numerical experiments are summarized below.
\begin{table}[!htbp]
    \centering
    \caption{The results of different operators in three synthetic settings.}
    \scalebox{0.85}{
    \begin{tabular}{ccccc}
        \toprule
        & Criterion & Sum operator & Weighted sum operator & Exponential weighted operator \\
        \midrule
        \multirow{3}{*}{\textbf{BO}}
        & MSE$_x$      & 0.0000 & 0.0000 & 0.0000 \\
        & MAE$_y$      & 0.0008 & 0.0083 & 0.0072 \\
        & Ins Regret   & 0.0001 & 0.0044 & 0.0533 \\
        \midrule
        \multirow{4}{*}{\textbf{CBBO}}
        & MSE$_x$      & 0.0023 & 0.0037 & 0.0025 \\
        & MAE$_y$      & 0.0172 & 0.0329 & 0.0548 \\
        & Acc          & 1      & 1      & 1      \\
        & Ins Regret   & 0.1964 & 0.0107 & 0.4972 \\
        \bottomrule
    \end{tabular}}
\end{table}
It can be seen that our method is robust across different choices of $L_f$ and $H_f$, and both TOBO and TOCBBO consistently achieve strong optimization performance.

\section{Real-world Application: Semiconductor manufacturing process}
A motivating example arises from semiconductor manufacturing, where each wafer consists of numerous dies (chips) arranged in a two-dimensional grid. During the Chip Probing (CP) phase, a critical stage for functional quality control, each die is evaluated based on multiple quality variables such as voltage, current, leakage, and power consumption. These variables are spatially correlated across neighboring dies due to physical effects such as process variation and mechanical stress, forming a naturally structured tensor output. To ensure high yield and reliability, manufacturers aim to adjust process control parameters (inputs) so that all quality variables across the wafer remain within target specifications. This leads to an optimization problem where the output is a three-mode tensor: the first two modes index the die positions on the wafer, and the third mode captures multiple quality variables. Such a scenario cannot be effectively modeled by scalar- or vector-output BO approaches, as they would lose essential structural information.

In practice, each wafer may contain hundreds of dies. However, resource and time constraints often make it infeasible to measure all quality variables across all die positions on every wafer. Manufacturers instead selectively measure a subset of output entries, such as centrally located dies, those more prone to failure, or historically most informative regions. Similarly, only a subset of quality variables may be measured if certain tests are time-consuming or costly. This results in a partially observed tensor, where only part of the full output is available at each iteration.

We incorporate this example into the revised paper and conduct a corresponding case study. The input is $\boldsymbol{x} = (x_1, x_2, x_3)$ representing process parameters, and the output $\mathbf f(\boldsymbol{x}) \in \mathbb{R}^{5 \times 5 \times 3}$ denotes die-wise quality variables on the wafer. A black-box semiconductor simulator is employed as the true system. We generate $5$ observations as the initial design and then sequentially select $20$ queried points based on different BO methods. Tables \ref{tab:tobo-comparison} and \ref{tab:tocbbo-comparison} compare the performance of TOBO and TOCBBO against several baselines. Our methods significantly outperform the alternatives in terms of input accuracy (MAE$_x$), regret, and final objective value, demonstrating their superiority.
\begin{table}[H]
    \centering
    \scalebox{0.85}{
    \begin{tabular}{lcccc}
        \hline
        & TOBO & sMTGP-UCB & MLGP-UCB & MVGP-UCB \\
        \hline
        MAE$_x$ & \textbf{0.0651 (0.0016)} & 0.1997 (0.0020) & 0.1579 (0.0059) & 0.2669 (0.0046) \\
        Regret  & \textbf{0.1702 (0.0007)} & 0.2400 (0.0014) & 0.1956 (0.0088) & 0.3818 (0.0029) \\
        Objective & \textbf{0.8298 (0.0009)} & 0.7600 (0.0031) & 0.8044 (0.0079) & 0.6182 (0.0033) \\
        \hline
    \end{tabular}}
    \caption{Performance comparison of TOBO with baseline methods in terms of MAE$_x$, regret, and objective (mean and standard deviation).}
    \label{tab:tobo-comparison}
\end{table}

\begin{table}[H]
    \centering
    \scalebox{0.85}{
    \begin{tabular}{lcccc}
    \hline
        & TOCBBO & sMTGP-UCB & MLGP-UCB & MVGP-UCB \\
        \hline
        MAE$_x$ & \textbf{0.1453 (0.0466)} & 0.4313 (0.0592) & 0.5702 (0.0747) & 0.6758 (0.0844) \\
        Regret  & \textbf{0.1431 (0.0388)} & 0.2483 (0.0522) & 0.4338 (0.0918) & 0.3461 (0.0709) \\
        Objective & \textbf{0.6085 (0.0436)} & 0.5290 (0.0500) & 0.4282 (0.1147) & 0.4794 (0.0912) \\
        \hline
    \end{tabular}}
    \caption{Performance comparison of TOCBBO with baseline methods in terms of MAE$_x$, regret, and objective (mean and standard deviation).}
    \label{tab:tocbbo-comparison}
\end{table}
These results show that the proposed methods can effectively optimize under partially observed tensor data, highlighting their practical relevance for semiconductor manufacturing.
\end{document}

%% file: math_commands.tex
\usepackage{amsmath,amsfonts,bm}

\def\eqref#1{equation~\ref{#1}}

\def\1{\bm{1}}

\DeclareMathAlphabet{\mathsfit}{\encodingdefault}{\sfdefault}{m}{sl}
\SetMathAlphabet{\mathsfit}{bold}{\encodingdefault}{\sfdefault}{bx}{n}



%% file: References.bib
@article{frazier2018tutorial,
  title={A tutorial on Bayesian optimization},
  author={Frazier, Peter I},
  journal={arXiv preprint arXiv:1807.02811},
  year={2018}
}

@article{wang2023recent,
  title={Recent advances in Bayesian optimization},
  author={Wang, Xilu and Jin, Yaochu and Schmitt, Sebastian and Olhofer, Markus},
  journal={ACM Computing Surveys},
  volume={55},
  number={13s},
  pages={1--36},
  year={2023},
  publisher={ACM New York, NY}
}

@article{song2022monte,
  title={Monte carlo tree search based variable selection for high frazier2018tutorialdimensional bayesian optimization},
  author={Song, Lei and Xue, Ke and Huang, Xiaobin and Qian, Chao},
  journal={Advances in Neural Information Processing Systems},
  volume={35},
  pages={28488--28501},
  year={2022}
}

@inproceedings{chowdhury2021no,
  title={No-regret algorithms for multi-task bayesian optimization},
  author={Chowdhury, Sayak Ray and Gopalan, Aditya},
  booktitle={International Conference on Artificial Intelligence and Statistics},
  pages={1873--1881},
  year={2021},
  organization={PMLR}
}

@inproceedings{accabi2018gaussian,
  title={When gaussian processes meet combinatorial bandits: Gcb},
  author={Accabi, Guglielmo Maria and Trovo, Francesco and Nuara, Alessandro and Gatti, Nicola and Restelli, Marcello},
  booktitle={14th European Workshop on Reinforcement Learning},
  pages={1--11},
  year={2018}
}

@article{daulton2022bayesian,
  title={Bayesian optimization over discrete and mixed spaces via probabilistic reparameterization},
  author={Daulton, Samuel and Wan, Xingchen and Eriksson, David and Balandat, Maximilian and Osborne, Michael A and Bakshy, Eytan},
  journal={Advances in Neural Information Processing Systems},
  volume={35},
  pages={12760--12774},
  year={2022}
}

@article{lock2018tensor,
  title={Tensor-on-tensor regression},
  author={Lock, Eric F},
  journal={Journal of Computational and Graphical Statistics},
  volume={27},
  number={3},
  pages={638--647},
  year={2018},
  publisher={Taylor \& Francis}
}

@article{kia2018scalable,
  title={Scalable multi-task Gaussian process tensor regression for normative modeling of structured variation in neuroimaging data},
  author={Kia, Seyed Mostafa and Beckmann, Christian F and Marquand, Andre F},
  journal={arXiv preprint arXiv:1808.00036},
  year={2018}
}

@article{hristopulos2023non,
  title={Non-separable covariance kernels for spatiotemporal Gaussian processes based on a hybrid spectral method and the harmonic oscillator},
  author={Hristopulos, Dionissios T},
  journal={IEEE Transactions on Information Theory},
  volume={70},
  number={2},
  pages={1268--1283},
  year={2023},
  publisher={IEEE}
}

@article{fricker2013multivariate,
  title={Multivariate Gaussian process emulators with nonseparable covariance structures},
  author={Fricker, Thomas E and Oakley, Jeremy E and Urban, Nathan M},
  journal={Technometrics},
  volume={55},
  number={1},
  pages={47--56},
  year={2013},
  publisher={Taylor \& Francis}
}

@article{li2016pairwise,
  title={Pairwise meta-modeling of multivariate output computer models using nonseparable covariance function},
  author={Li, Yongxiang and Zhou, Qiang},
  journal={Technometrics},
  volume={58},
  number={4},
  pages={483--494},
  year={2016},
  publisher={Taylor \& Francis}
}

@inproceedings{yu2018tensor,
  title={Tensor regression meets gaussian processes},
  author={Yu, Rose and Li, Guangyu and Liu, Yan},
  booktitle={International conference on artificial intelligence and statistics},
  pages={482--490},
  year={2018},
  organization={PMLR}
}

@article{shields2021bayesian,
  title={Bayesian reaction optimization as a tool for chemical synthesis},
  author={Shields, Benjamin J and Stevens, Jason and Li, Jun and Parasram, Marvin and Damani, Farhan and Alvarado, Jesus I Martinez and Janey, Jacob M and Adams, Ryan P and Doyle, Abigail G},
  journal={Nature},
  volume={590},
  number={7844},
  pages={89--96},
  year={2021},
  publisher={Nature Publishing Group UK London}
}

@inproceedings{uhrenholt2019efficient,
  title={Efficient Bayesian optimization for target vector estimation},
  author={Uhrenholt, Anders Kirk and Jensen, Bj{\o}ern Sand},
  booktitle={The 22nd International Conference on Artificial Intelligence and Statistics},
  pages={2661--2670},
  year={2019},
  organization={PMLR}
}

@inproceedings{nguyen2020bayesian,
  title={Bayesian Optimization for Categorical and Category-Specific Continuous Inputs},
  author={Nguyen, Dang and Gupta, Sunil and Rana, Santu and Shilton, Alistair and Venkatesh, Svetha},
  booktitle={Proceedings of the AAAI Conference on Artificial Intelligence},
  volume={34},
  pages={5256--5263},
  year={2020}
}

@inproceedings{ru2020bayesian,
  title={Bayesian optimisation over multiple continuous and categorical inputs},
  author={Ru, Binxin and Alvi, Ahsan and Nguyen, Vu and Osborne, Michael A and Roberts, Stephen},
  booktitle={International Conference on Machine Learning},
  pages={8276--8285},
  year={2020},
  organization={PMLR}
}

@article{huang2022mixed,
  title={Mixed-input Bayesian optimization method for structural damage diagnosis},
  author={Huang, Congfang and Lee, Jaesung and Zhang, Yang and Zhou, Shiyu and Tang, Jiong},
  journal={IEEE Transactions on Reliability},
  volume={72},
  number={2},
  pages={678--691},
  year={2022},
  publisher={IEEE}
}

@article{wang2020harnessing,
  title={Harnessing a novel machine-learning-assisted evolutionary algorithm to co-optimize three characteristics of an electrospun oil sorbent},
  author={Wang, Boqian and Cai, Jiacheng and Liu, Chuangui and Yang, Jian and Ding, Xianting},
  journal={ACS Applied Materials \& Interfaces},
  volume={12},
  number={38},
  pages={42842--42849},
  year={2020},
  publisher={ACS Publications}
}

@article{zhai2023robust,
  title={Robust optimization of 3D printing process parameters considering process stability and production efficiency},
  author={Zhai, Cuihong and Wang, Jianjun and Tu, Yiliu Paul and Chang, Gang and Ren, Xiaolei and Ding, Chunfeng},
  journal={Additive Manufacturing},
  volume={71},
  pages={103588},
  year={2023},
  publisher={Elsevier}
}

@article{tu2022joint,
  title={Joint entropy search for multi-objective bayesian optimization},
  author={Tu, Ben and Gandy, Axel and Kantas, Nikolas and Shafei, Behrang},
  journal={Advances in Neural Information Processing Systems},
  volume={35},
  pages={9922--9938},
  year={2022}
}

@article{maddox2021bayesian,
  title={Bayesian optimization with high-dimensional outputs},
  author={Maddox, Wesley J and Balandat, Maximilian and Wilson, Andrew G and Bakshy, Eytan},
  journal={Advances in neural information processing systems},
  volume={34},
  pages={19274--19287},
  year={2021}
}

@article{daulton2020differentiable,
  title={Differentiable expected hypervolume improvement for parallel multi-objective Bayesian optimization},
  author={Daulton, Samuel and Balandat, Maximilian and Bakshy, Eytan},
  journal={Advances in Neural Information Processing Systems},
  volume={33},
  pages={9851--9864},
  year={2020}
}

@article{srinivas2009gaussian,
  title={Gaussian process optimization in the bandit setting: No regret and experimental design},
  author={Srinivas, Niranjan and Krause, Andreas and Kakade, Sham M and Seeger, Matthias},
  journal={arXiv preprint arXiv:0912.3995},
  year={2009}
}

@article{chen2020multivariate,
  title={Multivariate Gaussian and Student-t process regression for multi-output prediction},
  author={Chen, Zexun and Wang, Bo and Gorban, Alexander N},
  journal={Neural Computing and Applications},
  volume={32},
  pages={3005--3028},
  year={2020},
  publisher={Springer}
}

@article{abed2022contemporary,
  title={A contemporary and comprehensive survey on streaming tensor decomposition},
  author={Abed-Meraim, Karim and Trung, Nguyen Linh and Hafiane, Adel and others},
  journal={IEEE Transactions on Knowledge and Data Engineering},
  volume={35},
  number={11},
  pages={10897--10921},
  year={2022},
  publisher={IEEE}
}

@article{auer2002nonstochastic,
  title={The nonstochastic multiarmed bandit problem},
  author={Auer, Peter and Cesa-Bianchi, Nicolo and Freund, Yoav and Schapire, Robert E},
  journal={SIAM journal on computing},
  volume={32},
  number={1},
  pages={48--77},
  year={2002},
  publisher={SIAM}
}

@article{snoek2012practical,
  title={Practical bayesian optimization of machine learning algorithms},
  author={Snoek, Jasper and Larochelle, Hugo and Adams, Ryan P},
  journal={Advances in neural information processing systems},
  volume={25},
  year={2012}
}

@article{wang2022nested,
  title={Nested Bayesian optimization for computer experiments},
  author={Wang, Yan and Wang, Meng and AlBahar, Areej and Yue, Xiaowei},
  journal={IEEE/ASME Transactions on Mechatronics},
  volume={28},
  number={1},
  pages={440--449},
  year={2022},
  publisher={IEEE}
}

@article{wu2017bayesian,
  title={Bayesian optimization with gradients},
  author={Wu, Jian and Poloczek, Matthias and Wilson, Andrew G and Frazier, Peter},
  journal={Advances in neural information processing systems},
  volume={30},
  year={2017}
}

@article{bull2011convergence,
  title={Convergence rates of efficient global optimization algorithms.},
  author={Bull, Adam D},
  journal={Journal of Machine Learning Research},
  volume={12},
  number={10},
  year={2011}
}

@inproceedings{dai2020multi,
  title={Multi-task Bayesian optimization via Gaussian process upper confidence bound},
  author={Dai, Sihui and Song, Jialin and Yue, Yisong},
  booktitle={ICML 2020 workshop on real world experiment design and active learning},
  volume={60},
  pages={61},
  year={2020}
}

@article{sessa2023multitask,
  title={Multitask learning with no regret: from improved confidence bounds to active learning},
  author={Sessa, Pier Giuseppe and Laforgue, Pierre and Cesa-Bianchi, Nicol{\`o} and Krause, Andreas},
  journal={Advances in Neural Information Processing Systems},
  volume={36},
  pages={6770--6781},
  year={2023}
}

@article{belakaria2019max,
  title={Max-value entropy search for multi-objective Bayesian optimization},
  author={Belakaria, Syrine and Deshwal, Aryan and Doppa, Janardhan Rao},
  journal={Advances in neural information processing systems},
  volume={32},
  year={2019}
}

@inproceedings{hernandez2016predictive,
  title={Predictive entropy search for multi-objective bayesian optimization},
  author={Hern{\'a}ndez-Lobato, Daniel and Hernandez-Lobato, Jose and Shah, Amar and Adams, Ryan},
  booktitle={International conference on machine learning},
  pages={1492--1501},
  year={2016},
  organization={PMLR}
}

@book{santner2003design,
  title={The design and analysis of computer experiments},
  author={Santner, Thomas J and Williams, Brian J and Notz, William I and Williams, Brain J},
  volume={1},
  year={2003},
  publisher={Springer}
}

@article{hernandez2014predictive,
  title={Predictive entropy search for efficient global optimization of black-box functions},
  author={Hern{\'a}ndez-Lobato, Jos{\'e} Miguel and Hoffman, Matthew W and Ghahramani, Zoubin},
  journal={Advances in neural information processing systems},
  volume={27},
  year={2014}
}

@article{zhang2025multi,
  title={Multi-fidelity Bayesian optimization with across-task transferable max-value entropy search},
  author={Zhang, Yunchuan and Park, Sangwoo and Simeone, Osvaldo},
  journal={IEEE Transactions on Signal Processing},
  year={2025},
  publisher={IEEE}
}

@inproceedings{daulton2022multi,
  title={Multi-objective bayesian optimization over high-dimensional search spaces},
  author={Daulton, Samuel and Eriksson, David and Balandat, Maximilian and Bakshy, Eytan},
  booktitle={Uncertainty in Artificial Intelligence},
  pages={507--517},
  year={2022},
  organization={PMLR}
}

@article{song2019tensor,
	title={Tensor completion algorithms in big data analytics},
	author={Song, Qingquan and Ge, Hancheng and Caverlee, James and Hu, Xia},
	journal={ACM Transactions on Knowledge Discovery from Data (TKDD)},
	volume={13},
	number={1},
	pages={1--48},
	year={2019},
	publisher={ACM New York, NY, USA}
}

@article{Bottou,
author = {Bottou, L\'eon},
year = {2010},
month = {01},
pages = {},
title = {Large-Scale Machine Learning with Stochastic Gradient Descent},
isbn = {978-3-7908-2603-6},
journal = {Proc. of COMPSTAT}
}

@inproceedings{zhe2019scalable,
  title={Scalable high-order gaussian process regression},
  author={Zhe, Shandian and Xing, Wei and Kirby, Robert M},
  booktitle={The 22nd International Conference on Artificial Intelligence and Statistics},
  pages={2611--2620},
  year={2019},
  organization={PMLR}
}

@inproceedings{belyaev2015gaussian,
  title={Gaussian process regression for structured data sets},
  author={Belyaev, Mikhail and Burnaev, Evgeny and Kapushev, Yermek},
  booktitle={International Symposium on Statistical Learning and Data Sciences},
  pages={106--115},
  year={2015},
  organization={Springer}
}

@inproceedings{bruinsma2020scalable,
  title={Scalable exact inference in multi-output Gaussian processes},
  author={Bruinsma, Wessel and Perim, Eric and Tebbutt, William and Hosking, Scott and Solin, Arno and Turner, Richard},
  booktitle={International Conference on Machine Learning},
  pages={1190--1201},
  year={2020},
  organization={PMLR}
}

@article{goulart2015tensor,
  title={Tensor CP decomposition with structured factor matrices: Algorithms and performance},
  author={Goulart, Jos{\'e} Henrique de M and Boizard, Maxime and Boyer, R{\'e}my and Favier, G{\'e}rard and Comon, Pierre},
  journal={IEEE Journal of Selected Topics in Signal Processing},
  volume={10},
  number={4},
  pages={757--769},
  year={2015},
  publisher={IEEE}
}

@article{oseledets2011tensor,
  title={Tensor-train decomposition},
  author={Oseledets, Ivan V},
  journal={SIAM Journal on Scientific Computing},
  volume={33},
  number={5},
  pages={2295--2317},
  year={2011},
  publisher={SIAM}
}

@article{de2008tensor,
  title={Tensor rank and the ill-posedness of the best low-rank approximation problem},
  author={De Silva, Vin and Lim, Lek-Heng},
  journal={SIAM Journal on Matrix Analysis and Applications},
  volume={30},
  number={3},
  pages={1084--1127},
  year={2008},
  publisher={SIAM}
}

@article{chi2012tensors,
  title={On tensors, sparsity, and nonnegative factorizations},
  author={Chi, Eric C and Kolda, Tamara G},
  journal={SIAM Journal on Matrix Analysis and Applications},
  volume={33},
  number={4},
  pages={1272--1299},
  year={2012},
  publisher={SIAM}
}

@article{snelson2005sparse,
  title={Sparse Gaussian processes using pseudo-inputs},
  author={Snelson, Edward and Ghahramani, Zoubin},
  journal={Advances in neural information processing systems},
  volume={18},
  year={2005}
}

@misc{papaspiliopoulos2020high,
  title={High-dimensional probability: An introduction with applications in data science},
  author={Papaspiliopoulos, Omiros},
  year={2020},
  publisher={Taylor \& Francis}
}

@article{williams2000using,
  title={Using the Nystr{\"o}m method to speed up kernel machines},
  author={Williams, Christopher and Seeger, Matthias},
  journal={Advances in neural information processing systems},
  volume={13},
  year={2000}
}

@inproceedings{chugh2020scalarizing,
  title={Scalarizing functions in Bayesian multiobjective optimization},
  author={Chugh, Tinkle},
  booktitle={2020 IEEE Congress on Evolutionary Computation (CEC)},
  pages={1--8},
  year={2020},
  organization={IEEE}
}
